\definecolor{cvprblue}{rgb}{0.21,0.49,0.74}
\newtheorem{statement}{Statement}
\newtheorem{theorem}{Theorem}[section]   
\newtheorem{lemma}[theorem]{Lemma}
\theoremstyle{definition}
\definecolor{bggray}{rgb}{0.97,0.97,0.97}
\lstdefinestyle{pythonstyle}{
    language=Python,
    backgroundcolor=\color{bggray},
    frame=lines,
    basicstyle=\ttfamily\small,
    numbers=left,
    numberstyle=\tiny,
    stepnumber=1,
    numbersep=8pt,
    breaklines=true,
    showstringspaces=false,
    keywordstyle=\color{blue},
    stringstyle=\color{green!50!black},
    commentstyle=\color{gray},
    tabsize=4
}
\title{Functional Mean Flow in Hilbert Space}
\author{
Zhiqi Li \qquad
Yuchen Sun \qquad
Greg Turk \qquad
Bo Zhu \\
Georgia Institute of Technology \\
{\tt\small zli3167@gatech.edu, yuchen.sun.eecs@gmail.com, turk@cc.gatech.edu, bo.zhu@gatech.edu}
}
\begin{document}

\maketitle

\begin{abstract}
We present Functional Mean Flow (FMF) as a one-step generative model defined in infinite-dimensional Hilbert space. FMF extends the one-step Mean Flow framework \cite{geng2023onestep} to functional domains by providing a theoretical formulation for Functional Flow Matching and a practical implementation for efficient training and sampling. We also introduce an $x_1$-prediction variant that improves stability over the original $u$-prediction form. The resulting framework is a practical one-step Flow Matching method applicable to a wide range of functional data generation tasks such as time series, images, PDEs, and 3D geometry.

\end{abstract}
\section{Introduction}
\label{sec:intro}

Functional generative models (e.g., \cite{dupont2022data, kerrigan2024functional, bond2024infty}) represent data in the form of continuous functions \cite{dupont2022data}, where the underlying generative process is modeled as a probability distribution defined over function spaces \cite{dupont2022generative}. Compared with standard generative models defined in discrete space, the main advantage of a functional model lies in its ability to subsample coordinates while maintaining a continuous functional representation, effectively decoupling memory and runtime cost from data resolution. This property enables training and sampling at arbitrary spatial or temporal resolutions. For example, \textit{Infty-Diff} \cite{bond2024infty} employs non-local integral operators to map between Hilbert space, achieving up to an $8\times$ subsampling rate without compromising quality.

\begin{figure}[b!]
  \centering
  \includegraphics[width=0.5\textwidth]{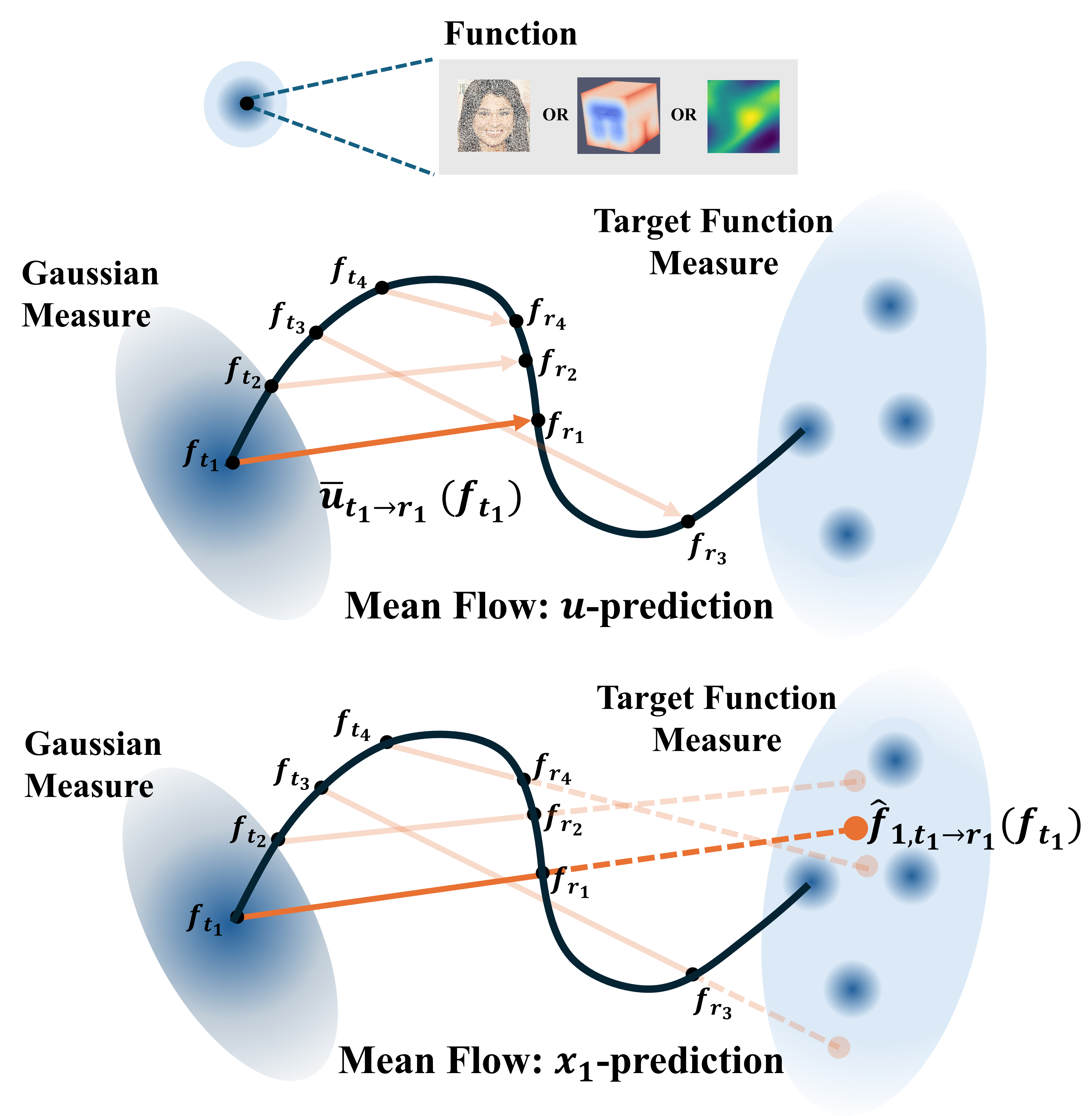}
  \caption{  Illustration of Functional Mean Flow. 
  The figure shows a 2D projection of the infinite-dimensional function space. 
  During generation, the flow transports a Gaussian measure to the target function measure. 
  The $u$-prediction FMF models the mean velocity $\bar u_{t\to r}(f_t)$ between any two points $f_t$ and $f_r$ along the flow trajectory, 
  while the $x_1$-prediction FMF estimates the expected position $\hat{f}_{1,t\to r}(f_t)$ reached by continuing the mean velocity $\bar u_{t\to r}(f_t)$ for the remaining distance $1-t$. 
  Both $u$- and $x_1$-prediction FMFs support one-step generation, formulated respectively as 
  $f_1 = f_0 + \bar u_{0\to 1}(f_0)$ and $f_1 = \hat{f}_{1,0\to 1}(f_0)$.}
  
  \label{fig:training}
\end{figure}

As many other generative models, such as Diffusion \cite{ho2020denoising, song2019generative, song2021score} or Flow Matching \cite{song2021denoising, liu2023flow, lipman2023flow}, the performance of functional generative models is also limited by the need for many sampling steps during inference. To address this bottleneck, recent work explores one-step or few-step methods that directly approximate the endpoint transport. Among them, \textit{Mean Flow} \cite{geng2025mean} provides a principled approach by predicting the time-averaged velocity instead of the instantaneous velocity used in standard Flow Matching. This design captures the overall transport in a single update, enabling efficient one-step sampling and achieving 50\%–70\% better FID than previous one-step models.

Extending one-step generation to Functional Flow Matching is fundamentally challenging in two aspects: (1) the infinite-dimensional Hilbert-space setting makes modeling highly non-trivial, as finite-dimensional intuitions no longer apply and the modeling confronts the inconsistency between marginal and conditional flows, making it infeasible to generalize the finite-dimensional mean-velocity formulation to infinite-dimensional functional spaces; (2) functional derivatives and operator-valued velocity fields introduce numerical instability, complicating optimization and adversely affecting convergence across different functional generation tasks.


To address these challenges, we derive a new transport formulation based on the Fréchet derivative of two-parameter flows, which establishes the Mean Flow formulation in infinite-dimensional spaces and resolves the theoretical inconsistency between conditional and marginal dynamics. In addition, we reformulate the learning objective as an equivalent conditional loss with a stop-gradient approximation and introduce an $x_1$-prediction variant that predicts the expected endpoint by extrapolating the mean velocity, instead of predicting the mean velocity itself. These developments together constitute our proposed framework, \textbf{Functional Mean Flow (FMF)}, which enables stable and efficient one-step functional generation across a wide range of tasks in infinite-dimensional spaces.

We summarize our contributions as follows:  
\begin{enumerate}  
\item  We derived the infinite-dimensional mean-velocity formulation, establishing a mathematically sound framework for one-step generation in Hilbert space. 
\item  We introduce, for the first time, the $x_1$-prediction variant of Mean Flow and show that it exhibits improved stability over the original $u$-prediction formulation on certain tasks, thereby broadening the applicability of the Mean Flow framework.
\item  We demonstrate the effectiveness of the proposed method across a range of functional tasks, including time series modeling, image generation, PDEs, and 3D shape generation.

\end{enumerate} 



\section{Functional Flow Matching}
Functional Flow Matching (FFM) \cite{kerrigan2024functional} extends classical Flow Matching from finite-dimensional Euclidean spaces to infinite-dimensional function spaces.
Let $(\mathcal{F}, \langle \cdot,\cdot \rangle_\mathcal{F})$ be a separable Hilbert space of functions with the Borel $\sigma$-algebra $\mathcal{B}(\mathcal{F})$, and let $\mu_0 = \mathcal{N}(m_0, C_0)$ be a reference Gaussian measure on $\mathcal{F}$ with mean $m_0 \in \mathcal{F}$ and covariance operator $C_0: \mathcal{F} \to \mathcal{F}$, FFM learns a time-dependent velocity field $u : [0,1] \times \mathcal{F} \to \mathcal{F}$ that transports $\mu_0$ to a target distribution $\mu_1 = \nu$ through a continuous path of measures $(\mu_t)_{t\in[0,1]}$ satisfying the weak continuity equation
\begin{equation}\label{eq:weak-form}
\begin{aligned}
&\int_0^1\!\int_{\mathcal F}
\!\big(\partial_t\psi(g,t)
   \!+\!\langle u_t(g),\nabla_g\psi(g,t)\rangle_\mathcal{F}\big)
   {\mathrm{d}}\mu_t(g){\mathrm{d}}t \!=\! 0,\\
&\mu_{t}|_{t=0}=\mu_0,\;\mu_{t}|_{t=1}=\mu_1,
\end{aligned}
\end{equation}
for all appropriate test functions $\psi : \mathcal{F}\times[0,1]\to\mathbb{R}$.

Sampling $f_0 \sim \mu_0$, one obtains a generated function by integrating
\begin{equation}\label{eq:ode}
\frac{{\mathrm{d}} f_t}{{\mathrm{d}} t} = u(t, f_t),
\quad f_t|_{t=0}=f_0,
\end{equation}
whose terminal state satisfies $f_1 \sim \nu$.  For velocity field $u_t$, the associated flow $\phi_t : \mathcal{F} \to \mathcal{F}$ are defined as maps satisfying $f_t = \phi_t(f_0)$ for all $f_0$ and $f_t$ in \autoref{eq:ode}.  The flow $\phi_t$ satisfies the functional differential equation
\begin{equation}\label{eq:path_velocity_definition}
\frac{\partial}{\partial_t} \phi_t = u_t\circ \phi_t, \qquad \phi_0 = \mathrm{Id}_{\mathcal{F}},
\end{equation}
 where $\mathrm{Id}_{\mathcal{F}}$ denotes the identity operator on $\mathcal{F}$.  The path of measures $(\mu_t)_{t\in[0,1]}$ can be generated by the pushforward of the flow $\mu_t = (\phi_t)_{\sharp}\mu_0$, thereby extending the continuous transport formulation to infinite-dimensional Hilbert spaces.

To make the training $\mathcal{L}(\theta) = \mathbb{E}_{t,g\sim\mu_t} \big[\|u_t(g) - u_t^\theta(g)\|^2_\mathcal{F}\big]$ tractable, where the reference marginal velocity field $u_t$ cannot be computed analytically, FFM introduces conditional velocity $u_t^f$ conditioned on the target function $f\sim \nu$ and corresponding conditional paths of measures $(\mu_t^f)_{t\in[0,1]}$ that interpolate between $\mu_0$ and a $f$-centered measure $\mu_1^f$.  Marginalizing these conditionals yields the global measure path and velocity
\begin{equation}\label{eq:ecpectional_conditional}
    \begin{aligned}
        \mu_t(A) &=\int_{\mathcal{F}}\mu_t^f(A) \mathrm{d}\nu(f),\\
        u_t(g) &=\int_{\mathcal{F}} u_t^f(g)\frac{\mathrm{d}\mu_t^f}{\mathrm{d}\mu_t}(g)\mathrm{d} \nu(f),        
    \end{aligned}
\end{equation}
for arbitrary $A\in \mathcal{B}(\mathcal{F})$ where $\frac{\mathrm{d}\mu_t^f}{\mathrm{d}\mu_t}$ is the Radon–Nikodym derivative.  In practice, the conditional paths $\mu_t^f$ are typically chosen to be Gaussian measure $\mu_t^f =\mathcal{N}(m_t^f, (\sigma_t^f)^2 C_0)$ with $m_t^f = tf$, $\sigma_t^f = 1 - (1-\sigma_{\min})t$ and a small positive number $\sigma_{\text{min}}$. The conditional velocity and its associated flow admits a closed form
\begin{equation}\label{eq:selection_conditional}
\begin{aligned}
\phi_t^f(f_0)&= \sigma_t^ff_0 + m_t^f =(1-(1-\sigma_{\text{min}})t)f_0+tf, \\
u_t^f(g) 
\!&=\! \frac{\dot{\sigma}_t^f}{\sigma_t^f}(g\!-\!m_t^f) \!+\! \dot{m}_t^f
\!=\! \frac{1\!-\!\sigma_{\min}}{1\!-\!(1\!-\!\sigma_{\min})t}(t f\!-\!g) \!+\! f.    
\end{aligned}
\end{equation}
Although the theory requires $\sigma_{\text{min}} > 0$, in practice setting $\sigma_{\text{min}} = 0$ causes no adverse effects \cite{bond2024infty}.

The model is then trained via the conditional loss
\begin{equation}\label{eq:conditional_loss_ffm}
\mathcal{L}_c(\theta)
= \mathbb{E}_{t,f,g\sim \mu_t^f}\!\left[\|u_t^f(g) - u_\theta(t,g)\|_\mathcal{F}^2\right],
\end{equation}
which can be proved equivalent to the marginal loss $\mathcal{L}(\theta)$ up to a constant.  For completeness, the corresponding theorems from \cite{kerrigan2024functional} on Functional Flow Matching are provided in the \textcolor{cvprblue}{Appendix~\ref{appendix:ffm_theorem}}.

\begin{figure*}[t]
  \includegraphics[width=1.0\textwidth]{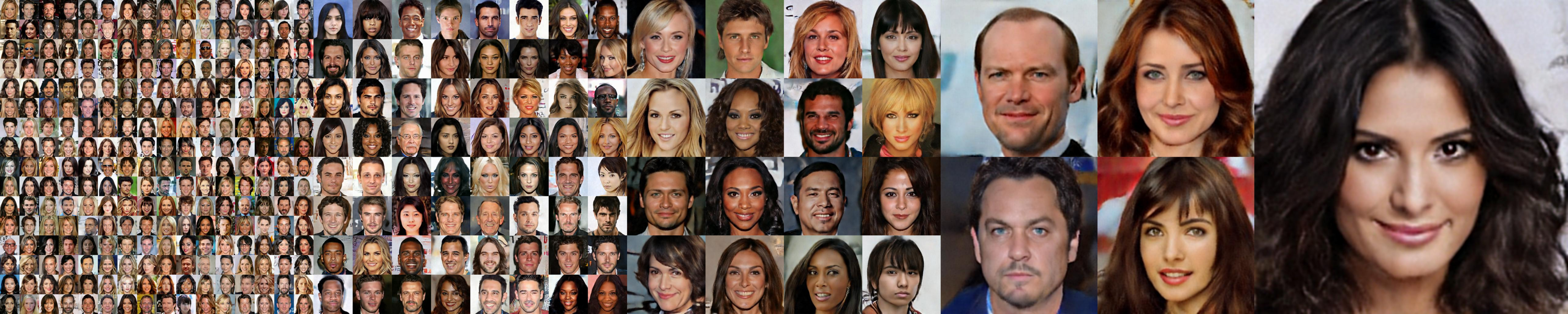}
  \caption{Representing data as functions enables the same model to synthesize images at arbitrary resolutions with different noise levels. The model is trained only on randomly sampled 1/4 subsets of pixels from 256×256 CelebA-HQ images and performs one-step generation. Left to right: 64×64, 128×128, 256×256, 512×512, and 1024×1024.}
  \label{fig:celeb_multi_res}
\end{figure*}

\section{Functional Mean Flow}\label{sec:method}\label{sec:functional_meanflow}
Similar to conventional Flow Matching, Functional Flow Matching also suffers from the drawback that inference requires many integration steps.  To address this limitation, we extend Mean Flow \cite{geng2025mean} to the infinite-dimensional function space for one-step generation.  In addition, We further propose an $\mathbf{x_1}$-prediction variant of Mean Flow, which predicts the intersection between the extrapolated mean velocity line and the terminal point at $t=1$, different from the original $\mathbf{u}$-prediction formulation of Mean Flow. This $x_1$-prediction variant exhibits improved stability on certain task as shown in \autoref{sec:experiment}.


\subsection{FMF with $u$-prediction}
We first define a two-parameter flow as $\phi_{t\to r} = \phi_r \circ \phi_t^{-1}$, where the inverse map $\phi_t^{-1}$ is guaranteed to exist by the uniqueness of the ODE solution of \autoref{eq:ode}.  Based on $\phi_{t\to r}$, the mean velocity $\bar{u}_{t\to r}: \mathcal{F} \to \mathcal{F}$ is defined as
\begin{equation}\label{eq:mean_velocity_definition}
    \begin{aligned}
        \bar{u}_{t\to r} = \frac{1}{r-t}(\phi_{t\to r} - \mathrm{Id}_{\mathcal{F}}).       
    \end{aligned}
\end{equation}

In Functional Mean Flow, our goal is to learn the target velocity $\bar u_{t\to r}$ through a loss $\mathcal{L}^M(\theta) = \mathbb{E}_{t,r,g\sim\mu_t} \big[\|\bar u_{t\to r}(g) - \bar u_{t\to r}^\theta(g)\|_\mathcal{F}^2\big]$.  However, since the reference mean velocity $\bar{u}_{t\to r}$ has no closed-form expression, similar to Functional Flow Matching, we aim to reformulate the training objective in terms of a conditional field.  The reformulation for Functional Flow Matching relies on the consistency between conditional and marginal velocity fields (\autoref{thm:ffm_thm1}).  In our formulation, however, this consistency breaks down, as $\bar{u}_{t\to r}$ and $\phi_{t\to r}$ do not admit a consistent corresponding conditional field representation.
\begin{statement}[Mismatch Between Flow and Marginals of Conditional Flow]
\label{thm:statement}
In general, the marginal flow $\phi^{(1)}_{t\to r}(g) =\int_{\mathcal{F}} \phi^f_{t\to r}(g)\frac{\mathrm{d}\mu_t^f}{\mathrm{d}\mu_t}(g)\mathrm{d}\nu(f)$ obtained by taking the expectation over the conditional two-parameter flows $\phi_{t\to r}^f = \phi_r^f \circ (\phi_t^f)^{-1}$ is not equivalent to the two-parameter flow $\phi^{(2)}_{t\to r} = \phi_r \circ (\phi_t)^{-1}$. Here, the superscripts $^{(1)}$ and $^{(2)}$ denote two different ways of computing the marginal two-parameter flow. (see \textcolor{cvprblue}{Appendix~\ref{appendix:statement}} for proof.)
\end{statement}

To address this issue, we first derive an equivalent reformulation of the mean velocity $\bar{u}_{t\to r}$, which relies on the following theorem:

\begin{theorem}[Initial-Time Derivative of Two-Parameter Flow]
\label{thm:flow-diff}
Assume that the dataset measure $\nu$ satisfies $\int_{\mathcal F}\|f\|_\mathcal{F}^2  \mathrm{d}\nu(f)<\infty$, and the conditions of Functional Flow Matching \cite{kerrigan2024functional} hold.  With the conditional flow and conditional velocity chosen in \autoref{eq:selection_conditional}, the corresponding two-parameter flow $\phi_{t\to r}(g)$ is differentiable with respect to $t$ and Fréchet differentiable with respect to $g$ and satisfies, for any $0 <t < r<1$
\begin{equation}\label{eq:flow-diff}
    \begin{aligned}
        &\frac{\partial }{\partial t} \phi_{t\to r}(g) = -D \phi_{t\to r}(g)[u_t(g)],
    \end{aligned}
\end{equation}
where $D\phi_{t\to r}(g):\mathcal{F}\to \mathcal{F}$ is the Fréchet derivative of $\phi_{t\to r}$ at $g$.  This theorem follows from \textcolor{cvprblue}{Lemmas}~\ref{thm:differentiability_of_flow},\ref{thm:differentiability_of_flow2} and \ref{thm:differentiability_of_flow3} in \textcolor{cvprblue}{Appendix~\ref{sec:flow-diff_supporting}}. (see \textcolor{cvprblue}{Appendix~\ref{sec:flow-diff}} for proof.)
\end{theorem}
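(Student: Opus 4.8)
The plan is to exploit the compositional structure $\phi_{t\to r} = \phi_r \circ \phi_t^{-1}$ and differentiate through it, reducing the statement to the flow equation \eqref{eq:path_velocity_definition} together with the chain rule for Fréchet derivatives. Writing $\psi_t := \phi_t^{-1}$, which exists by uniqueness of the ODE solution of \eqref{eq:ode}, I would express $\phi_{t\to r}(g) = \phi_r(\psi_t(g))$ and differentiate in $t$ with $r$ held fixed. Since only $\psi_t$ carries $t$-dependence, the chain rule gives $\partial_t \phi_{t\to r}(g) = D\phi_r(\psi_t(g))[\partial_t \psi_t(g)]$, so the only nontrivial ingredient is $\partial_t\psi_t(g)$.

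To compute $\partial_t \psi_t(g)$, I would differentiate the identity $\phi_t(\psi_t(g)) = g$ in $t$. The left side depends on $t$ both explicitly (through $\phi_t$) and implicitly (through $\psi_t$), so the total derivative yields $(\partial_t \phi_t)(\psi_t(g)) + D\phi_t(\psi_t(g))[\partial_t\psi_t(g)] = 0$. Substituting the flow equation $\partial_t\phi_t = u_t\circ\phi_t$ and using $\phi_t(\psi_t(g)) = g$, the explicit term collapses to $u_t(g)$, so that $\partial_t\psi_t(g) = -[D\phi_t(\psi_t(g))]^{-1}[u_t(g)]$, provided $D\phi_t(\psi_t(g))$ is a bounded invertible operator.

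It then remains to recognize the composition $D\phi_r(\psi_t(g))\circ [D\phi_t(\psi_t(g))]^{-1}$ as the Fréchet derivative $D\phi_{t\to r}(g)$. This is immediate from the chain rule applied to $\phi_{t\to r} = \phi_r\circ\psi_t$: differentiating $\phi_t\circ\psi_t = \mathrm{Id}_{\mathcal{F}}$ in $g$ gives $D\psi_t(g) = [D\phi_t(\psi_t(g))]^{-1}$, whence $D\phi_{t\to r}(g) = D\phi_r(\psi_t(g))\circ D\psi_t(g) = D\phi_r(\psi_t(g))\circ[D\phi_t(\psi_t(g))]^{-1}$. Combining the three displays yields $\partial_t\phi_{t\to r}(g) = -D\phi_{t\to r}(g)[u_t(g)]$, as claimed.

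The main obstacle is not the algebra, which mirrors the finite-dimensional computation, but justifying every step in the infinite-dimensional Hilbert setting: that $\phi_t$ is Fréchet differentiable in $g$ with $D\phi_t(g)$ a bounded invertible operator, that $\psi_t$ inherits this differentiability and the operator inverse is well-defined, and that $\partial_t$ may legitimately be interchanged with the composition and with the Fréchet derivative. These regularity facts are precisely what Lemmas~\ref{thm:differentiability_of_flow}, \ref{thm:differentiability_of_flow2}, and \ref{thm:differentiability_of_flow3} are designed to supply, leveraging the closed-form conditional flow in \eqref{eq:selection_conditional} and the second-moment bound $\int_{\mathcal F}\|f\|_\mathcal{F}^2\,\mathrm{d}\nu(f)<\infty$; once these are established, the identity follows directly from the chain-rule manipulation above.
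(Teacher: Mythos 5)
Your factorization $\phi_{t\to r}=\phi_r\circ\psi_t$ with $\psi_t=\phi_t^{-1}$ is the classical finite-dimensional argument, but it has a genuine gap at its central step: you differentiate the identity $\phi_t(\psi_t(g))=g$ in $t$ by the chain rule, which presupposes that $t\mapsto\psi_t(g)$ is differentiable. That differentiability is nowhere established, and the cited Lemmas~\ref{thm:differentiability_of_flow}--\ref{thm:differentiability_of_flow3} do not supply it: they give Fr\'echet differentiability of $\phi_{t\to r}$ in $g$, continuity in $(t,r,g)$, and a linear integral/differential equation for $J_r(g)=D\phi_{t\to r}(g)$ in the \emph{terminal} variable $r$ — nothing about differentiability in the \emph{initial} variable $t$. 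Worse, the missing fact is circular: $\psi_t(g)=\phi_{t\to 0}(g)$ is itself a two-parameter flow evaluated at a varying initial time, so $\partial_t\psi_t(g)$ is exactly an instance of the quantity Theorem~\ref{thm:flow-diff} is trying to compute (the case $r=0$, run backward). You cannot invoke the theorem's conclusion for $\psi_t$ in order to prove the theorem. An implicit-function-theorem repair is also blocked here because $u_t$ is only measurable/integrable in $t$ (assumptions (A1), (B3)), so $(t,x)\mapsto\phi_t(x)$ is merely absolutely continuous in $t$, not jointly $C^1$; relatedly, the identity can only be expected to hold at Lebesgue points of $t\mapsto u_t$, a point your argument never confronts. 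Your step requiring $D\phi_t(\psi_t(g))$ to be boundedly invertible is likewise extra structure the lemmas do not provide (though it could be obtained from the backward flow).

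The paper's proof avoids both the inverse map and the circularity. It uses the semigroup identity $\phi_{t-h\to r}=\phi_{t\to r}\circ\phi_{t-h\to t}$ to convert the time increment into a \emph{spatial} perturbation of the argument of $\phi_{t\to r}$, then applies the Fr\'echet differentiability of $\phi_{t\to r}$ at $g$ (Lemma~\ref{thm:differentiability_of_flow}) to the one-sided difference quotient. The remaining work is to show $(\phi_{t-h\to t}(g)-g)/h\to u_t(g)$, which is done from the integral representation of the flow via Gr\"onwall's inequality and the Lebesgue differentiation theorem — precisely where the a.e.-in-$t$ character of the result enters. If you want to salvage your route, you would have to prove the $t$-differentiability of $\psi_t(g)$ by this same difference-quotient machinery, at which point your proof collapses into the paper's argument applied to the backward flow, with the added burden of operator invertibility; the direct approach is strictly more economical.
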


With \autoref{thm:flow-diff} and the definition of $\bar{u}_{t\to r}$ in \autoref{eq:mean_velocity_definition}, the mean velocity $\bar{u}_{t\to r}$ can be expressed as 
\begin{equation}\label{eq:derivation_u_prediction}
    \begin{aligned}
        \bar u_{t\to r}(g) &\overset{\textcircled{1}}{=}(r\!-\!t)\frac{\partial}{\partial t}[\frac{1}{r\!-\!t} (\phi_{t\to r} \!-\! \mathrm{Id}_{\mathcal{F}})(g)] \!-\! \frac{\partial}{\partial t}\phi_{t\to r}(g)\\
        &\overset{\textcircled{2}}{=} (r\!-\!t)\frac{\partial}{\partial t}\bar{u}_{t\to r}(g) \!+\! D \phi_{t\to r}(g)[u_t(g)]\\
        &\overset{\textcircled{3}}{=} (r\!-\!t)\big(\frac{\partial}{\partial t}\bar{u}_{t\to r}(g) \!+\! D \bar u_{t\to r}(g)[u_t(g)]\big) \!+\!  u_t(g),
    \end{aligned}
\end{equation}
where $\textcircled{1}$ follows from the product rule, $\textcircled{2}$ is obtained by substituting \autoref{eq:flow-diff}, and $\textcircled{3}$ is obtained by substituting \autoref{eq:mean_velocity_definition}.

In the above expression of $\bar{u}_{t\to r}$, the right-hand side still depends on $\bar{u}_{t\to r}$ itself.  Following \cite{geng2025mean,song2023consistency}, we estimate this term using the current prediction of the model with a stop gradient operation and the velocity field $u_t$ can be written as the marginal form of the conditional velocity $u^f_{t}$, and thus we define the conditional loss as
\begin{equation}\label{eq:meanflow_u_loss}
    \begin{aligned}
        \mathcal{L}_{c}^M(\theta) &= E_{t,r,g \sim \mu_t^f,f\sim \mu_1}\big[||(r-t)\text{sg}(\frac{\partial}{\partial t}\bar u_{t\to r}(g)\\
        &+D\bar u_{t\to r}(g)[u^f_t(g)])+u^f_t(g)-\bar u^\theta_{t\to r}(g)||_\mathcal{F}^2\big],
    \end{aligned}
\end{equation}
where $\text{sg}$ means the stop gradient operation and $\bar u_{t\to r}(g)$ in $\text{sg}$ is approximated by $\bar u_{t\to r}^\theta(g)$.  The following theorem establishes that the conditional loss $\mathcal{L}_{c}^M(\theta)$ is equivalent to the marginal loss $\mathcal{L}^M(\theta)$ up to a constant and can therefore be used to train the Functional Mean Flow model.
\begin{theorem}[Equivalence of Mean Flow Conditional and Marginal Losses]
\label{thm:loss-equivalence}
Under the assumptions of \autoref{thm:flow-diff}, we have $\mathcal{L}_c^M(\theta) = \mathcal{L}^M(\theta) + C$ where $C$ is independent of the model parameters~$\theta$.  (see \textcolor{cvprblue}{Appendix~\ref{sec:proof_loss_equivalence_u}} for proof.)
\end{theorem}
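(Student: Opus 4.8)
The plan is to adapt the classical ``complete-the-square'' argument that proves the equivalence of conditional and marginal Flow Matching losses (\autoref{thm:ffm_thm1}), now applied to the mean-velocity target rather than the instantaneous velocity. First I would make precise the sampling identity hidden in the two expectations. In $\mathcal{L}_c^M$ the pair $(f,g)$ is drawn by first sampling $f\sim\mu_1=\nu$ and then $g\sim\mu_t^f$; by the measure-marginalization identity in the first line of \autoref{eq:ecpectional_conditional} this is the same as drawing $g\sim\mu_t$ and then $f$ from the posterior with Radon--Nikodym density $\frac{\mathrm{d}\mu_t^f}{\mathrm{d}\mu_t}(g)$ against $\nu$. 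Writing $\mathbb{E}_{f\mid g}$ for this posterior expectation, we get $\mathbb{E}_{f,g\sim\mu_t^f}[\cdot]=\mathbb{E}_{g\sim\mu_t}\mathbb{E}_{f\mid g}[\cdot]$, which places both losses on the common marginal $\mu_t$ and is the bridge between the conditional and marginal formulations.

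Next I would establish the key linear-averaging step. Treating the stop-gradient content as a fixed (detached) field, the conditional target is $T^f(g)=u_t^f(g)+(r-t)\,\mathrm{sg}\big(\tfrac{\partial}{\partial t}\bar u_{t\to r}(g)+D\bar u_{t\to r}(g)[u_t^f(g)]\big)$, and I would show that its posterior mean recovers the marginal mean velocity, i.e.\ $\mathbb{E}_{f\mid g}[T^f(g)]=\bar u_{t\to r}(g)$. This rests on two facts: (i) $\mathbb{E}_{f\mid g}[u_t^f(g)]=u_t(g)$, which is exactly the velocity marginalization in the second line of \autoref{eq:ecpectional_conditional}; and (ii) since $D\bar u_{t\to r}(g)[\cdot]$ is a bounded linear operator in its bracket argument, the posterior expectation passes inside it, giving $\mathbb{E}_{f\mid g}\big[D\bar u_{t\to r}(g)[u_t^f(g)]\big]=D\bar u_{t\to r}(g)[u_t(g)]$. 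Substituting both and invoking the identity \autoref{eq:derivation_u_prediction} (guaranteed by \autoref{thm:flow-diff}) collapses the averaged conditional target to the marginal mean velocity.

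With the measures aligned and the posterior mean identified, I would expand each squared Hilbert norm as $\|T-\bar u^\theta_{t\to r}(g)\|_\mathcal{F}^2=\|T\|_\mathcal{F}^2-2\langle T,\bar u^\theta_{t\to r}(g)\rangle_\mathcal{F}+\|\bar u^\theta_{t\to r}(g)\|_\mathcal{F}^2$ in both losses. The pure-model term $\|\bar u^\theta_{t\to r}(g)\|_\mathcal{F}^2$ is $f$-independent and is integrated against the same marginal $\mu_t$, so it matches exactly. The cross term is linear in $T$, and because $\bar u^\theta_{t\to r}(g)$ is fixed given $g$, applying $\mathbb{E}_{f\mid g}$ together with the previous step turns the conditional cross term $\langle T^f(g),\bar u^\theta_{t\to r}(g)\rangle_\mathcal{F}$ into the marginal one $\langle \bar u_{t\to r}(g),\bar u^\theta_{t\to r}(g)\rangle_\mathcal{F}$. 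The remaining term $\|T\|_\mathcal{F}^2$ carries no $\theta$-dependence once the stop-gradient detaches its content, so the difference of the two losses collapses to the $\theta$-independent constant $C=\mathbb{E}\big[\|T^f(g)\|_\mathcal{F}^2-\|\bar u_{t\to r}(g)\|_\mathcal{F}^2\big]$, which is finite by the second-moment assumption $\int_\mathcal{F}\|f\|_\mathcal{F}^2\,\mathrm{d}\nu(f)<\infty$. This yields $\mathcal{L}_c^M(\theta)=\mathcal{L}^M(\theta)+C$.

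The main obstacle is the careful handling of the stop-gradient, which is where this proof differs from the vanilla Flow Matching equivalence. The argument only goes through if the quantity inside $\mathrm{sg}$ is genuinely treated as a $\theta$-independent field; otherwise $\|T\|_\mathcal{F}^2$ would contribute to the $\theta$-dependence and the difference would fail to be constant. I would therefore state explicitly that stop-gradient freezes $\bar u_{t\to r}$ (via its model surrogate) as a fixed function, both for its value and for the posterior-averaging computation, and then verify that this freezing is compatible with pulling $\mathbb{E}_{f\mid g}$ through the Fréchet derivative. That interchange is the one place where the infinite-dimensional setting, rather than a finite-dimensional one, must be checked: it requires the boundedness and linearity of $D\bar u_{t\to r}(g)$ in its bracket argument together with integrability of $u_t^f(g)$ under the posterior $f\mid g$, both of which are supplied by the regularity established in \autoref{thm:flow-diff} and the finite-second-moment hypothesis.
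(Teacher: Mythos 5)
Your proposal is correct and takes essentially the same route as the paper's proof: both expand the squared Hilbert norm into target, cross, and model terms, match the model term via the marginalization identity in \autoref{eq:ecpectional_conditional}, match the cross term by substituting \autoref{eq:derivation_u_prediction} and pulling the $f$-integral through the (linear, bounded) Fréchet derivative and the inner product via Fubini--Tonelli, and observe that the target term is $\theta$-independent once the stop-gradient content is treated as fixed. Your posterior-expectation $\mathbb{E}_{f\mid g}$ formulation is simply a repackaging of the paper's Radon--Nikodym/Fubini manipulation, so the two arguments coincide in substance.
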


\subsection{FMF with $x_1$-prediction}
\begin{figure}[t]
  \centering
  \begin{subfigure}[t]{0.23\textwidth}
    \centering
    \includegraphics[width=\linewidth]{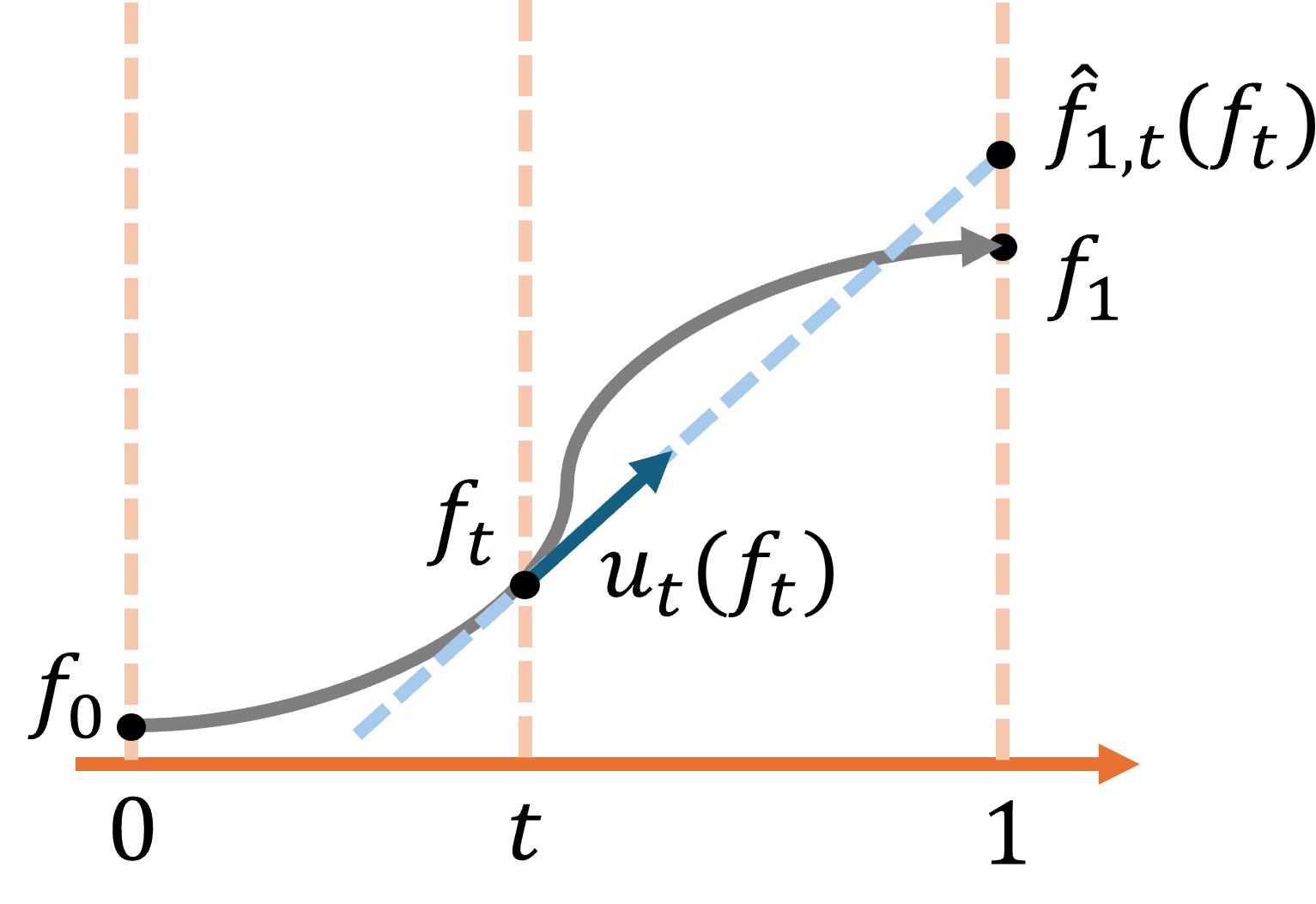}
    \caption{Functional Flow Matching.}
    \label{fig:sub1}
  \end{subfigure}
  \hfill
  \begin{subfigure}[t]{0.23\textwidth}
    \centering
    \includegraphics[width=\linewidth]{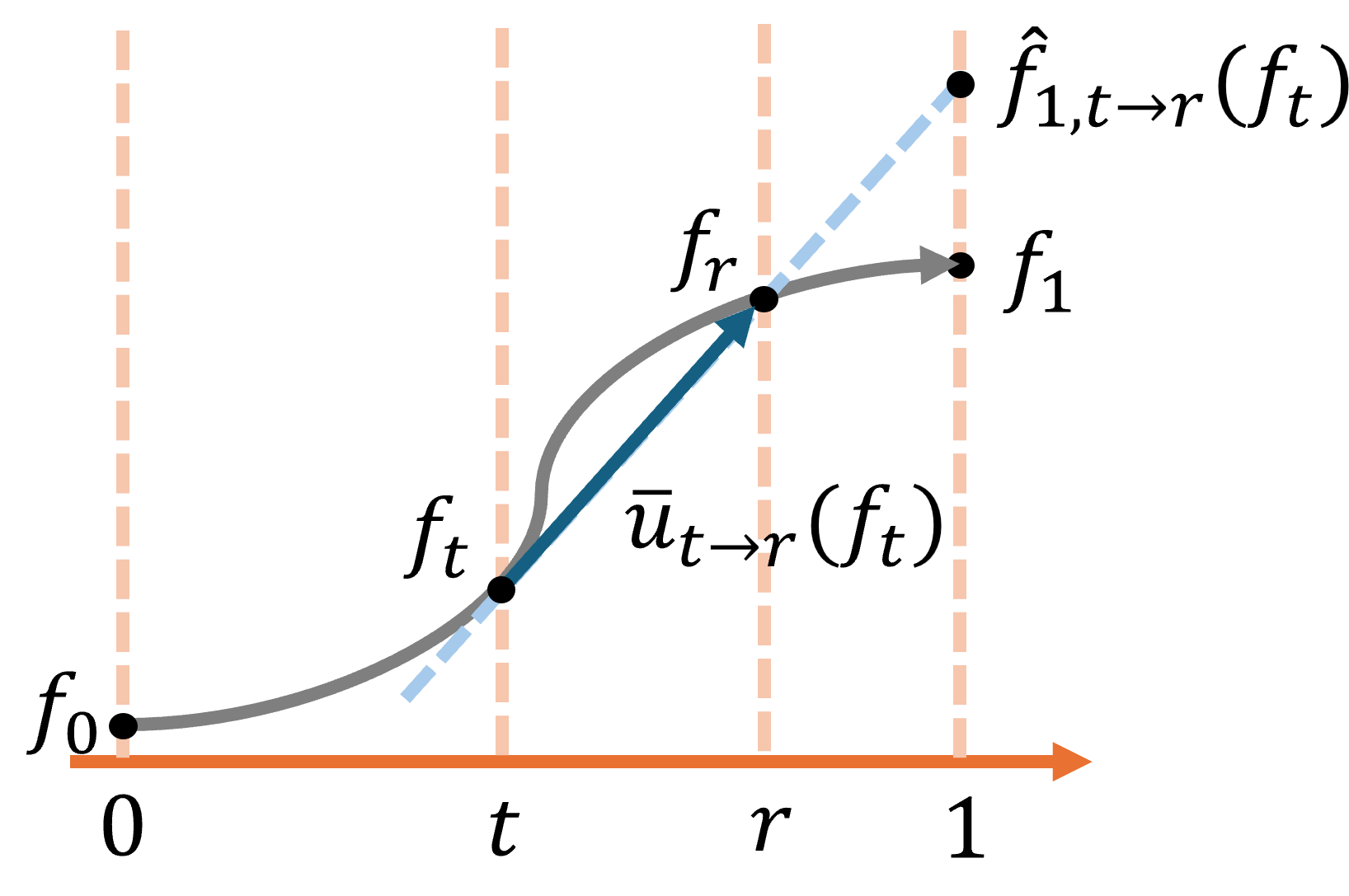}
    \caption{Functional Mean flow.}
    \label{fig:sub2}
  \end{subfigure}
  \caption{From the $x_1$-prediction of Functional Flow Matching to the $x_1$-prediction of Functional Mean Flow. In the left figure, we illustrate the relationship between the $u$-prediction (predicting $u_t(f_t)$) and the $x_1$-prediction (predicting $\hat{f}_{1,t}(f_t)$) in flow matching, which satisfies $\hat{f}_{1,t}(f_t) = (1-t)u_t(f_t) + f_t$. Based on this relationship, we can analogously define the $x_1$-prediction of functional Mean Flow (predict $\hat{f}_{1,t\to r}(f_t)$), satisfying $\hat{f}_{1,t}(f_t) = (1-t)u_{t\to r}(f_t) + f_t$.}
  \label{fig:illustration}
\end{figure}

In addition to the common $u$-prediction, standard Flow Matching also has an $x_1$-prediction variant, as shown in \textcolor{cvprblue}{\autoref{fig:illustration}(a)}.
In Functional Flow Matching, the $u$-prediction estimates the velocity $u_t(f_t)$ at time $t$, while the $x_1$-prediction predicts the intersection $\hat{f}_{1,t}(f_t)$ between the extrapolated $u_t(f_t)$ and $t=1$, satisfying $\hat{f}_{1,t}(f_t) = (1-t)u_t(f_t) + f_t$.  Similarly, in Functional Mean Flow (see \textcolor{cvprblue}{\autoref{fig:illustration}(b)}), the $u$-prediction estimates the mean velocity between $t$ and $r$, and we can define the $x_1$-prediction as the intersection of the extrapolated $\bar u_{t\to r}$ with $t=1$
\begin{equation}\label{eq:relation_u_x1}
    \begin{aligned}
        \hat{f}_{1,t\to r}\!=\!(1\!-\!t)u_{t\to r} \!+\! \mathrm{Id}_{\mathcal{F}} = \frac{1\!-\!t}{r\!-\!t}\phi_{s\to t} \!-\! \frac{1\!-\!r}{r\!-\!t} \mathrm{Id}_{\mathcal{F}}.        
    \end{aligned}
\end{equation}
For the $x_1$-prediction, the Functional Mean Flow loss is $\mathcal{\tilde L}^M(\theta) = \mathbb{E}_{t,r,g\sim\mu_t} \big[\|\hat{f}_{1,t\to r}(g) - \hat{f}_{1,t\to r}^\theta(g)\|_\mathcal{F}^2\big]$.  As with the $u$-prediction, $\mathcal{\tilde L}^M(\theta)$ cannot be optimized directly, and we optimize its corresponding conditional loss instead
\begin{equation}\label{eq:meanflow_x1_loss}
\begin{aligned}
&\mathcal{\tilde L}^M_c(\theta) = E_{t,r,g \sim \mu_t^f,f\sim \mu_1}\big[||\frac{r-t}{1-r}sg((1-t)\frac{\partial}{\partial_t}\hat{f}_{1,t\to r}(g)\\
&+D\hat{f}_{1,t\to r}(g)[\hat{f}_{1,t}^f(g)-g])+\hat{f}_{1,t}^f(g) -\hat{f}_{1,t\to r}^\theta(g)||_\mathcal{F}^2\big], 
\end{aligned}    
\end{equation}
where $\hat{f}_{1,t}^f(g)$ denotes the conditional value of $\hat{f}_{1,t}(g)$ with respect to $f$, analogous to how $u_t^f(g)$ serves as the conditional counterpart of $u_t(g)$.
$\hat{f}_{1,t}^f(g)$ can be computed as follows (see 
\textcolor{cvprblue}{Appendix~\ref{sec:proof_for_all_x1}} for a detailed derivation):
\begin{equation}\label{eq:conditional_x1_flowmatching}
    \begin{aligned}
        \hat{f}_{1,t}^f(g) = \frac{\sigma_{\text{min}}}{1-(1-\sigma_{\min})t}(g-tf)+f.
    \end{aligned}
\end{equation}
Similar to the equivalent reformulation of $\bar u_{t\to r}$ in \autoref{eq:derivation_u_prediction}, the above $x_1$-prediction conditional loss is derived from the following equivalent reformulation of $\hat{f}_{1,t\to r}$ (see \textcolor{cvprblue}{Appendix~\ref{sec:proof_for_all_x1}} for derivation in details):
\begin{equation}\label{eq:conditional_x1_prediction}
\begin{aligned}
        \hat{f}_{1,t\to r}(g) &= \frac{r-t}{1-t}\big((1-t)\frac{\partial}{\partial t}\hat{f}_{1,t\to r}(g)\\
        &+D\hat{f}_{1,t\to r}(g)[\hat{f}_{1,t}(g)-g]\big)+\hat{f}_{1,t}(g).
\end{aligned}    
\end{equation}
It can be shown that the $x_1$-prediction Functional Mean Flow also admits the following equivalent form:
\begin{theorem}[Equivalence of Mean Flow Conditional and Marginal Losses for $x_1$-prediction]
\label{thm:loss-equivalence_x1}
Under the assumptions of \autoref{thm:flow-diff}, we have $\mathcal{\tilde L}_c^M(\theta) = \mathcal{\tilde L}^M(\theta) + C$ where $C$ is independent of the model parameters~$\theta$.  (see \autoref{sec:proof_for_all_x1} for proof.)
\end{theorem}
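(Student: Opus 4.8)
The plan is to mirror the proof of \autoref{thm:loss-equivalence} for the $u$-prediction: both $\mathcal{\tilde L}_c^M(\theta)$ and $\mathcal{\tilde L}^M(\theta)$ are squared-$\mathcal{F}$-norm objectives against a target, so the equivalence reduces to a single conditional-consistency identity followed by a polarization (bias--variance) decomposition. Denote by $\tilde Y^f_{t\to r}(g)$ the target inside the norm of \autoref{eq:meanflow_x1_loss}, i.e. everything other than $-\hat{f}_{1,t\to r}^\theta(g)$; it is precisely the marginal reformulation \autoref{eq:conditional_x1_prediction} with the marginal one-step map $\hat{f}_{1,t}(g)$ replaced by its conditional counterpart $\hat{f}_{1,t}^f(g)$ (and the self-referential derivatives frozen under $\mathrm{sg}$). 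The central claim I must establish is the consistency identity $\mathbb{E}_{f\mid g}\big[\tilde Y^f_{t\to r}(g)\big]=\hat{f}_{1,t\to r}(g)$, where $\mathbb{E}_{f\mid g}$ is the posterior expectation over $f$ with weight $\frac{\mathrm d\mu_t^f}{\mathrm d\mu_t}(g)\,\mathrm d\nu(f)$ dictated by the marginalization \autoref{eq:ecpectional_conditional}. Everything else is then routine.

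First I would prove the one-step marginalization $\mathbb{E}_{f\mid g}[\hat{f}_{1,t}^f(g)]=\hat{f}_{1,t}(g)$. This follows by writing $\hat{f}_{1,t}^f(g)=(1-t)u_t^f(g)+g$ (the $x_1$--$u$ relation of \autoref{fig:illustration}, consistent with the closed form \autoref{eq:conditional_x1_flowmatching}) and invoking the Functional Flow Matching identity $\mathbb{E}_{f\mid g}[u_t^f(g)]=u_t(g)$ from \autoref{eq:ecpectional_conditional}; the additive $g$ term is $f$-independent and passes through the expectation unchanged. Next I would push $\mathbb{E}_{f\mid g}$ through the reformulation \autoref{eq:conditional_x1_prediction}: the time-derivative $\frac{\partial}{\partial t}\hat{f}_{1,t\to r}(g)$ and the operator $D\hat{f}_{1,t\to r}(g)$ depend only on the marginal two-parameter flow and are therefore $f$-independent, while the only $f$-dependence sits in the bracketed argument $\hat{f}_{1,t}^f(g)-g$. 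By linearity of the Fréchet derivative (well-defined by \autoref{thm:flow-diff}) I can move the expectation inside the bracket, yielding $D\hat{f}_{1,t\to r}(g)[\mathbb{E}_{f\mid g}[\hat{f}_{1,t}^f(g)]-g]=D\hat{f}_{1,t\to r}(g)[\hat{f}_{1,t}(g)-g]$; combined with the one-step marginalization this reproduces the right-hand side of \autoref{eq:conditional_x1_prediction}, i.e. $\mathbb{E}_{f\mid g}[\tilde Y^f_{t\to r}(g)]=\hat{f}_{1,t\to r}(g)$.

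With the consistency identity in hand I would finish by polarization. Expanding the squared norms, $\mathcal{\tilde L}_c^M(\theta)=\mathbb{E}\big[\|\tilde Y^f_{t\to r}(g)\|_\mathcal{F}^2\big]-2\,\mathbb{E}\big[\langle \tilde Y^f_{t\to r}(g),\hat{f}_{1,t\to r}^\theta(g)\rangle_\mathcal{F}\big]+\mathbb{E}\big[\|\hat{f}_{1,t\to r}^\theta(g)\|_\mathcal{F}^2\big]$, and similarly for $\mathcal{\tilde L}^M(\theta)$ with $\tilde Y^f_{t\to r}$ replaced by $\hat{f}_{1,t\to r}$. The model term $\mathbb{E}[\|\hat{f}_{1,t\to r}^\theta(g)\|_\mathcal{F}^2]$ coincides in both losses because $g$ carries the same marginal law $\mu_t=\int_{\mathcal F}\mu_t^f\,\mathrm d\nu(f)$ under the two sampling schemes. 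For the cross term, conditioning on $g$ and applying the tower property with the consistency identity gives $\mathbb{E}[\langle \tilde Y^f_{t\to r}(g),\hat{f}_{1,t\to r}^\theta(g)\rangle_\mathcal{F}]=\mathbb{E}[\langle \mathbb{E}_{f\mid g}[\tilde Y^f_{t\to r}(g)],\hat{f}_{1,t\to r}^\theta(g)\rangle_\mathcal{F}]=\mathbb{E}[\langle \hat{f}_{1,t\to r}(g),\hat{f}_{1,t\to r}^\theta(g)\rangle_\mathcal{F}]$, so the two cross terms agree. The remaining target-norm terms are independent of $\theta$ (inside $\mathrm{sg}$ the target is a frozen field that, at the idealized marginal level, equals $\hat{f}_{1,t\to r}$), hence $C=\mathbb{E}[\|\tilde Y^f_{t\to r}(g)\|_\mathcal{F}^2]-\mathbb{E}[\|\hat{f}_{1,t\to r}(g)\|_\mathcal{F}^2]$ does not depend on $\theta$, giving $\mathcal{\tilde L}_c^M(\theta)=\mathcal{\tilde L}^M(\theta)+C$.

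I expect the main obstacle to be the measure-theoretic justification of interchanging the posterior expectation $\mathbb{E}_{f\mid g}$ with the (generally unbounded) linear operator $D\hat{f}_{1,t\to r}(g)$ and with the $t$-derivative, and of guaranteeing that every cross term and target-norm term is finite. This is exactly where the second-moment hypothesis $\int_{\mathcal F}\|f\|_\mathcal{F}^2\,\mathrm d\nu(f)<\infty$ and the Fréchet differentiability secured by \autoref{thm:flow-diff} (via its supporting lemmas) become indispensable: they control $\|\hat{f}_{1,t}^f(g)\|_\mathcal{F}$ in $L^2$ and make the operator action integrable, so that the Fubini and dominated-convergence steps needed to commute expectation with $D\hat{f}_{1,t\to r}(g)$ are valid. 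A secondary subtlety, handled as in the $u$-prediction case, is the careful treatment of the Radon--Nikodym weight $\frac{\mathrm d\mu_t^f}{\mathrm d\mu_t}$ defining the posterior, together with the stop-gradient, which freezes the self-referential term so that \autoref{eq:conditional_x1_prediction} applies as an identity rather than a circular definition. As a consistency check, the affine relation $\hat{f}_{1,t\to r}=(1-t)\bar u_{t\to r}+\mathrm{Id}_{\mathcal{F}}$ of \autoref{eq:relation_u_x1} exhibits the $x_1$ objective as an affine reparametrization of the $u$ objective, so the equivalence is expected to inherit directly from \autoref{thm:loss-equivalence}.
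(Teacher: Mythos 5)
Your proposal is correct and follows essentially the same route as the paper's proof: the paper likewise expands both squared norms, notes the target-norm term is $\theta$-independent, matches the model-norm terms via the marginalization $\mu_t=\int_{\mathcal F}\mu_t^f\,\mathrm d\nu(f)$, and matches the cross terms by substituting the marginal reformulation (\autoref{eq:conditional_x1_prediction}), the marginalization identity (\autoref{eq:maginal_conditional_x1}), and Fubini--Tonelli together with the exchange of Bochner integrals and inner products --- which is exactly your tower-property-plus-consistency-identity argument written out as explicit integrals. The only cosmetic difference is that your posterior expectation $\mathbb{E}_{f\mid g}$ packages the Radon--Nikodym weight $\frac{\mathrm d\mu_t^f}{\mathrm d\mu_t}(g)$ that the paper carries explicitly through each line.
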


In our experiments in \autoref{sec:experiment}, we found that, in general, the $u$-prediction and $x_1$-prediction yield comparable results.  However, in certain task, the $u$-prediction becomes highly unstable and fails to optimize,  whereas the $x_1$-prediction demonstrates much better stability.

\paragraph{Remark.} Although our $x_1$-prediction Mean Flow also predicts the endpoint, it differs from prior methods Consistency Models (CM) \cite{song2023consistency} and Flow Map Matching (FMM) \cite{boffi2025flow}. CM and FMM predict the true future state $f_r$ from the current function $f_t$, whereas our method, inspired by $x_1$-prediction Flow Matching, predicts the intersection of the velocity line with $t{=}1$. In addition, CM cannot fully utilize gradient information, and FMM optimizes quantities inside gradient operators, causing instability and high cost. Our $x_1$-prediction Mean Flow is theoretically equivalent to $u$-prediction Mean Flow and avoids these drawbacks.



\subsection{Algorithm}
\begin{algorithm}[t]
\caption{Functional Mean Flow: Training}
\label{alg:training}
\begin{algorithmic}[1]  
\Require dataset $\mathcal{D}$, initial model parameter $\theta$, learning rate $\eta$, Gaussian measure sampler $\mathcal{N}(0, C_0)$, time sampler $\mathcal{T}$
\Repeat
    \State Sample $f \sim \mathcal{D}$, $f_0 \sim \mathcal{N}(0, C_0)$ and $t,r\sim \mathcal{T}$
    \State $g \leftarrow (1-(1-\sigma_{\text{min}})t)f_0+tf$ 
    \If{$u$-prediction}
        \State $u^f_{t} \leftarrow \frac{1-\sigma_{\min}}{1-(1-\sigma_{\min})t}(t f-g) + f $
        \State $\mathcal{L}(\theta) \leftarrow \|(r-t)\text{sg}(\frac{\partial}{\partial t}\bar u^\theta_{t\to r}(g)+$\\$\qquad\qquad\qquad\qquad D\bar u^\theta_{t\to r}(g)[u^f_t])+u^f_t-\bar u^\theta_{t\to r}(g)\|_\mathcal{F}^2$
    \ElsIf{$x_1$-prediction}
        \State $\hat{f}_{1,t}^f(g) \leftarrow \frac{\sigma_{\text{min}}}{1-(1-\sigma_{\min})t}(g-tf)+f$
        \State $\mathcal{L}(\theta) \leftarrow \|\frac{r-t}{1-r}\text{sg}\big((1-t)\frac{\partial}{\partial t}\hat f^\theta_{1,t\to r}(g)+$\\$\qquad \qquad D\hat f^\theta_{1,t\to r}(g)[\hat{f}_{1,t}^f(g)-g]\big)+\hat{f}_{1,t}^f(g) $\\$\qquad \qquad-\hat{f}_{1,t\to r}^\theta(g)\|_\mathcal{F}^2$
    \EndIf
    \State $\theta \gets \theta - \eta \nabla_\theta \mathcal{L}(\theta)$
\Until{convergence}
\end{algorithmic}
\end{algorithm}

Similar to Functional Flow Matching, Functional Mean Flow starts from functions sampled from a Gaussian measure, since white noise is undefined in infinite-dimensional spaces \cite{zhang2024functional,bond2024infty}.  The model also requires a function-to-function network, such as a Neural Operator (see \autoref{sec:experiment} for details on sampling from Gaussian measure and network).  Similar to \cite{geng2025mean,song2023consistency}, the gradient terms in \autoref{eq:meanflow_u_loss} and \autoref{eq:meanflow_x1_loss} can be computed through the JVP operation within the optimization framework.  Based on the above, we obtain the training and sampling algorithms for the $u$-prediction and $x_1$-prediction variants of Functional Mean Flow in \textcolor{cvprblue}{Algorithm \autoref{alg:training}} and \textcolor{cvprblue}{Algorithm \autoref{alg:sampling}}.  For clarity, we include Python code examples in the \textcolor{cvprblue}{Appendix~\ref{appendix:implementation}}.

 \begin{figure*}[t]
    \centering
  \includegraphics[width=\textwidth]{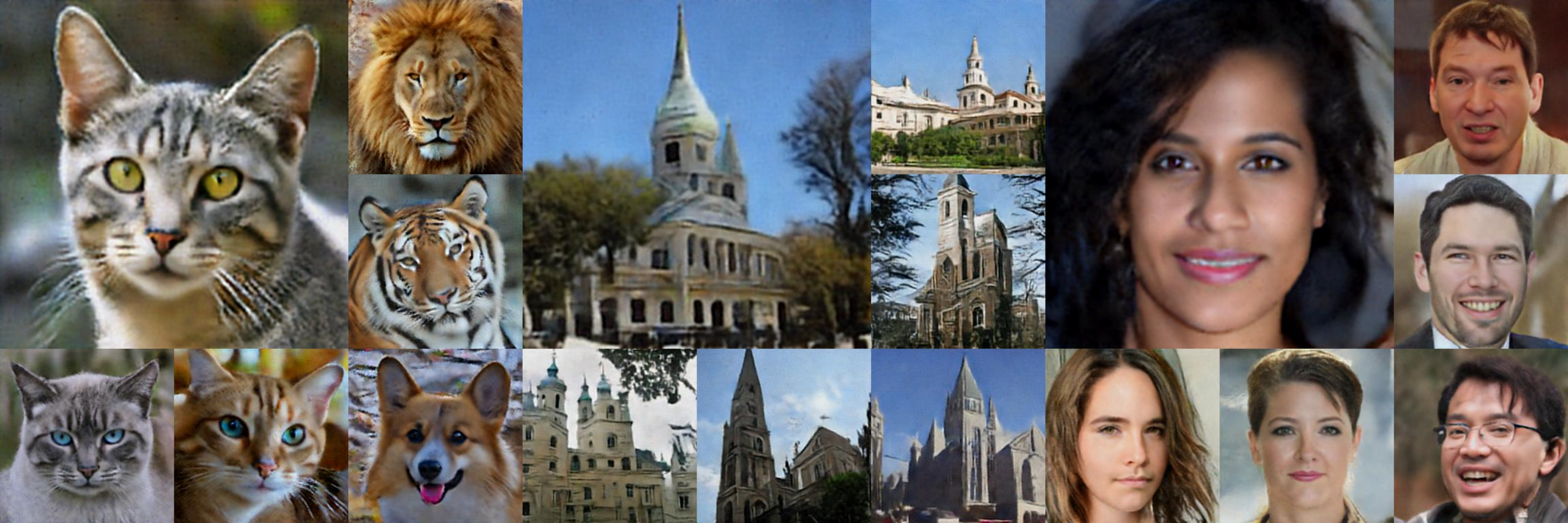}
  \caption{Results on AFHQ, LSUN-Church, and FFHQ. The model is trained on a random 1/4 pixel subset of 256×256 images and evaluated at 256×256 and 512×512 via one-step generation.}
  \label{fig:afhq_church_ffhq}
\end{figure*}

\begin{algorithm}[t]
\caption{Functional Mean Flow: Inference}
\label{alg:sampling}
\begin{algorithmic}[1]  
\Require trained model parameter $\theta$, Gaussian measure sampler $\mathcal{N}(0, C_0)$
\State Sample $f_0 \sim \mathcal{N}(0, C_0)$
\If{$u$-prediction}
    \State $f \leftarrow \bar u^\theta_{t\to r}(f_0) + f_0$
\ElsIf{$x_1$-prediction}
    \State $f \leftarrow \hat f^\theta_{1,t\to r}(f_0)$
\EndIf
\end{algorithmic}
\end{algorithm}

\section{Experiment}\label{sec:experiment}
To evaluate the generality and effectiveness of our approach, we conduct experiments on three distinct and representative tasks: real-world functional generation (including time-series data and Navier–Stokes solutions) \cite{kerrigan2024functional,rahman2022gan,lim2025score,kerrigan2023diffusion}, function-based image generation \cite{bond2024infty,le2024brush}, and SDF-based 3D shape generation \cite{zhang2024functional}.
For all tasks, we adopt the neural architecture originally designed for multi-step generation, with only a minor modification that replaces the single time variable $t$ with a pair ($t$, $r$) to meet the requirements of the FMF formulation (see \autoref{appendix:model}); the models are then trained with \textcolor{cvprblue}{Algorithm \autoref{alg:training}}. The experimental results demonstrate that our framework can be seamlessly integrated into various functional generation paradigms, enabling effortless adaptation of existing neural architectures for one-step generation. These include Neural Operators \cite{kerrigan2024functional,rahman2022gan,lim2025score,kerrigan2023diffusion}, hybrid sparse–dense Neural Operators \cite{bond2024infty,le2024brush}, and point-based functional generation models \cite{zhang2024functional}.

\subsection{Real-World Functional Generation}
We now investigate the empirical performance of our FMF model on several real-world functional datasets. For fair comparison, we follow the same experimental setup as prior works \cite{kerrigan2024functional} and adopt the Fourier Neural Operator (FNO) as the backbone to model $\bar u_{t\to r}^\theta(g)$ for $u$-prediction and $\hat f_{1,t\to r}^\theta(g)$ for $x_1$-prediction, which takes functions as both inputs and outputs. The network size and structural parameters are kept identical to previous implementations, and for initial Gaussian measure, a Gaussian processes with a Mat\'ern kernel is used for parametrization (see \textcolor{cvprblue}{Appendix~\ref{appendix:model_scientific}} for details).  

Our functional datasets consist of two categories: (1) Five 1D statistical datasets with diverse correlation structures, including a daily temperature dataset (AEMET) \cite{febrero2012statistical}, a gene expression time-series dataset (Genes) \cite{orlando2008global}, an economic population time-series dataset (Pop.) \cite{bolt2025maddison}, a GDP-per-capita dataset (GDP) \cite{inklaar2018rebasing}, and a labor-force-size dataset (Labor) \cite{IMF_IFS_2022}; and (2) a 2D fluid dynamics dataset consisting of numerical solutions to the Navier–Stokes equations on a 2D torus \cite{li2022learning}.  We compare our method against several function-based generative models, including the multi-step approaches FDDPM \cite{kerrigan2023diffusion}, DDO \cite{lim2025score}, and FFM \cite{kerrigan2024functional} in both its OT and VP variants, as well as the one-step functional generation method GANO \cite{rahman2022gan}. The quantitative and qualitative results are summarized in \autoref{tab:func_bmf_1d} and \autoref{tab:ns_data}.  For the 1D datasets, following \cite{kerrigan2024functional}, we compute a set of statistical functionals: mean, variance, skewness, kurtosis, and autocorrelation for the generated functions, and evaluate the MSE between them and the corresponding ground-truth statistics from the dataset. For the 2D dataset, we evaluate the MSE between the generated and ground-truth Navier–Stokes solutions in terms of both density and spectral representations \cite{kerrigan2024functional,rahman2022gan,lim2025score}. 

Across both 1D and 2D settings, our method achieves the best performance among one-step functional generation methods, while performing comparably to the best multi-step baselines such as FFM.  Detailed descriptions of the training procedures, inference configurations, and evaluation metrics can be found in \textcolor{cvprblue}{Appendix~\ref{appendix:model_scientific}}.

\begin{table}[t] 
\centering
\small 
\setlength{\tabcolsep}{2pt} 
\renewcommand{\arraystretch}{0.9} 
\caption{Comparison of different functional generative method on 1D datasets. Statistical metrics (mean, variance, skewness, kurtosis, and autocorrelation) are reported across datasets. The best results for the 1-step and multi-step settings are highlighted in bold.}
\label{tab:func_bmf_1d}
\resizebox{\columnwidth}{!}{ 
\begin{tabular}{l|c c c c c c}
\hline
Dataset & Mean $\downarrow$& Variance $\downarrow$& Skewness $\downarrow$& Kurtosis $\downarrow$& Autocorrelation $\downarrow$& NFEs \\
\hline
\textbf{AEMET} & & & & & & \\
\textbf{FMF ($u$-pred)}    & \textbf{5.3e-1} (1.5e-1) & 2.0e+0 (1.3e+0) & 7.4e-2 (3.2e-2) & \textbf{1.4e-1} (5.7e-2) & \textbf{5.2e-4} (9.1e-6) & 1 \\
\textbf{FMF ($x_1$-pred)}    & 5.4e-1 (1.7e-1) & \textbf{1.8e+0} (9.8e-1) & \textbf{6.8e-2} (6.0e-2) & 1.8e-1 (1.1e-1) & 5.6e-4 (1.0e-5) & 1 \\
GANO            & 6.5e+1 (1.9e+2) & 7.1e+1 (4.0e+1) & 4.7e-1 (4.8e+0) & 3.2e-1 (1.0e+0) & 2.0e-3 (2.6e-3) & 1 \\
FFM-OT    & \textbf{8.4e-2} (9.9e-2) & 1.7e+0 (1.1e+0) & 7.7e-2 (6.6e-2) & 3.3e-2 (3.7e-2) & \textbf{3.0e-6} (4.0e-6) & 668 \\
FFM-VP    & 1.3e-1 (1.4e-1) & \textbf{1.5e+0} (1.2e+0) & \textbf{5.2e-2} (4.3e-2) & \textbf{1.7e-2} (1.6e-2) & 6.0e-6 (7.0e-6) & 488 \\
FDDPM            & 2.6e-1 (3.0e-1) & 3.5e+0 (1.0e+0) & 1.1e-1 (4.2e-2) & 3.9e-2 (3.0e-2) & 5.0e-6 (5.0e-6) & 1000 \\
DDO             & 2.4e-1 (2.6e-1) & 6.6e+0 (5.1e+0) & 2.1e-1 (4.1e-2) & 3.8e-2 (3.1e-2) & 6.7e-6 (1.3e-4) & 2000 \\
\hline
\textbf{Genes} & & & & & & \\
\textbf{FMF ($u$-pred)}    & \textbf{1.6e-3} (8.3e-4) & \textbf{3.3e-4} (1.5e-4) & \textbf{3.6e-2} (9.6e-3) & \textbf{9.5e-2} (2.3e-2) & 3.8e-3 (8.4e-4) & 1 \\
\textbf{FMF ($x_1$-pred)}   & 2.1e-3 (5.8e-4) & 2.0e-3 (3.1e-4) & 4.6e-2 (1.1e-2) & 2.1e-1 (3.6e-2) & 5.9e-3 (9.7e-4) & 1 \\
GANO            & 4.6e-2 (3.0e-3) & 7.3e-3 (3.6e-4) & 1.7e+0 (1.3e+0) & 3.1e-1 (8.4e-2) & \textbf{2.0e-3} (1.2e-3) & 1 \\
FFM-OT    & 6.7e-4 (5.4e-4) & 3.9e-3 (2.6e-4) & 2.4e-1 (4.7e-2) & 7.7e-2 (9.0e-3) & 2.5e-4 (1.7e-4) & 386 \\
FFM-VP    & \textbf{4.2e-4} (4.8e-4) & \textbf{7.3e-4} (3.5e-4) & \textbf{1.9e-1} (6.1e-2) & \textbf{4.3e-2} (1.2e-2) & \textbf{1.3e-4} (1.0e-4) & 290 \\
FDDPM            & 4.4e-4 (4.4e-4) & 1.3e-3 (4.6e-4) & 2.5e-1 (1.9e-1) & 5.9e-2 (1.2e-2) & 1.9e-4 (1.2e-4) & 1000 \\
DDO             & 4.2e-3 (1.5e-3) & 1.2e-2 (3.6e-4) & 3.0e-1 (5.7e-2) & 1.3e-1 (1.8e-2) & 1.0e-3 (2.3e-4) & 2000 \\
\hline
\textbf{Pop.} & & & & & & \\
\textbf{FMF ($u$-pred)}    & 7.1e-4 (2.1e-4) & \textbf{1.4e-3} (2.3e-4) & \textbf{2.0e-1} (8.8e-2) & \textbf{6.4e+0} (7.1e+0) & 7.2e-3 (9.0e-4) & 1 \\
\textbf{FMF ($x_1$-pred)}   & \textbf{1.7e-4} (1.2e-4) & 1.6e-3 (1.9e-4) & 3.7e-1 (1.1e-1) & 1.5e+1 (1.7e+1) & \textbf{1.1e-4} (3.9e-5) & 1 \\
GANO            & 4.7e-3 (2.4e-3) & 1.6e-3 (1.5e-3) & 1.0e+0 (9.2e-1) & 2.3e+1 (3.7e+1) & 1.6e-1 (2.8e-1) & 1 \\
FFM-OT    & 6.0e-4 (7.5e-4) & 1.6e-4 (1.6e-4) & 1.1e-1 (6.7e-2) & \textbf{1.8e+0} (1.2e+0) & 7.0e-4 (3.4e-4) & 662 \\
FFM-VP    & \textbf{5.4e-4} (7.6e-4) & 3.0e-4 (2.9e-4) & 1.7e-1 (4.4e-2) & 2.1e+0 (9.2e-1) & 8.9e-2 (9.1e-3) & 494 \\
FDDPM     & 6.6e-4 (6.1e-4) & \textbf{1.2e-4} (1.2e-4) &\textbf{ 9.4e-2} (6.5e-2) & 2.5e+0 (2.2e+0) & \textbf{3.0e-5} (9.2e-6) & 1000 \\
DDO       & 2.3e-3 (1.3e-3) & 2.2e-1 (8.3e-3) & 4.3e-1 (1.5e-2) & 5.2e+0 (1.5e-1) & 5.0e-1 (1.0e-2) & 2000 \\
\hline
\textbf{GDP} & & & & & & \\
\textbf{FMF ($u$-pred)}    & 1.2e-3 (6.8e-4) & \textbf{2.9e-3} (5.1e-4) & 2.9e-1 (8.4e-2) & 2.4e+0 (9.7e-1) & 1.0e-3 (2.5e-4) & 1 \\
\textbf{FMF ($x_1$-pred)}   & \textbf{1.1e-3} (7.8e-4) & 4.0e-3 (6.7e-4) & \textbf{2.2e-1} (8.9e-2) & \textbf{1.8e+0} (5.8e-1) & \textbf{2.9e-4} (1.9e-5) & 1 \\
GANO            & 9.6e+2 (3.1e+3) & 7.4e+2 (2.3e+3) & 5.8e-1 (2.2e-1) & 2.4e+0 (1.0e+0) & 7.1e-2 (1.9e-1) & 1 \\
FFM-OT          & 2.8e-2 (2.8e-3) & 5.3e-3 (1.2e-3) & 6.6e-1 (2.9e-1) & 9.2e+0 (1.6e+1) & 6.1e-4 (4.6e-4) & 536 \\
FFM-VP          & 2.8e-2 (3.4e-3) & 4.9e-3 (1.2e-3) & 5.3e-1 (1.2e-1) & 3.2e+0 (1.4e+0) & 8.7e-2 (1.0e-2) & 494 \\
FDDPM           & \textbf{6.0e-4} (6.5e-4) & \textbf{5.3e-4} (5.3e-4) & \textbf{5.1e-2} (2.6e-2) & \textbf{7.2e-1} (4.0e-1) & \textbf{1.8e-4} (4.3e-5) & 1000 \\
DDO             & 1.3e-2 (2.6e-3) & 1.5e-1 (9.9e-3) & 3.6e-1 (1.6e-2) & 1.9e+0 (1.0e-1) & 3.8e-1 (8.5e-3) & 2000 \\
\hline
\textbf{Labor} & & & & & & \\
\textbf{FMF ($u$-pred)}    & 5.3e-6 (2.5e-6) & \textbf{7.1e-8} (1.2e-8) & 3.3e-1 (7.9e-2) & 1.3e+1 (5.6e+0) & \textbf{1.1e-2} (2.9e-3) & 1 \\
\textbf{FMF ($x_1$-pred)}   & \textbf{5.1e-6} (2.4e-6) & 1.2e-7 (2.0e-8) & \textbf{2.7e-1} (5.9e-2) & 7.9e+0 (4.3e+0) & 2.1e-2 (4.3e-3) & 1 \\
GANO            & 4.7e-5 (3.8e-5) & 2.4e-7 (1.6e-7) & 6.6e-1 (2.2e-1) & \textbf{5.7e+0} (3.0e+0) & 3.3e-2 (1.1e-2) & 1 \\
FFM-OT    & 1.0e-2 (1.2e-4) & 4.2e-7 (1.7e-7) & 1.1e+0 (4.7e-1) & 2.5e+1 (5.1e+1) & 5.5e-2 (4.5e-3) & 308 \\
FFM-VP    & 9.6e-3 (6.1e-5) & 3.5e-7 (7.8e-8) & 1.1e+0 (1.2e-1) & \textbf{7.0e+0} (9.5e-1) & 2.6e-2 (4.0e-3) & 320 \\
FDDPM           & \textbf{6.4e-6} (4.1e-6) & \textbf{6.1e-8} (6.2e-8) & \textbf{2.5e-1} (1.6e-1) & 7.5e+0 (7.1e+0) & \textbf{1.1e-2} (5.2e-3) & 1000 \\
DDO             & 1.3e-5 (5.6e-6) & 3.6e-6 (4.4e-7) & 7.3e-1 (5.9e-2) & 7.2e+0 (1.4e-1) & 3.7e-1 (1.6e-2) & 2000 \\
\hline
\end{tabular}}
\end{table}

\begin{table}[t]
\centering
\caption{MSEs between the density and spectra of the real and generated samples on the Navier--Stokes dataset.  The best results for the 1-step and multi-step settings are highlighted in bold.}\label{tab:ns_data}
\begin{tabular}{lcc}
\toprule
 & Density $\downarrow$& Spectrum $\downarrow$ \\
\midrule 
\textbf{FMF ($u$-pred)}& 9.7e-5 & 1.2e3 \\
\textbf{FMF ($x_1$-pred)}& \textbf{8.0e-5} & \textbf{5.6e2} \\
GANO & 2.5e-3 & 3.2e4 \\
FFM-OT & \textbf{3.7e-5} & \textbf{9.3e1} \\
DDPM & 9.9e-5 & 5.0e2 \\
DDO & 2.9e-2 & 1.6e5 \\
\bottomrule
\end{tabular}
\end{table}

\subsection{Image Generation Based on Functional}

Infty-Diff~\cite{bond2024infty,le2024brush} observed that purely Neural Operator–based functional generation methods struggle to scale to large datasets.  To overcome this limitation, Infty-Diff introduced a hybrid sparse–dense Neural Operator that efficiently learns from higher-resolution functional data (e.g., 256×256 images).  The model first employs a sparse Neural Operator to flexibly represent functions sampled at random points, followed by a dense U-Net/UNO backbone that refines features on a dense grid obtained through k-nearest-neighbor (KNN) sampling.  We follow the network design of \cite{bond2024infty} and adopt the same model capacity, with only minimal modifications to convert the original multi-step diffusion formulation into a single-step FMF generation framework, and for the initial Gaussian measure, we employ white noise with a mollifier kernel, consistent with their implementation (see \textcolor{cvprblue}{Appendix~\ref{appendix:model_image}} for details).

\begin{figure}[t]
  \centering
  \includegraphics[width=0.8\linewidth]{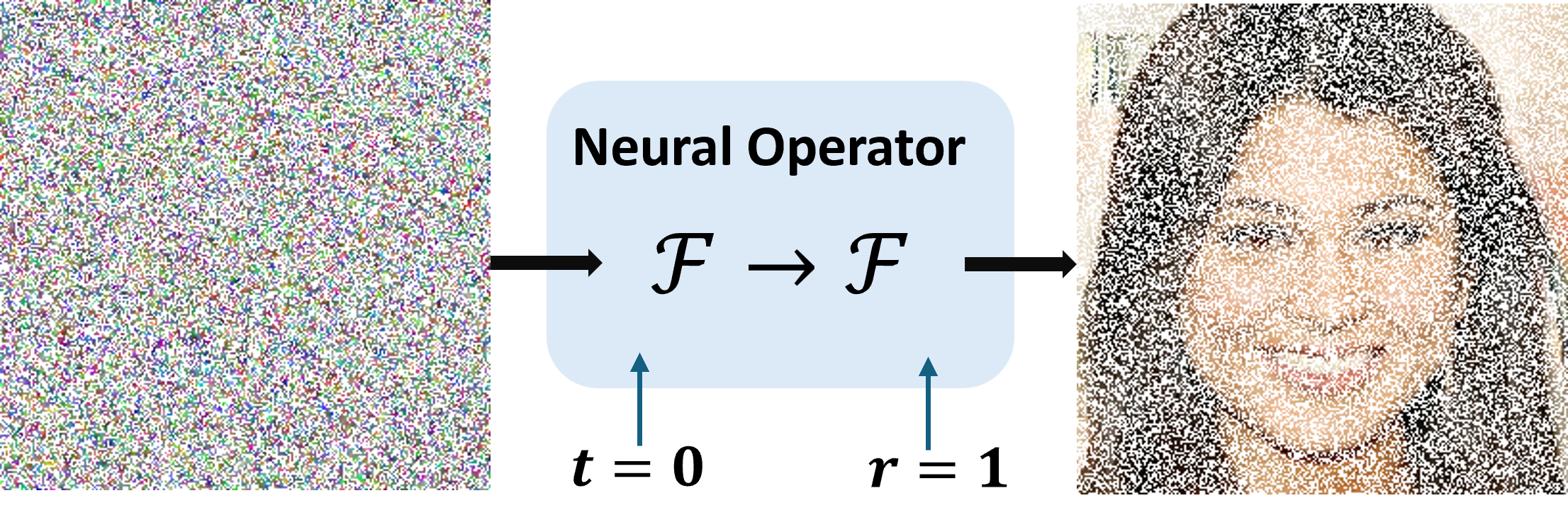}
    \vspace{-0.2cm}
  \caption{In Functional Mean Flow, both the input and output are modeled as continuous functions, enabling training and image generation to be defined over arbitrary pixel coordinates instead of being restricted to a discrete grid.}\label{fig:illustration_image_generation}
  \vspace{-0.6cm}
\end{figure}

Although function-based image generation typically exhibits slightly lower perceptual fidelity compared to conventional pixel-based diffusion models, it provides significantly greater flexibility. As pointed out in Infty-Diff, functional-based models can be trained using only a random subset (e.g., one-quarter of the pixels) from a 256-resolution dataset, and can still generate images at arbitrary resolutions (e.g., 128, 256, or 512).  Because the model operates directly in the functional space, its input and output can be defined on any pixel coordinates rather than being constrained to a fixed grid (see \autoref{fig:illustration_image_generation}).

Following Infty-Diff, we train our model on three unconditional image generation datasets, CelebA-HQ~\cite{karras2017progressive}, FFHQ~\cite{karras2019style}, and LSUN-Church~\cite{yu2015lsun}, as well as one conditional generation dataset, AFHQ~\cite{choi2020stargan}.
Qualitative results are shown in \autoref{fig:celeb_multi_res} and \autoref{fig:afhq_church_ffhq}, with additional results provided in \textcolor{cvprblue}{Appendix~\ref{appendix:additional_image_results}}. During training, the model observes only $25\%$ of the pixels from 256×256 images, while at inference it generate images at 64, 128, 256, 512, and even 1024 resolutions. We report quantitative comparisons with other function-based methods in \autoref{tab:unconditional_comparison}, where following Infty-Diff we primarily evaluate using the FID$_\text{CLIP}$ \cite{kynkaanniemi2023role} metric to assess function-based generative methods.
For completeness, we also report conventional FID scores \cite{heusel2017gans} for reference.  Since our function-based generation framework is inherently resolution-agnostic, we evaluate models trained on 256-resolution datasets at 64, 128, 256, 512, and 1024 resolutions, and report the corresponding FID$_\text{CLIP}$ results in \autoref{tab:across_resolution}.  Our model achieves state-of-the-art performance among one-step function-based methods and produces results comparable to the multi-step function-based generation of Infty-Diff.
Additional details on the training setup, inference procedure, and the internal structure of Infty-Diff are provided in \textcolor{cvprblue}{Appendix~\ref{appendix:model_image}}.

\begin{table}[t] 
\centering
\small 
\setlength{\tabcolsep}{2pt} 
\renewcommand{\arraystretch}{0.9} 
\caption{Evaluation of FID$_\text{CLIP}$ \cite{kynkaanniemi2023role} against previous infinite-dimensional approaches trained on coordinate subsets. For completeness, since several prior works report Inception FID, we additionally provide the Inception FID of our method, indicated with an asterisk (*).  The best results for the 1-step and multi-step settings are highlighted in bold.}\label{tab:unconditional_comparison}
\resizebox{\columnwidth}{!}{ 
\begin{tabular}{lcccccc}
\toprule
\textbf{Method} & \textbf{Step} & \textbf{CelebAHQ-64} & \textbf{CelebAHQ-128} & \textbf{CelebAHQ-256} & \textbf{FFHQ-256} & \textbf{Church-256} \\
\midrule
D2F \cite{dupont2022data}  & 1 & 40.4$^*$ & --    & -- & --    & -- \\

GEM \cite{du2021learning}       & 1 & 14.65 & 23.73 & -- & 35.62 & 87.57 \\
GASP \cite{dupont2022generative} & 1 & 9.29  & 27.31 & -- & 24.37 & 37.46 \\
FMF (Ours) & 1 & \textbf{3.48 (14.73*)} & \textbf{7.18 (30.35*)} & \textbf{9.17 (33.32*)} & \textbf{11.37 (37.67*)} &   \textbf{26.57(35.63*)} \\
$\infty$-Diff \cite{bond2024infty} & 100 & \textbf{4.57} & \textbf{3.02} & -- & \textbf{3.87} & \textbf{10.36} \\
DPF \cite{zhuang2023diffusion}   & 1000 & 13.21$^*$ & --    & -- & --    & -- \\
\bottomrule
\end{tabular}
}
\label{tab:fid_clip}
\end{table}

\begin{table}[t]
\centering
\small 
\renewcommand{\arraystretch}{0.9} 
\caption{Comparison of results across different resolutions. 
All results are trained on $256 \times 256$ images, 
using only $\tfrac{1}{4}$ of the pixels as input. 
Numbers are FID$_\text{CLIP}$ \cite{kynkaanniemi2023role}. The generation resolution is increased up to the maximum resolution of the dataset.}\label{tab:across_resolution}
\label{tab:dataset_resolution}
\begin{tabular}{lccccc}
\toprule
Dataset & 64 & 128 & 256 & 512 & 1024 \\
\midrule
\multicolumn{6}{l}{\textbf{Unconditional Generation}} \\
\midrule
CelebA-HQ & 3.48 & 5.86 & 9.17 & 9.70 & 10.96 \\
FFHQ      & 4.42 & 7.70 & 11.37 & 12.34 & -- \\
LSUN-Church    & 12.07 & 17.89 & 26.51 & -- & -- \\
\midrule
\multicolumn{6}{l}{\textbf{Conditional Generation}} \\
\midrule
AFHQ      & 3.10 & 6.19 & 9.24 & 11.55 & -- \\
\bottomrule
\end{tabular}

\end{table}

\subsection{3D Shape Generation}

\begin{figure}[t]
    \centering
    \begin{subfigure}[t]{0.48\linewidth}
        \centering
        \includegraphics[width=\linewidth]{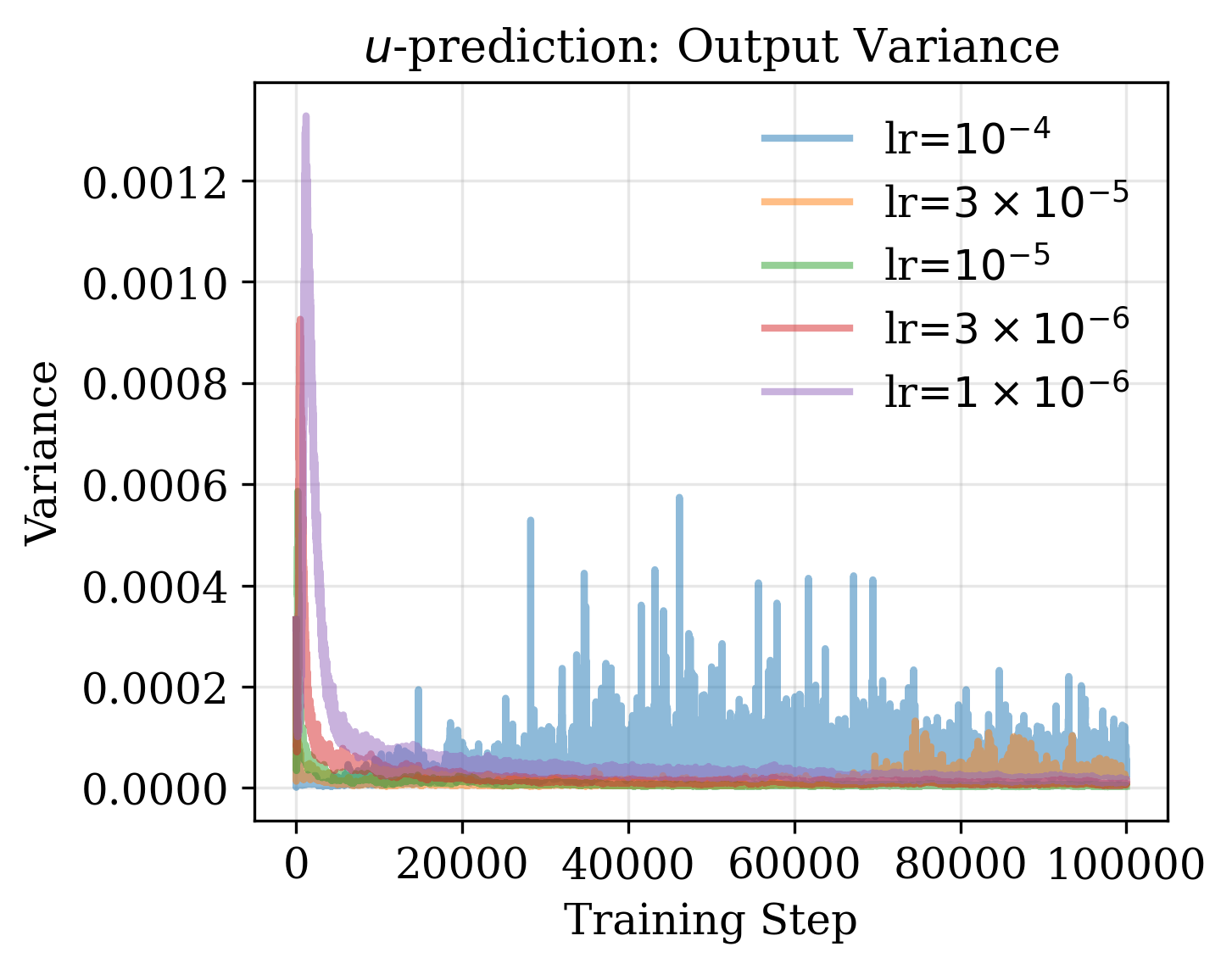}
        \caption{Variance of $u$-prediction}
    \end{subfigure}
    \hfill
    \begin{subfigure}[t]{0.48\linewidth}
        \centering
        \includegraphics[width=\linewidth]{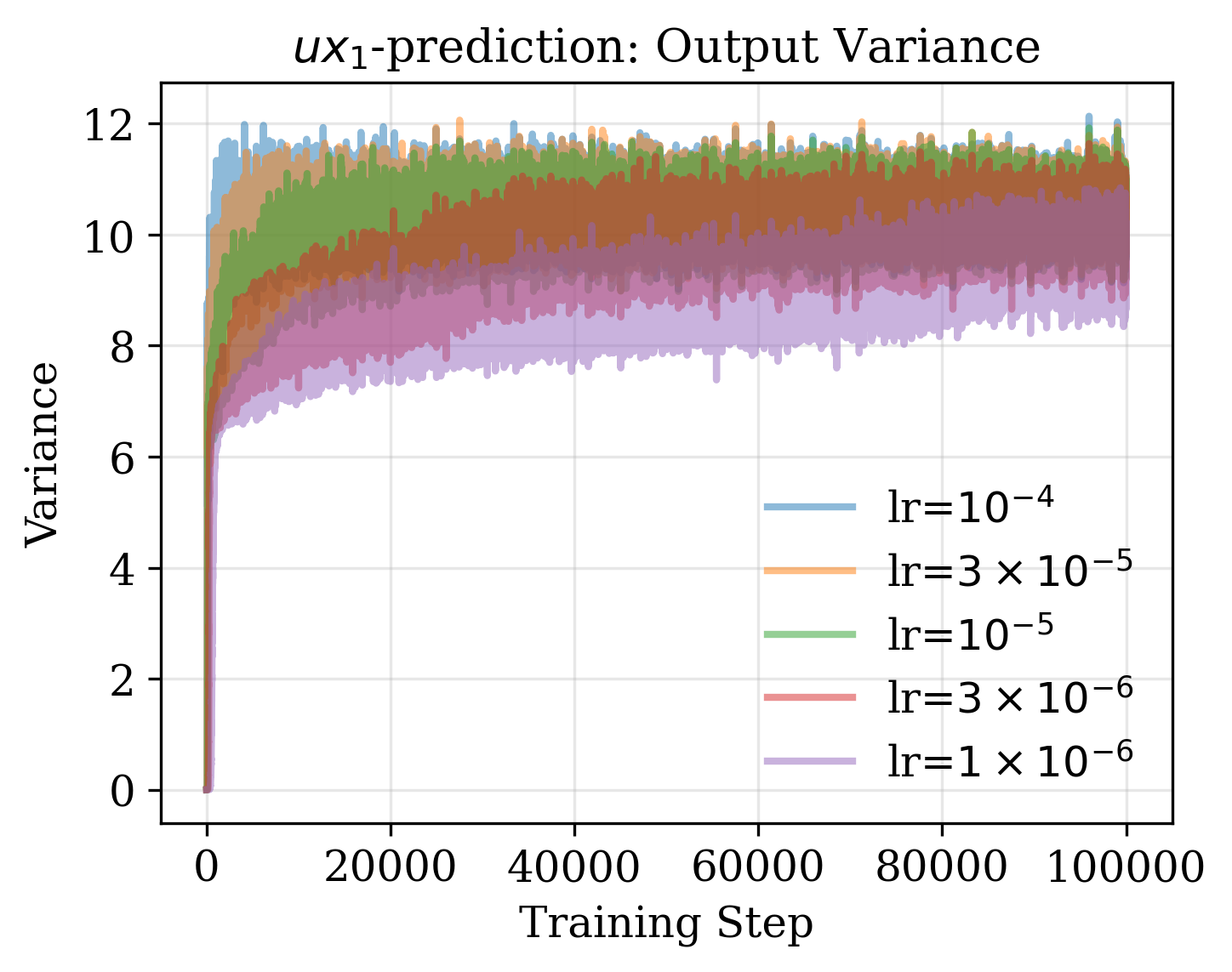}
        \caption{Variance of $x_1$-prediction}
    \end{subfigure}

    \vspace{0.5em}

    \begin{subfigure}[t]{0.48\linewidth}
        \centering
        \includegraphics[width=\linewidth]{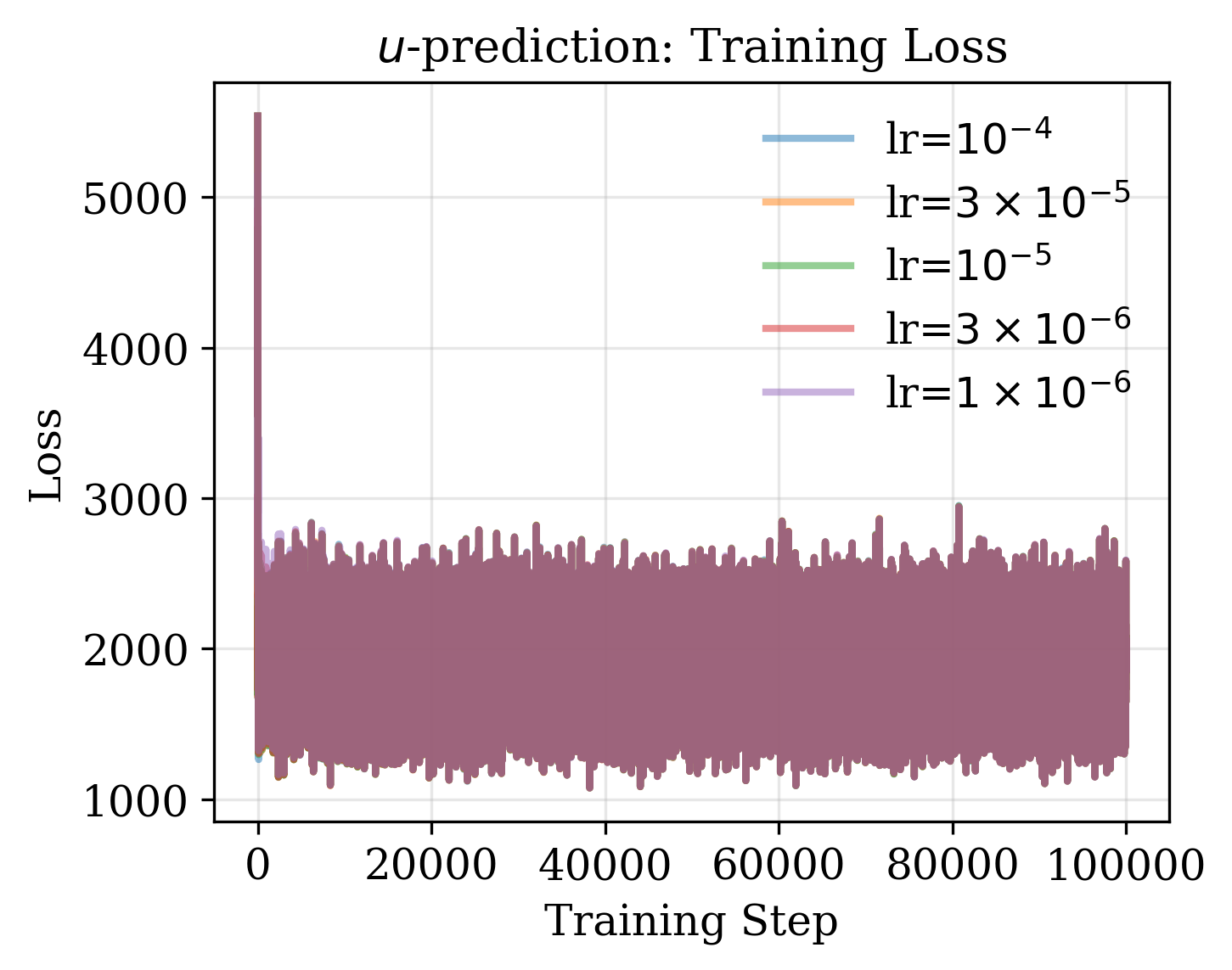}
        \caption{Loss of $u$-prediction}
    \end{subfigure}
    \hfill
    \begin{subfigure}[t]{0.48\linewidth}
        \centering
        \includegraphics[width=\linewidth]{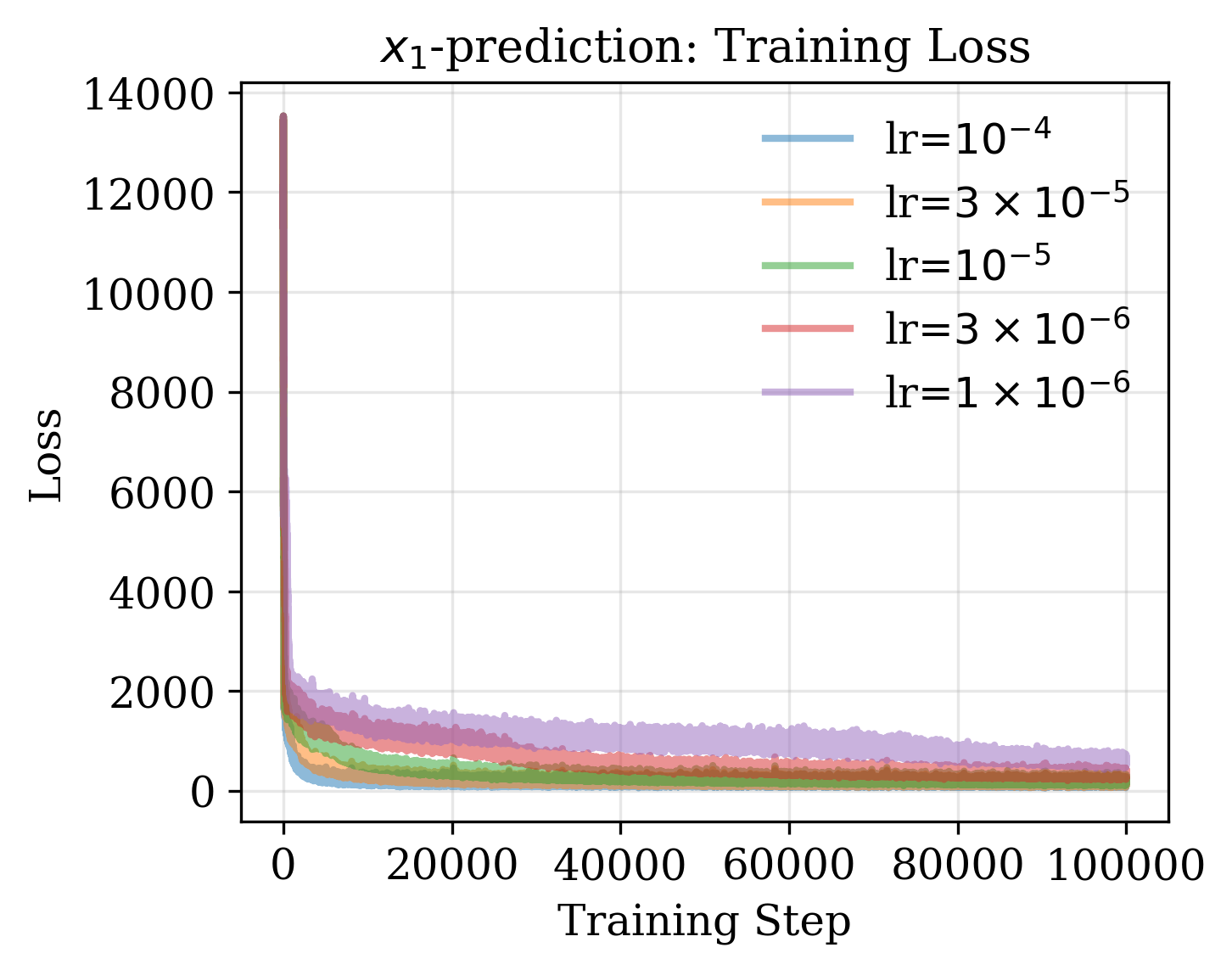}
        \caption{Loss of $x_1$-prediction}
    \end{subfigure}
    \caption{
        Training behavior of $u$- vs.\ $x_1$-prediction FMF under varying learning rates.
        The $u$-prediction model exhibits spatial-variance collapse and unstable losses,
        whereas the $x_1$-prediction model maintains stable variance and smooth optimization.
    }
    \vspace{-0.5cm}
    \label{fig:training_behavior}
\end{figure}

To further validate the applicability of our method to function-based generation tasks, we extend it to SDF-based 3D shape generation, where shape generation is achieved by directly generating its SDF.  We adopt the framework of Functional Diffusion~\cite{zhang2024functional}, where both the input function and output function are represented by randomly sampled points and their corresponding function values: the input function $f_c$ is represented by a set of context points $\{x_c^i\}_{i=1}^n$ with context values $\{v_c^i\}_{i=1}^n$, where $v_c^i = f_c(x_c^i)$, and the output function $f_q$ is represented by query points $\{x_q^j\}_{j=1}^m$ and their predicted query values $\{v_q^j\}_{j=1}^m$, where $v_q^j = f_q(x_q^j)$.  Following the Perceiver~\cite{jaegle2021perceiver} framework, Functional Diffusion performs cascaded cross- and self-attention between the context embedding and the learnable functional vector $\mathcal{X}$. It then applies cross-attention with the query points $\{x_q^j\}_{j=1}^m$ to generate the corresponding query values $\{v_q^j\}_{j=1}^m$, yielding the output function represented as $(\{x_q^j\}_{j=1}^m, \{v_q^j\}_{j=1}^m)$.  We follow this network design and adopt the same model capacity for a fair comparison, and consistent with Functional Diffusion, we construct the initial Gaussian measure using linear interpolation over a coarse grid (see \textcolor{cvprblue}{Appendix~\ref{appendix:model_3D}} for details).

In our experiments, however, we found that this framework is not well-suited for $u$-prediction FMF: training becomes unstable even at small learning rates, with severe collapse. To illustrate this behavior, we perform a 2D experiment on MNIST \cite{lecun1998mnist}, converted into signed distance fields (SDFs) and trained under the Functional Diffusion framework using FMF.
We monitor the batch-averaged spatial variance $\frac{1}{m}\sum_{j=1}^m \bigl(v_q^j - \frac{1}{m}\sum_{j=1}^m v_q^j)^2$ of the network output: an SDF should satisfy $|\nabla f| = 1$ and therefore maintain nontrivial spatial variation. Once the variance vanishes and remains near zero for an extended period (collapse), the model degenerates into a constant field and cannot recover. The experimental results are shown in \autoref{fig:training_behavior}, and see \textcolor{cvprblue}{Appendix~\ref{appendix:additional_instability}} for details and evidence that a collapsed $u$-prediction model cannot generate meaningful outputs.




We further train our $x_1$-prediction FMF on the 3D ShapeNet-CoreV2~\cite{chang2015shapenet} dataset, following the same preprocessing method as~\cite{zhang2024functional}, where each mesh is converted to a voxelized SDF and then sampled into point–value pairs. As reported in~\cite{zhang2024functional}, function-based 3D shape generation models can solve the challenging task of reconstructing an entire SDF field from as few as 64 surface points $\{C_l\}_{l=1}^{64}$. As demonstrated in \autoref{tab:reconstruction_metrics} and \autoref{fig:3D_shape_result}, our one-step formulation achieves this task with comparable accuracy, highlighting the robustness and effectiveness of the proposed $x_1$-prediction FMF. Details on the dataset processing, task and metrics description, model architecture, and training and inference procedures are provided in \textcolor{cvprblue}{Appendix~\ref{appendix:model_3D}}.

\begin{table}[h]
\centering
\caption{Quantitative comparison of reconstruction quality. The model is trained on the ShapeNet dataset, where the conditional input consists of 64 points sampled from the target surface. The model is required to reconstruct the surface based on these 64 points. Step denotes the number of inference steps.}
\begin{tabular}{lcccc}
\toprule
Method &Step& Chamfer $\downarrow$ & F-Score $\uparrow$ & Boundary $\downarrow$ \\
\midrule
Ours &1& 0.060 & 0.584 & 0.011  \\
3DS2VS&18 & 0.144 & 0.608 & 0.016 \\
FD &64& 0.101 & 0.707 & 0.012\\
\bottomrule
\end{tabular}
\vspace{1mm}
\raggedleft
\footnotesize{($\downarrow$ lower is better; $\uparrow$ higher is better.)}
\label{tab:reconstruction_metrics}
\end{table}

\section{Related Work}
\label{sec:related_work}
\paragraph{Functional Generation.} Functional generation extends generative modeling to infinite-dimensional settings, drawing theoretical support from stochastic equations on Hilbert space \cite{da2014stochastic}. It enables both training and sampling at arbitrary resolutions, making large-scale generation more computationally feasible. For instance, Infty-Brush \cite{le2024brush} demonstrates controllable image generation at resolutions up to 4096 × 4096 pixels. Recent studies have investigated discrete-time diffusion models on Hilbert space \cite{kerrigan2023diffusion,zhuang2023diffusion, zhang2024functional, lim2025score}, while concurrent works have explored their continuous-time counterparts \cite{franzese2023continuous, hagemann2025multilevel, pidstrigach2024infinite}.  Distinct from functional diffusion models, Functional Flow Matching \cite{kerrigan2024functional} avoids injecting random noise during generation, enabling the production of high-quality samples with fewer NFE (Number of Function Evaluations). Beyond diffusion and flow-based approaches, researchers have also proposed functional GANs \cite{rahman2022gan} and functional energy-based models \cite{lim2023energy}, further enriching the landscape of infinite-dimensional generative modeling.

\paragraph{Few-step Diffusion/Flow Models.} Reducing the sampling steps is vital to improve the efficiency of diffusion/flow models. Distillation techniques play a key role in enabling few-step generation. Several studies have explored distilling diffusion models \cite{salimans2022progressive, meng2023distillation, geng2023onestep, axel2024adver, luo2024diff, yin2024one, zhou2024score} and flow models \cite{liu2023flow}. In parallel, consistency models \cite{song2023consistency} were introduced as independently trainable one-step generators that do not rely on distillation. Subsequent works have focused on enhancing their training stability and sample quality \cite{song2023improved, geng2025consistency, lu2025simplifying}. Inspired by consistency models, recent research has incorporated self-consistency principles into related frameworks, such as enforcing consistency in the velocity field of Flow Matching \cite{yang2024consistency}, Shortcut Model \cite{frans2025shortcut}, and stochastic interpolation across time steps \cite{zhou2025inductive}. While standard consistency models rely on a single time variable, Flow Map Matching \cite{boffi2025flow} learns displacement maps parameterized by two time variables. Mean Flow \cite{geng2025mean} further extend this idea by learning the average velocity over time via the time derivative of the Mean Flow identity, achieving state-of-the-art one-step generation performance on ImageNet.

\begin{figure}[t]
  \centering
  \includegraphics[width=0.5\textwidth]{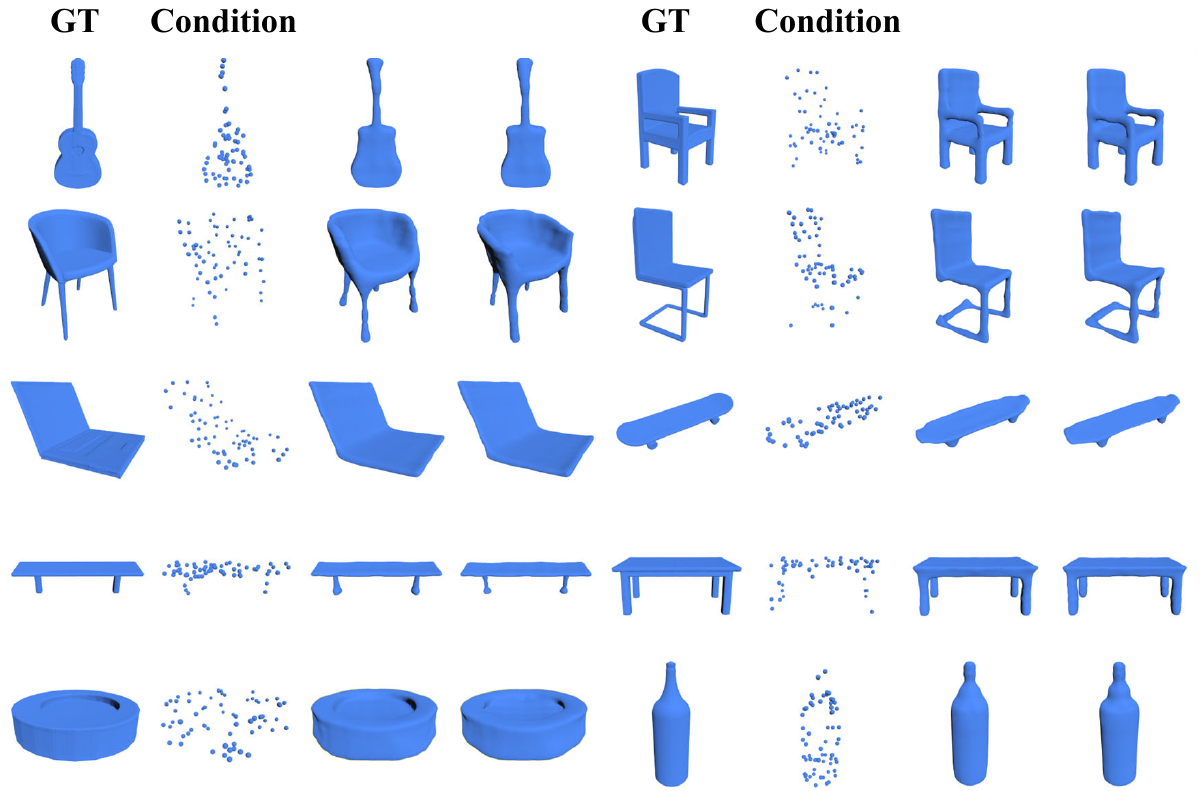}
  \caption{Results of 3D shape generation. This is a highly challenging task~\cite{zhang2024functional}, where the generative model is \textbf{ONLY} conditioned on 64 randomly sampled points from the target surface and required to reconstruct the entire geometry. We apply the $x_1$-prediction FMF within the Functional Diffusion framework, reducing the original 64-step generation process to a single step. The \textbf{GT} column shows the ground-truth surfaces, while the \textbf{Condition} column visualizes the 64 conditioning points provided to the model.}
  \label{fig:3D_shape_result}
\end{figure}

\section{Conclusion}
We proposed Functional Mean Flow as a unified one-step flow matching framework in infinite-dimensional Hilbert space. We introduced an $x_1$-prediction variant of Mean Flow, which exhibits improved training stability and robustness over the original $u$-prediction formulation. Experiments on image-function synthesis, 3D signed distance field modeling, solving PDEs, and time-series prediction demonstrate the versatility and effectiveness of our method. Future work will explore broader functional modalities and further investigate the advantages of the $x_1$-prediction formulation beyond the current domains.

{\small
\bibliographystyle{ieeenat_fullname}
\bibliography{image_generation,functional,one_step_method}

\begin{thebibliography}{59}
\providecommand{\natexlab}[1]{#1}
\providecommand{\url}[1]{\texttt{#1}}
\expandafter\ifx\csname urlstyle\endcsname\relax
  \providecommand{\doi}[1]{doi: #1}\else
  \providecommand{\doi}{doi: \begingroup \urlstyle{rm}\Url}\fi

\bibitem[Boffi et~al.(2025)Boffi, Albergo, and Vanden-Eijnden]{boffi2025flow}
Nicholas~Matthew Boffi, Michael~Samuel Albergo, and Eric Vanden-Eijnden.
\newblock Flow map matching with stochastic interpolants: A mathematical framework for consistency models.
\newblock \emph{Transactions on Machine Learning Research (TMLR)}, 2025.

\bibitem[Bolt and Van~Zanden(2025)]{bolt2025maddison}
Jutta Bolt and Jan~Luiten Van~Zanden.
\newblock Maddison-style estimates of the evolution of the world economy: A new 2023 update.
\newblock \emph{Journal of Economic Surveys}, 39\penalty0 (2):\penalty0 631--671, 2025.

\bibitem[Bond-Taylor and Willcocks(2024)]{bond2024infty}
Sam Bond-Taylor and Chris~G Willcocks.
\newblock $\infty$-diff: Infinite resolution diffusion with subsampled mollified states.
\newblock In \emph{International Conference on Learning Representations (ICLR)}, 2024.

\bibitem[Chang et~al.(2015)Chang, Funkhouser, Guibas, Hanrahan, Huang, Li, Savarese, Savva, Song, Su, et~al.]{chang2015shapenet}
Angel~X Chang, Thomas Funkhouser, Leonidas Guibas, Pat Hanrahan, Qixing Huang, Zimo Li, Silvio Savarese, Manolis Savva, Shuran Song, Hao Su, et~al.
\newblock Shapenet: An information-rich 3d model repository.
\newblock \emph{arXiv preprint arXiv:1512.03012}, 2015.

\bibitem[Choi et~al.(2020)Choi, Uh, Yoo, and Ha]{choi2020stargan}
Yunjey Choi, Youngjung Uh, Jaejun Yoo, and Jung-Woo Ha.
\newblock Stargan v2: Diverse image synthesis for multiple domains.
\newblock In \emph{IEEE Conference on Computer Vision and Pattern Recognition (CVPR)}, 2020.

\bibitem[Da~Prato and Zabczyk(2014)]{da2014stochastic}
Giuseppe Da~Prato and Jerzy Zabczyk.
\newblock \emph{Stochastic equations in infinite dimensions}.
\newblock Cambridge university press, 2014.

\bibitem[Du et~al.(2021)Du, Collins, Tenenbaum, and Sitzmann]{du2021learning}
Yilun Du, Katie Collins, Josh Tenenbaum, and Vincent Sitzmann.
\newblock Learning signal-agnostic manifolds of neural fields.
\newblock 2021.

\bibitem[Dupont et~al.(2022{\natexlab{a}})Dupont, Kim, Eslami, Rezende, and Rosenbaum]{dupont2022data}
Emilien Dupont, Hyunjik Kim, SM Eslami, Danilo Rezende, and Dan Rosenbaum.
\newblock From data to functa: Your data point is a function and you can treat it like one.
\newblock \emph{International Conference on Machine Learning (ICML)}, 2022{\natexlab{a}}.

\bibitem[Dupont et~al.(2022{\natexlab{b}})Dupont, Teh, and Doucet]{dupont2022generative}
Emilien Dupont, Yee~Whye Teh, and Arnaud Doucet.
\newblock Generative models as distributions of functions.
\newblock In \emph{International Conference on Artificial Intelligence and Statistics (AISTATS)}, 2022{\natexlab{b}}.

\bibitem[Febrero-Bande and De~La~Fuente(2012)]{febrero2012statistical}
Manuel Febrero-Bande and Manuel~Oviedo De~La~Fuente.
\newblock Statistical computing in functional data analysis: The r package fda. usc.
\newblock \emph{Journal of statistical Software}, 51:\penalty0 1--28, 2012.

\bibitem[Frans et~al.(2025)Frans, Hafner, Levine, and Abbeel]{frans2025shortcut}
Kevin Frans, Danijar Hafner, Sergey Levine, and Pieter Abbeel.
\newblock One step diffusion via shortcut models.
\newblock In \emph{International Conference on Learning Representations (ICLR)}, 2025.

\bibitem[Franzese et~al.(2023)Franzese, Corallo, Rossi, Heinonen, Filippone, and Michiardi]{franzese2023continuous}
Giulio Franzese, Giulio Corallo, Simone Rossi, Markus Heinonen, Maurizio Filippone, and Pietro Michiardi.
\newblock Continuous-time functional diffusion processes.
\newblock In \emph{Neural Information Processing Systems (NeurIPS)}, 2023.

\bibitem[Geng et~al.(2023)Geng, Pokle, and Kolter]{geng2023onestep}
Zhengyang Geng, Ashwini Pokle, and J~Zico Kolter.
\newblock One-step diffusion distillation via deep equilibrium models.
\newblock In \emph{Neural Information Processing Systems (NeurIPS)}, 2023.

\bibitem[Geng et~al.(2025{\natexlab{a}})Geng, Deng, Bai, Kolter, and He]{geng2025mean}
Zhengyang Geng, Mingyang Deng, Xingjian Bai, J~Zico Kolter, and Kaiming He.
\newblock Mean flows for one-step generative modeling.
\newblock In \emph{Neural Information Processing Systems (NeurIPS)}, 2025{\natexlab{a}}.

\bibitem[Geng et~al.(2025{\natexlab{b}})Geng, Pokle, Luo, Lin, and Kolter]{geng2025consistency}
Zhengyang Geng, Ashwini Pokle, Weijian Luo, Justin Lin, and J~Zico Kolter.
\newblock Consistency models made easy.
\newblock In \emph{International Conference on Learning Representations (ICLR)}, 2025{\natexlab{b}}.

\bibitem[Hagemann et~al.(2025)Hagemann, Mildenberger, Ruthotto, Steidl, and Yang]{hagemann2025multilevel}
Paul Hagemann, Sophie Mildenberger, Lars Ruthotto, Gabriele Steidl, and Nicole~Tianjiao Yang.
\newblock Multilevel diffusion: Infinite dimensional score-based diffusion models for image generation.
\newblock \emph{SIAM Journal on Mathematics of Data Science}, 2025.

\bibitem[Heusel et~al.(2017)Heusel, Ramsauer, Unterthiner, Nessler, and Hochreiter]{heusel2017gans}
Martin Heusel, Hubert Ramsauer, Thomas Unterthiner, Bernhard Nessler, and Sepp Hochreiter.
\newblock Gans trained by a two time-scale update rule converge to a local nash equilibrium.
\newblock In \emph{Neural Information Processing Systems (NeurIPS)}, 2017.

\bibitem[Ho et~al.(2020)Ho, Jain, and Abbeel]{ho2020denoising}
Jonathan Ho, Ajay Jain, and Pieter Abbeel.
\newblock Denoising diffusion probabilistic models.
\newblock In \emph{Neural Information Processing Systems (NeurIPS)}, 2020.

\bibitem[Inklaar et~al.(2018)Inklaar, de~Jong, Bolt, and Van~Zanden]{inklaar2018rebasing}
Robert Inklaar, Herman de Jong, Jutta Bolt, and Jan~Luiten Van~Zanden.
\newblock Rebasing'maddison': new income comparisons and the shape of long-run economic development.
\newblock 2018.

\bibitem[{International Monetary Fund}(2022)]{IMF_IFS_2022}
{International Monetary Fund}.
\newblock International financial statistics: Prices, production, and labor, labor force, 2022.

\bibitem[Jaegle et~al.(2021)Jaegle, Gimeno, Brock, Vinyals, Zisserman, and Carreira]{jaegle2021perceiver}
Andrew Jaegle, Felix Gimeno, Andy Brock, Oriol Vinyals, Andrew Zisserman, and Joao Carreira.
\newblock Perceiver: General perception with iterative attention.
\newblock In \emph{International Conference on Machine Learning (ICML)}, 2021.

\bibitem[Karras et~al.(2018)Karras, Aila, Laine, and Lehtinen]{karras2017progressive}
Tero Karras, Timo Aila, Samuli Laine, and Jaakko Lehtinen.
\newblock Progressive growing of gans for improved quality, stability, and variation.
\newblock \emph{International Conference on Learning Representations (ICLR)}, 2018.

\bibitem[Karras et~al.(2019)Karras, Laine, and Aila]{karras2019style}
Tero Karras, Samuli Laine, and Timo Aila.
\newblock A style-based generator architecture for generative adversarial networks.
\newblock In \emph{IEEE Conference on Computer Vision and Pattern Recognition (CVPR)}, 2019.

\bibitem[Kerrigan et~al.(2023{\natexlab{a}})Kerrigan, Ley, and Smyth]{kerrigan2023diffusion}
Gavin Kerrigan, Justin Ley, and Padhraic Smyth.
\newblock Diffusion generative models in infinite dimensions.
\newblock In \emph{International Conference on Artificial Intelligence and Statistics (AISTATS)}, 2023{\natexlab{a}}.

\bibitem[Kerrigan et~al.(2023{\natexlab{b}})Kerrigan, Migliorini, and Smyth]{kerrigan2024functional}
Gavin Kerrigan, Giosue Migliorini, and Padhraic Smyth.
\newblock Functional flow matching.
\newblock In \emph{International Conference on Artificial Intelligence and Statistics (AISTATS)}, 2023{\natexlab{b}}.

\bibitem[Kynk{\"a}{\"a}nniemi et~al.(2023)Kynk{\"a}{\"a}nniemi, Karras, Aittala, Aila, and Lehtinen]{kynkaanniemi2023role}
Tuomas Kynk{\"a}{\"a}nniemi, Tero Karras, Miika Aittala, Timo Aila, and Jaakko Lehtinen.
\newblock The role of imagenet classes in fr$\backslash$'echet inception distance.
\newblock In \emph{International Conference on Learning Representations (ICLR)}, 2023.

\bibitem[Le et~al.(2024)Le, Graikos, Yellapragada, Gupta, Saltz, and Samaras]{le2024brush}
Minh-Quan Le, Alexandros Graikos, Srikar Yellapragada, Rajarsi Gupta, Joel Saltz, and Dimitris Samaras.
\newblock $\infty$-brush: Controllable large image synthesis with diffusion models in infinite dimensions.
\newblock In \emph{European Conference on Computer Vision (ECCV)}, 2024.

\bibitem[LeCun(1998)]{lecun1998mnist}
Yann LeCun.
\newblock The mnist database of handwritten digits.
\newblock \emph{http://yann. lecun. com/exdb/mnist/}, 1998.

\bibitem[Li et~al.(2022)Li, Liu-Schiaffini, Kovachki, Azizzadenesheli, Liu, Bhattacharya, Stuart, and Anandkumar]{li2022learning}
Zongyi Li, Miguel Liu-Schiaffini, Nikola Kovachki, Kamyar Azizzadenesheli, Burigede Liu, Kaushik Bhattacharya, Andrew Stuart, and Anima Anandkumar.
\newblock Learning chaotic dynamics in dissipative systems.
\newblock In \emph{Neural Information Processing Systems (NeurIPS)}, 2022.

\bibitem[Lim et~al.(2025)Lim, Kovachki, Baptista, Beckham, Azizzadenesheli, Kossaifi, Voleti, Song, Kreis, Kautz, et~al.]{lim2025score}
Jae~Hyun Lim, Nikola~B Kovachki, Ricardo Baptista, Christopher Beckham, Kamyar Azizzadenesheli, Jean Kossaifi, Vikram Voleti, Jiaming Song, Karsten Kreis, Jan Kautz, et~al.
\newblock Score-based diffusion models in function space.
\newblock \emph{Journal of Machine Learning Research (JMLR)}, 2025.

\bibitem[Lim et~al.(2023)Lim, Vollmer, Wolf, and Duncan]{lim2023energy}
Jen~Ning Lim, Sebastian Vollmer, Lorenz Wolf, and Andrew Duncan.
\newblock Energy-based models for functional data using path measure tilting.
\newblock In \emph{International Conference on Artificial Intelligence and Statistics (AISTATS)}, 2023.

\bibitem[Lipman et~al.(2023)Lipman, Chen, Ben-Hamu, Nickel, and Le]{lipman2023flow}
Yaron Lipman, Ricky T.~Q. Chen, Heli Ben-Hamu, Maximilian Nickel, and Matt Le.
\newblock Flow matching for generative modeling.
\newblock In \emph{International Conference on Learning Representations (ICLR)}, 2023.

\bibitem[Liu et~al.(2023)Liu, Gong, and Liu]{liu2023flow}
Xingchao Liu, Chengyue Gong, and Qiang Liu.
\newblock Flow straight and fast: Learning to generate and transfer data with rectified flow.
\newblock In \emph{International Conference on Learning Representations (ICLR)}, 2023.

\bibitem[Lu and Song(2025)]{lu2025simplifying}
Cheng Lu and Yang Song.
\newblock Simplifying, stabilizing and scaling continuous-time consistency models.
\newblock In \emph{International Conference on Learning Representations (ICLR)}, 2025.

\bibitem[Luo et~al.(2024)Luo, Hu, Zhang, Sun, Li, and Zhang]{luo2024diff}
Weijian Luo, Tianyang Hu, Shifeng Zhang, Jiacheng Sun, Zhenguo Li, and Zhihua Zhang.
\newblock Diff-instruct: A universal approach for transferring knowledge from pre-trained diffusion models.
\newblock In \emph{Neural Information Processing Systems (NeurIPS)}, 2024.

\bibitem[Meng et~al.(2023)Meng, Rombach, Gao, Kingma, Ermon, Ho, and Salimans]{meng2023distillation}
Chenlin Meng, Robin Rombach, Ruiqi Gao, Diederik Kingma, Stefano Ermon, Jonathan Ho, and Tim Salimans.
\newblock On distillation of guided diffusion models.
\newblock In \emph{IEEE Conference on Computer Vision and Pattern Recognition (CVPR)}, 2023.

\bibitem[Orlando et~al.(2008)Orlando, Lin, Bernard, Wang, Socolar, Iversen, Hartemink, and Haase]{orlando2008global}
David~A Orlando, Charles~Y Lin, Allister Bernard, Jean~Y Wang, Joshua~ES Socolar, Edwin~S Iversen, Alexander~J Hartemink, and Steven~B Haase.
\newblock Global control of cell-cycle transcription by coupled cdk and network oscillators.
\newblock \emph{Nature}, 453\penalty0 (7197):\penalty0 944--947, 2008.

\bibitem[Pidstrigach et~al.(2024)Pidstrigach, Marzouk, Reich, and Wang]{pidstrigach2024infinite}
Jakiw Pidstrigach, Youssef Marzouk, Sebastian Reich, and Sven Wang.
\newblock Infinite-dimensional diffusion models for function spaces.
\newblock \emph{Journal of Machine Learning Research (JMLR)}, 2024.

\bibitem[Preechakul et~al.(2022)Preechakul, Chatthee, Wizadwongsa, and Suwajanakorn]{preechakul2022diffusion}
Konpat Preechakul, Nattanat Chatthee, Suttisak Wizadwongsa, and Supasorn Suwajanakorn.
\newblock Diffusion autoencoders: Toward a meaningful and decodable representation.
\newblock In \emph{Proceedings of the IEEE/CVF conference on computer vision and pattern recognition}, pages 10619--10629, 2022.

\bibitem[Radford et~al.(2021)Radford, Kim, Hallacy, Ramesh, Goh, Agarwal, Sastry, Askell, Mishkin, Clark, et~al.]{radford2021learning}
Alec Radford, Jong~Wook Kim, Chris Hallacy, Aditya Ramesh, Gabriel Goh, Sandhini Agarwal, Girish Sastry, Amanda Askell, Pamela Mishkin, Jack Clark, et~al.
\newblock Learning transferable visual models from natural language supervision.
\newblock In \emph{International conference on machine learning}, pages 8748--8763. PmLR, 2021.

\bibitem[Rahman et~al.(2022)Rahman, Florez, Anandkumar, Ross, and Azizzadenesheli]{rahman2022gan}
Md~Ashiqur Rahman, Manuel~A. Florez, Anima Anandkumar, Zachary~E. Ross, and Kamyar Azizzadenesheli.
\newblock Generative adversarial neural operators.
\newblock \emph{Transactions on Machine Learning Research (TMLR)}, 2022.

\bibitem[Salimans and Ho(2022)]{salimans2022progressive}
Tim Salimans and Jonathan Ho.
\newblock Progressive distillation for fast sampling of diffusion models.
\newblock In \emph{International Conference on Learning Representations (ICLR)}, 2022.

\bibitem[Sauer et~al.(2024)Sauer, Lorenz, Blattmann, and Rombach]{axel2024adver}
Axel Sauer, Dominik Lorenz, Andreas Blattmann, and Robin Rombach.
\newblock Adversarial diffusion distillation.
\newblock In \emph{European Conference on Computer Vision (ECCV)}, 2024.

\bibitem[Song et~al.(2021{\natexlab{a}})Song, Meng, and Ermon]{song2021denoising}
Jiaming Song, Chenlin Meng, and Stefano Ermon.
\newblock Denoising diffusion implicit models.
\newblock In \emph{International Conference on Learning Representations (ICLR)}, 2021{\natexlab{a}}.

\bibitem[Song and Dhariwal(2023)]{song2023improved}
Yang Song and Prafulla Dhariwal.
\newblock Improved techniques for training consistency models.
\newblock In \emph{International Conference on Learning Representations (ICLR)}, 2023.

\bibitem[Song and Ermon(2019)]{song2019generative}
Yang Song and Stefano Ermon.
\newblock Generative modeling by estimating gradients of the data distribution.
\newblock In \emph{Neural Information Processing Systems (NeurIPS)}, 2019.

\bibitem[Song et~al.(2021{\natexlab{b}})Song, Sohl-Dickstein, Kingma, Kumar, Ermon, and Poole]{song2021score}
Yang Song, Jascha Sohl-Dickstein, Diederik~P. Kingma, Abhishek Kumar, Stefano Ermon, and Ben Poole.
\newblock Score-based generative modeling through stochastic differential equations.
\newblock In \emph{International Conference on Learning Representations (ICLR)}, 2021{\natexlab{b}}.

\bibitem[Song et~al.(2023)Song, Dhariwal, Chen, and Sutskever]{song2023consistency}
Yang Song, Prafulla Dhariwal, Mark Chen, and Ilya Sutskever.
\newblock Consistency models.
\newblock In \emph{International Conference on Machine Learning (ICML)}, 2023.

\bibitem[Stuart(2010)]{stuart2010inverse}
Andrew~M Stuart.
\newblock Inverse problems: a bayesian perspective.
\newblock \emph{Acta numerica}, 19:\penalty0 451--559, 2010.

\bibitem[Vaswani et~al.(2017)Vaswani, Shazeer, Parmar, Uszkoreit, Jones, Gomez, Kaiser, and Polosukhin]{vaswani2017attention}
Ashish Vaswani, Noam Shazeer, Niki Parmar, Jakob Uszkoreit, Llion Jones, Aidan~N Gomez, {\L}ukasz Kaiser, and Illia Polosukhin.
\newblock Attention is all you need.
\newblock \emph{Advances in neural information processing systems}, 30, 2017.

\bibitem[Yang et~al.(2024)Yang, Zhang, Zhang, Liu, Xu, Zhang, Meng, Ermon, and Cui]{yang2024consistency}
Ling Yang, Zixiang Zhang, Zhilong Zhang, Xingchao Liu, Minkai Xu, Wentao Zhang, Chenlin Meng, Stefano Ermon, and Bin Cui.
\newblock Consistency flow matching: Defining straight flows with velocity consistency.
\newblock \emph{arXiv preprint arXiv:2407.02398}, 2024.

\bibitem[Yin et~al.(2024)Yin, Gharbi, Zhang, Shechtman, Durand, Freeman, and Park]{yin2024one}
Tianwei Yin, Micha{\"e}l Gharbi, Richard Zhang, Eli Shechtman, Fr{\'e}do Durand, William~T Freeman, and Taesung Park.
\newblock One-step diffusion with distribution matching distillation.
\newblock In \emph{IEEE Conference on Computer Vision and Pattern Recognition (CVPR)}, 2024.

\bibitem[Yu et~al.(2015)Yu, Seff, Zhang, Song, Funkhouser, and Xiao]{yu2015lsun}
Fisher Yu, Ari Seff, Yinda Zhang, Shuran Song, Thomas Funkhouser, and Jianxiong Xiao.
\newblock Lsun: Construction of a large-scale image dataset using deep learning with humans in the loop.
\newblock \emph{arXiv preprint arXiv:1506.03365}, 2015.

\bibitem[Zhang and Wonka(2024)]{zhang2024functional}
Biao Zhang and Peter Wonka.
\newblock Functional diffusion.
\newblock In \emph{IEEE Conference on Computer Vision and Pattern Recognition (CVPR)}, 2024.

\bibitem[Zhang et~al.(2022)Zhang, Nie{\ss}ner, and Wonka]{zhang20223dilg}
Biao Zhang, Matthias Nie{\ss}ner, and Peter Wonka.
\newblock 3dilg: Irregular latent grids for 3d generative modeling.
\newblock \emph{Advances in Neural Information Processing Systems}, 35:\penalty0 21871--21885, 2022.

\bibitem[Zhang et~al.(2023)Zhang, Tang, Niessner, and Wonka]{zhang20233dshape2vecset}
Biao Zhang, Jiapeng Tang, Matthias Niessner, and Peter Wonka.
\newblock 3dshape2vecset: A 3d shape representation for neural fields and generative diffusion models.
\newblock \emph{ACM Transactions On Graphics (TOG)}, 42\penalty0 (4):\penalty0 1--16, 2023.

\bibitem[Zhou et~al.(2025)Zhou, Ermon, and Song]{zhou2025inductive}
Linqi Zhou, Stefano Ermon, and Jiaming Song.
\newblock Inductive moment matching.
\newblock In \emph{International Conference on Machine Learning (ICML)}, 2025.

\bibitem[Zhou et~al.(2024)Zhou, Zheng, Wang, Yin, and Huang]{zhou2024score}
Mingyuan Zhou, Huangjie Zheng, Zhendong Wang, Mingzhang Yin, and Hai Huang.
\newblock Score identity distillation: Exponentially fast distillation of pretrained diffusion models for one-step generation.
\newblock In \emph{International Conference on Machine Learning (ICML)}, 2024.

\bibitem[Zhuang et~al.(2023)Zhuang, Abnar, Gu, Schwing, Susskind, and Bautista]{zhuang2023diffusion}
Peiye Zhuang, Samira Abnar, Jiatao Gu, Alex Schwing, Joshua~M Susskind, and Miguel~Angel Bautista.
\newblock Diffusion probabilistic fields.
\newblock In \emph{International Conference on Learning Representations (ICLR)}, 2023.

\end{thebibliography}
}

\clearpage
\appendix
In this appendix, we provide background theorems of Functional Flow Matching (\autoref{appendix:ffm_theorem}), theoretical derivations of our method (\autoref{appendix:missing_proofs}),implementation details (\autoref{appendix:model}), example Python code (\autoref{appendix:implementation}), and additional experiments and results (\autoref{appendix:additional_results}).

\section{Related Theorem in Functional Flow Matching}\label{appendix:ffm_theorem}
In this section, we elaborate on key theorems from \cite{kerrigan2024functional}, which provide the theoretical foundation for functional flow matching.  Flow matching aims to learn $u_t^\theta$ by minimizing $\mathcal{L}(\theta) = \mathbb{E}_{t,g\sim\mu_t} \big[\|u_t(g) - u_t^\theta(g)\|^2_\mathcal{F}\big]$.   However, since the reference function $u_t(g)$ here does not exist in closed form, functional flow matching construct a conditional velocity field $u_t^f(g)$ to serve as the optimization target for $u_t^\theta(g)$ instead.

The conditional velocity $u_t^f(g)$ induces a flow $\phi_t^f$ by \autoref{eq:path_velocity_definition} that push-forward $\mu_0$  to $\mu_t^f=(\phi_t^f)_\sharp \mu_0$. With $\mu_t^f$, the marginal measure path $\mu_t$ and the marginal velocity field $u_t$ can be obtained by taking the expectation with respect to the data measure $\nu$ by \autoref{eq:ecpectional_conditional}.  However, the connection between the marginal measure path $\mu_t$ and the velocity field $u_t$, derived from the expectations of their conditional counterparts $\mu^f_t$ and $u^f_t$, is non-trivial. Under some smoothness conditions, \cite{kerrigan2024functional} established this relationship in the following theorem.
\begin{theorem}\label{thm:ffm_thm1}~~~~~~~~~~~~~~~~~~~~~~~~~~~~~~
    Assume that $\int_{0}^{1} \int_{\mathcal{F}}\int_{\mathcal{F}} 
    \|u_t^f(g)\|\mathrm{d}\mu_t^f(g)\mathrm{d}\nu(f)\mathrm{d}t < \infty$.  If $\mu_t^f \ll \mu_t$ for $\nu$-a.e.\ $f$ and almost every $t \in [0,1]$,  then the vector field
    \begin{equation}
        \begin{aligned}
u_t(g) = \int_{\mathcal{F}} 
    u_t^f(g)  \frac{\mathrm{d}\mu_t^f}{\mathrm{d}\mu_t}(g)  \mathrm{d}\nu(f)            
        \end{aligned}
    \end{equation}
generates the marginal path of measures $(\mu_t)_{t \in [0,1]}$ which are defined as $\mu_t(A) =\int_{\mathcal{F}}\mu_t^f(A)\mathrm{d}\nu(f)$ via \autoref{eq:ecpectional_conditional}, $\forall A\in \mathcal{B}(\mathcal{F})$.  That is, $(u_t)_{t \in [0,1]}$ and $(\mu_t)_{t \in [0,1]}$ jointly satisfy the continuity \autoref{eq:weak-form}.  
Here, $\frac{\mathrm{d}\mu_t^f}{\mathrm{d}\mu_t}$ denotes the Radon–Nikodym derivative of the conditional measure with respect to the marginal.
\end{theorem}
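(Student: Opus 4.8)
The plan is to verify the weak continuity \autoref{eq:weak-form} for the marginal pair $(u_t,\mu_t)$ by marginalizing, over the data measure $\nu$, the continuity equation satisfied by each conditional pair $(u_t^f,\mu_t^f)$. First I would record that, since $\mu_t^f=(\phi_t^f)_\sharp\mu_0$ with $\phi_t^f$ the flow generated by $u_t^f$ through \autoref{eq:path_velocity_definition}, differentiating $t\mapsto\int_{\mathcal{F}}\psi(g,t)\,\mathrm{d}\mu_t^f(g)=\int_{\mathcal{F}}\psi(\phi_t^f(x),t)\,\mathrm{d}\mu_0(x)$ and applying the chain rule with $\tfrac{\mathrm{d}}{\mathrm{d}t}\phi_t^f=u_t^f\circ\phi_t^f$ gives, after integrating in $t$ and using that the temporal boundary contributions vanish for admissible test functions,
\begin{equation}
\int_0^1\!\!\int_{\mathcal{F}}\big(\partial_t\psi(g,t)+\langle u_t^f(g),\nabla_g\psi(g,t)\rangle_\mathcal{F}\big)\,\mathrm{d}\mu_t^f(g)\,\mathrm{d}t=0
\end{equation}
for $\nu$-a.e.\ $f$. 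The whole argument then reduces to integrating this identity against $\mathrm{d}\nu(f)$ and commuting the $f$-integral inside.

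Second, I would treat the two resulting terms separately. For the time-derivative term the defining relation $\mu_t(A)=\int_{\mathcal{F}}\mu_t^f(A)\,\mathrm{d}\nu(f)$ immediately yields $\int_{\mathcal{F}}\int_{\mathcal{F}}\partial_t\psi\,\mathrm{d}\mu_t^f\,\mathrm{d}\nu(f)=\int_{\mathcal{F}}\partial_t\psi\,\mathrm{d}\mu_t$. For the velocity term, the absolute continuity hypothesis $\mu_t^f\ll\mu_t$ lets me write $\mathrm{d}\mu_t^f=\tfrac{\mathrm{d}\mu_t^f}{\mathrm{d}\mu_t}\,\mathrm{d}\mu_t$ and then pull the bounded linear pairing with $\nabla_g\psi$ through the $\nu$-integral,
\begin{align}
&\int_{\mathcal{F}}\!\int_{\mathcal{F}}\langle u_t^f(g),\nabla_g\psi\rangle_\mathcal{F}\,\tfrac{\mathrm{d}\mu_t^f}{\mathrm{d}\mu_t}(g)\,\mathrm{d}\mu_t(g)\,\mathrm{d}\nu(f)\notag\\
&=\int_{\mathcal{F}}\Big\langle\int_{\mathcal{F}}u_t^f(g)\tfrac{\mathrm{d}\mu_t^f}{\mathrm{d}\mu_t}(g)\,\mathrm{d}\nu(f),\,\nabla_g\psi\Big\rangle_\mathcal{F}\mathrm{d}\mu_t(g),
\end{align}
whose inner integral is precisely the definition of $u_t(g)$ in the statement. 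Substituting both simplifications back reproduces \autoref{eq:weak-form} for $(u_t,\mu_t)$, which is the claim.

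The hard part will be the measure-theoretic justification of these interchanges in infinite dimensions rather than the algebra. Specifically, I must verify that Fubini's theorem applies to the joint $(f,g,t)$ integration, which is exactly where the hypothesis $\int_0^1\!\int_{\mathcal{F}}\!\int_{\mathcal{F}}\|u_t^f(g)\|\,\mathrm{d}\mu_t^f(g)\,\mathrm{d}\nu(f)\,\mathrm{d}t<\infty$ enters, since it dominates the velocity term once paired against the bounded $\nabla_g\psi$; that the Radon--Nikodym derivatives $\tfrac{\mathrm{d}\mu_t^f}{\mathrm{d}\mu_t}$ exist $\mu_t$-a.e.\ and are jointly measurable in $(f,g)$, guaranteed by $\mu_t^f\ll\mu_t$; and that the Bochner integral defining $u_t(g)$ is well defined $\mu_t$-a.e., so that pulling it out of the Hilbert inner product is legitimate. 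Checking these conditions carefully---each of which is isolated in the theorem's hypotheses---is where the real work lies.
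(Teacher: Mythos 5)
Your proposal is correct, but note that this paper never proves Theorem~\ref{thm:ffm_thm1} itself: it is quoted verbatim as background from \cite{kerrigan2024functional}, and your marginalization argument (conditional continuity equations, integration against $\nu$, Fubini justified by the stated integrability hypothesis, and pulling the Bochner integral out of the inner product to recognize $u_t$) is essentially the standard proof given in that cited work. The technical caveats you flag — joint measurability of the Radon--Nikodym derivatives, well-definedness of the Bochner integral $\mu_t$-a.e., and the role of the $L^1$ hypothesis once paired with the bounded $\nabla_g\psi$ of admissible test functions — are exactly the points that proof must and does address, so nothing is missing.
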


In the theorem above, the condition $\mu_t^f \ll \mu_t$ denotes that $\mu_t^f$ is absolutely continuous with respect to $\mu_t$, while $\nu$-a.e. indicates that the statement holds for $\nu$-almost every $f$. However, the requirement $\mu_t^f \ll \mu_t$ for $\nu$-a.e. $f$ and almost every $t \in [0,1]$ is generally difficult to guarantee, since the marginal distribution $\mu_t$ itself is hard to know. To address this issue, \cite{kerrigan2024functional} established the following theorem, which reformulates this condition in terms of the known, constructible conditional measures $\mu_t^f$.
\begin{theorem}
Consider a probability measure $\nu$ on $\mathcal{F}$ and a collection of measures $\mu_t^f$ parameterized by $f \in \mathcal{F}$. 
Suppose that the collection of parameterized measures $\mu_t^f$ is $\nu$-a.e. mutually absolutely continuous. 
Define the marginal measure $\mu_t$ via \autoref{eq:ecpectional_conditional}. 
Then, $\mu_t^f \ll \mu_t$ for $\nu$-a.e.\ $f$.
\end{theorem}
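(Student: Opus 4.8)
The plan is to unfold the definition of mutual absolute continuity and then exploit the nonnegativity of the conditional measures. First I would record precisely what the hypothesis supplies: ``$\mu_t^f$ is $\nu$-a.e. mutually absolutely continuous'' means there is a $\nu$-null set $N \subset \mathcal{F}$ such that for every pair $f, f' \in \mathcal{F} \setminus N$ the measures $\mu_t^f$ and $\mu_t^{f'}$ are equivalent, i.e. they share exactly the same null sets. The essential feature to keep in mind is that this exceptional set $N$ does \emph{not} depend on the Borel set being tested.

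Next I would fix a Borel set $A \in \mathcal{B}(\mathcal{F})$ with $\mu_t(A) = 0$ and show $\mu_t^f(A) = 0$ for $\nu$-a.e. $f$. Since $\mu_t(A) = \int_{\mathcal{F}} \mu_t^{f'}(A)\,\mathrm{d}\nu(f')$ by the definition in \autoref{eq:ecpectional_conditional}, and the integrand $f' \mapsto \mu_t^{f'}(A)$ is nonnegative (and measurable under the standing FFM regularity assumptions), a nonnegative measurable function whose integral vanishes must vanish $\nu$-almost everywhere. Hence there is a $\nu$-null set $M_A$, possibly depending on $A$, with $\mu_t^{f'}(A) = 0$ for all $f' \notin M_A$.

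The key step is to remove the dependence on $A$ by invoking mutual absolute continuity. Because $\nu(M_A \cup N) = 0 < 1 = \nu(\mathcal{F})$, I can choose a witness $f^\ast \in \mathcal{F} \setminus (M_A \cup N)$, for which $\mu_t^{f^\ast}(A) = 0$. Now for an arbitrary $f \notin N$, the equivalence of $\mu_t^f$ and $\mu_t^{f^\ast}$ (both indices lying outside $N$) forces $\mu_t^f(A) = 0$. Thus for every $f \notin N$ and every $A$ with $\mu_t(A) = 0$ we obtain $\mu_t^f(A) = 0$, which is exactly the statement $\mu_t^f \ll \mu_t$; since $\nu(N) = 0$, this holds for $\nu$-a.e. $f$, completing the argument.

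I expect the main obstacle to be precisely this decoupling of the exceptional set from the tested set $A$. The nonnegativity argument alone yields only the $A$-dependent null set $M_A$, so no single $f$ could be declared ``good'' for all $A$ simultaneously from that step in isolation; mutual absolute continuity is exactly what supplies an $A$-independent null set $N$ and allows a single witness $f^\ast$ to transport the null property to every $f \notin N$. A minor point to confirm along the way is the measurability of $f' \mapsto \mu_t^{f'}(A)$, which is needed for the marginal integral to be well defined and is guaranteed by the regularity conditions inherited from \cite{kerrigan2024functional}.
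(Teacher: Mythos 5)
Your proof is correct. Note that the paper itself gives no proof of this statement---it is restated in Appendix~\ref{appendix:ffm_theorem} as background quoted from \cite{kerrigan2024functional}---but your argument (the nonnegative integrand $f'\mapsto\mu_t^{f'}(A)$ vanishes off an $A$-dependent null set $M_A$, and mutual absolute continuity then transports nullity from a single witness $f^\ast\notin M_A\cup N$ to every $f\notin N$, yielding the $A$-independent exceptional set) is exactly the standard proof used in that reference, with the key decoupling step correctly identified.
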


Furthermore, \cite{kerrigan2024functional} employed the Feldman–H\'ajek theorem~\cite{da2014stochastic} together with Lemma 6.15 of~\cite{stuart2010inverse} to show that, under the conditional velocity and corresponding conditional measures chosen in \autoref{eq:selection_conditional}, the assumption in the above theorem—that the measures $\mu_t^f$ are $\nu$-a.e.\ mutually absolutely continuous—holds as long as the data distribution is supported on the Cameron–Martin space of $C_0$, i.e., $\nu(C^{1/2}_0(\mathcal{F})) = 1$.  In practice, one typically does not verify whether the data distribution is supported on $C^{1/2}_0(\mathcal{F})$; instead, this condition is often enforced implicitly through data preprocessing~\cite{bond2024infty,lim2023energy}. However, in the experiments reported in~\cite{kerrigan2024functional}, the authors observed that such preprocessing was not strictly necessary.

Using the constructed conditional velocity, the conditional loss can be written as $\mathcal{L}_c(\theta)
= \mathbb{E}_{t,f,g\sim \mu_t^f}\!\left[\|u_t^f(g) - u^\theta_t(g)\|_\mathcal{F}^2\right]$ in \autoref{eq:conditional_loss_ffm}.  Based on the above \autoref{thm:ffm_thm1}, \cite{kerrigan2024functional} proved that optimizing $\mathcal{L}(\theta)$ is equivalent to optimizing $\mathcal{L}_c(\theta)$, where the optimization reference function $u_t^f(g)$ is known, thus allowing directly optimizing $\mathcal{L}_c(\theta)$ to obatin $u^\theta_t(g)$.

\begin{theorem}
Assume that the true and model vector fields are square-integrable, i.e., $\int_{0}^{1} \int_{\mathcal{F}} \|u_t(g)\|^2_\mathcal{F}  \mathrm{d}\mu_t(g)  \mathrm{d}t < \infty
\quad$ and $\int_{0}^{1} \int_{\mathcal{F}} \|u^\theta_t(g\|_\mathcal{F}^2  \mathrm{d}\mu_t(g)  \mathrm{d}t < \infty$.
Then, $\mathcal{L}(\theta) = \mathcal{L}_c(\theta) + C$, where $C \in \mathbb{R}$ is a constant independent of $\theta$.
\end{theorem}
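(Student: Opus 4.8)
The plan is to prove the identity by expanding both squared-norm objectives in the Hilbert-space inner product and matching the resulting pieces term by term, exactly as in the finite-dimensional conditional flow matching argument but with Euclidean integrals replaced by Bochner integrals over $\mathcal{F}$. Writing $\|u_t^f(g) - u^\theta_t(g)\|_\mathcal{F}^2 = \|u_t^f(g)\|_\mathcal{F}^2 - 2\langle u_t^f(g), u^\theta_t(g)\rangle_\mathcal{F} + \|u^\theta_t(g)\|_\mathcal{F}^2$ and analogously for the marginal integrand, I would split each loss into a $\theta$-independent quadratic term, a cross term, and a quadratic-in-$u^\theta$ term. The two $\theta$-independent pieces, namely $\mathbb{E}_t\!\int_{\mathcal{F}}\|u_t(g)\|_\mathcal{F}^2\,\mathrm{d}\mu_t(g)$ and $\mathbb{E}_t\!\int_{\mathcal{F}}\!\int_{\mathcal{F}}\|u_t^f(g)\|_\mathcal{F}^2\,\mathrm{d}\mu_t^f(g)\,\mathrm{d}\nu(f)$, are finite by the square-integrability hypothesis and depend only on the fixed reference fields, so their difference becomes the constant $C$.

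First I would match the quadratic-in-$u^\theta$ terms. The marginalization identity $\mu_t(A)=\int_{\mathcal{F}}\mu_t^f(A)\,\mathrm{d}\nu(f)$ from \autoref{eq:ecpectional_conditional}, together with Tonelli's theorem (applicable because $\|u^\theta_t\|_\mathcal{F}^2 \ge 0$), gives $\int_{\mathcal{F}}\!\int_{\mathcal{F}}\|u^\theta_t(g)\|_\mathcal{F}^2\,\mathrm{d}\mu_t^f(g)\,\mathrm{d}\nu(f) = \int_{\mathcal{F}}\|u^\theta_t(g)\|_\mathcal{F}^2\,\mathrm{d}\mu_t(g)$, so these terms coincide.

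The crucial step is matching the cross terms. Substituting the definition $u_t(g)=\int_{\mathcal{F}} u_t^f(g)\,\frac{\mathrm{d}\mu_t^f}{\mathrm{d}\mu_t}(g)\,\mathrm{d}\nu(f)$ from \autoref{eq:ecpectional_conditional} and using that the bounded linear functional $\langle\,\cdot\,, u^\theta_t(g)\rangle_\mathcal{F}$ commutes with the Bochner integral, I obtain $\langle u_t(g), u^\theta_t(g)\rangle_\mathcal{F} = \int_{\mathcal{F}}\langle u_t^f(g), u^\theta_t(g)\rangle_\mathcal{F}\,\frac{\mathrm{d}\mu_t^f}{\mathrm{d}\mu_t}(g)\,\mathrm{d}\nu(f)$. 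Integrating against $\mathrm{d}\mu_t(g)$, exchanging the $\nu$ and $\mu_t$ integrations, and folding the Radon--Nikodym derivative back via the defining relation $\frac{\mathrm{d}\mu_t^f}{\mathrm{d}\mu_t}(g)\,\mathrm{d}\mu_t(g)=\mathrm{d}\mu_t^f(g)$ --- valid since $\mu_t^f \ll \mu_t$ for $\nu$-a.e.\ $f$ by the Functional Flow Matching theorems quoted above --- transforms the marginal cross term into precisely $\mathbb{E}_t\!\int_{\mathcal{F}}\!\int_{\mathcal{F}}\langle u_t^f(g), u^\theta_t(g)\rangle_\mathcal{F}\,\mathrm{d}\mu_t^f(g)\,\mathrm{d}\nu(f)$, the conditional cross term. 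Since the cross terms and the quadratic-in-$u^\theta$ terms now agree, subtracting the two expanded losses leaves only the $\theta$-free difference, yielding $\mathcal{L}(\theta) = \mathcal{L}_c(\theta) + C$.

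The main obstacle is rigorously justifying the interchange of integration order for the sign-indefinite cross term, where Tonelli does not apply. I would control it by the Cauchy--Schwarz bound $|\langle u_t^f(g), u^\theta_t(g)\rangle_\mathcal{F}| \le \|u_t^f(g)\|_\mathcal{F}\,\|u^\theta_t(g)\|_\mathcal{F}$ followed by Cauchy--Schwarz on the product measure $\mathrm{d}\mu_t^f\,\mathrm{d}\nu$; the two square-integrability hypotheses then ensure absolute integrability, so Fubini's theorem legitimizes the swap. A secondary technical point is the joint measurability of $(f,g)\mapsto u_t^f(g)$ and of the Radon--Nikodym derivative, which is supplied by the smoothness conditions of Functional Flow Matching assumed in the statement; these also guarantee that the Bochner integral defining $u_t$ is well-posed.
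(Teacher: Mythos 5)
Your proposal is correct and takes essentially the same approach as the paper: the paper quotes this statement as background from \cite{kerrigan2024functional} without reproving it, but its proofs of the analogous mean-flow equivalences (Theorems \ref{thm:loss-equivalence} and \ref{thm:loss-equivalence_x1}) use precisely your decomposition---expand the squared norm, absorb the $\theta$-independent terms into $C$, match the $\|u^\theta_t\|^2$ terms via the marginalization identity, and match the cross terms by commuting the inner product with the Bochner integral, applying Fubini--Tonelli, and folding the Radon--Nikodym derivative back into $\mathrm{d}\mu_t^f$. Your explicit Cauchy--Schwarz justification of the integration-order swap is, if anything, more careful than the paper's one-line invocation of Fubini--Tonelli.
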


\section{Missing Proofs}\label{appendix:missing_proofs}
In the following analysis, for notational simplicity and without causing ambiguity, we will use the same symbol $\|\cdot\|$ to denote both the norm $\|h\|_{\mathcal{F}}$ of a function $h \in \mathcal{F}$ and the operator norm $\|h\|_{\mathcal{L}(\mathcal{F},\mathcal{F})}$ of a bounded linear map $O:\mathcal{F} \to \mathcal{F}$.  For instance, $Dv_t(g)$ is a bounded linear operator, mapping $h \in \mathcal{F}$ to $Dv_t(g)[h] \in \mathcal{F}$, so its norm is written as $\|Dv_t(g)\|$ for simplicity.

\subsection{Proof of \textcolor{cvprblue}{Statement \ref{thm:statement}}}\label{appendix:statement}
\paragraph{\textcolor{cvprblue}{Statement \ref{thm:statement}}}(Mismatch Between Flow and Marginals of Conditional Flow)
In general, the marginal flow $\phi^{(1)}_{t\to r}(g) =\int_{\mathcal{F}} \phi^f_{t\to r}(g)\frac{\mathrm{d}\mu_t^f}{\mathrm{d}\mu_t}(g)\mathrm{d}\nu(f)$ obtained by taking the expectation over the conditional two-parameter flows $\phi_{t\to r}^f = \phi_r^f \circ (\phi_t^f)^{-1}$ is not equivalent to the two-parameter flow $\phi^{(2)}_{t\to r} = \phi_r \circ (\phi_t)^{-1}$. Here, the superscripts $^{(1)}$ and $^{(2)}$ denote two different ways of computing the marginal two-parameter flow.
\begin{proof}  The theorem essentially reveals a discrepancy between the mean of instantaneous velocity fields and the mean of non-instantaneous flow trajectories. 

To prove that $\phi^{(1)}_{t\to r}(g) \neq \phi^{(2)}_{t\to r}(g)$, it suffices to show that $\frac{\mathrm{d}}{\mathrm{d}r}\phi^{(1)}_{t\to r}(g) \neq \frac{\mathrm{d}}{\mathrm{d}r}\phi^{(2)}_{t\to r}(g)$ at $t=0$. We therefore compute and compare
$\frac{\mathrm{d}}{\mathrm{d}r}\phi^{(1)}_{t\to r}(g)$ and $\frac{\mathrm{d}}{\mathrm{d}r}\phi^{(2)}_{t\to r}(g)$ when $t=0$.

\begin{equation}
    \begin{aligned}
        &\frac{\mathrm{d}}{\mathrm{d}r}\phi^{(1)}_{t\to r}(g)\\
        &\overset{\textcircled{1}}{=}\frac{\mathrm{d}}{\mathrm{d}r}\int_{\mathcal{F}} \phi^f_{t\to r}(g)\frac{\mathrm{d}\mu_t^f}{\mathrm{d}\mu_t}(g)\mathrm{d}\nu(f)\\
        &=\int_{\mathcal{F}} \frac{\mathrm{d}}{\mathrm{d}r}(\phi^f_{t\to r}(g))\frac{\mathrm{d}\mu_t^f}{\mathrm{d}\mu_t}(g)\mathrm{d}\nu(f)\\
        &\overset{\textcircled{2}}{=}\int_{\mathcal{F}} u_r^f(\phi^f_{t\to r}(g))\frac{\mathrm{d}\mu_t^f}{\mathrm{d}\mu_t}(g)\mathrm{d}\nu(f)\\
        &\overset{\textcircled{3}}{=}\int_{\mathcal{F}} u_r^f(\phi^f_{r}(g))\mathrm{d}\nu(f)\\
        &\overset{\textcircled{4}}{=}\int_{\mathcal{F}}(f-(1-\sigma_{\text{min}})g)\mathrm{d}\nu(f)\\
        &=\int_{\mathcal{F}}f\mathrm{d}\nu(f)-(1-\sigma_{\text{min}})g\\
        &=m_\nu-(1-\sigma_{\text{min}})g,\\
    \end{aligned}
\end{equation}
where $m_\nu$ denotes the expectation of the dataset distribution $\nu$, which is a constant depending only on the dataset.
In the above derivation, $\textcircled{1}$ follows from the definition of $\phi^{(1)}_{t\to r}$,  $\textcircled{2}$ follows from the definition of the two-parameter flow introduced in \autoref{sec:functional_meanflow},  $\textcircled{3}$ substitutes $t=0$ and uses the facts that $\mu_0^f = \mu_0$ and $\phi^f_{0\to r} = \phi^f_{r}$, and $\textcircled{4}$ follows from the specific choice of the conditional flow defined in \autoref{eq:selection_conditional}.

\begin{equation}
    \begin{aligned}
        &\frac{\mathrm{d}}{\mathrm{d}r}\phi^{(2)}_{t\to r}(g)\\
        &\overset{\textcircled{1}}{=} \frac{\mathrm{d}}{\mathrm{d}r}\phi_r\circ \phi_t^{-1}(g)\\
        &\overset{\textcircled{2}}{=} u_r(\phi_r\circ \phi_t^{-1}(g))\\
        &\overset{\textcircled{3}}{=} u_r(\phi_r(g)).\\
    \end{aligned}
\end{equation}
In the above derivation, $\textcircled{1}$ follows from the definition of $\phi^{(2)}_{t\to r}$,  $\textcircled{2}$ follows from the definition of the flow in \autoref{eq:path_velocity_definition}, and $\textcircled{3}$ substitutes $t=0$ and uses the facts that $\phi_{0} = \text{Id}_\mathcal{F}$.

For $\frac{\mathrm{d}}{\mathrm{d}r}\phi^{(1)}_{t\to r}(g)- \frac{\mathrm{d}}{\mathrm{d}r}\phi^{(2)}_{t\to r}(g)$ we have:
\begin{equation}
    \begin{aligned}
        &\frac{\mathrm{d}}{\mathrm{d}r}\phi^{(1)}_{t\to r}(g)- \frac{\mathrm{d}}{\mathrm{d}r}\phi^{(2)}_{t\to r}(g)\\
        &=m_\nu-(1-\sigma_{\text{min}})g-u_r(\phi_r(g)).
    \end{aligned}
\end{equation}

Therefore, for $\phi^{(1)}_{t\to r} = \phi^{(2)}_{t\to r}$ to hold, a necessary condition is $\dot \phi_r(g) = u_r(\phi_r(g)) = m_\nu-(1-\sigma_{\text{min}})g$,$\forall g\in \mathcal{F}$ and $t\in [0,1]$, which can be solved as 
\begin{equation}
    \begin{aligned}
        \phi_r(g) = (m_\nu-(1-\sigma_{\text{min}})g)r+g, \forall g \in \mathcal{F},\ t \in [0,1].
    \end{aligned}
\end{equation}
This implies that $\nu = \mu_1 = (\phi_1)_\sharp \mu_0 = \mathcal{N}(m_\nu,\sigma_\text{min}^2C_0)$, which contradicts the arbitrariness of the dataset distribution $\nu$. Therefore, the equality $\phi^{(1)}_{t\to r} = \phi^{(2)}_{t\to r}$ cannot be satisfied.
\end{proof}

\subsection{Supporting Lemmas for \autoref{thm:flow-diff}}\label{sec:flow-diff_supporting}
As the computation of the Fréchet derivative of $\phi_{t\to r}$ is required in Theorem \ref{thm:flow-diff}, we begin by establishing the following lemma, which asserts the Fréchet differentiability of $\phi_{t\to r}$.

\begin{lemma}[Fr\'echet differentiability of $\phi_{t\to r}$ in Hilbert space]\label{thm:differentiability_of_flow}
For every radius $R>0$ and $\mathcal{B}_R=\{g\in \mathcal{F}|\|g\|< R\}$, assume $\{u_t\}_{t\in[0,1]}$ satisfies:
\begin{enumerate}
    \item[\textnormal{(A1)}] (\textit{Continuity}) $u(t,x)$ is measurable and integrable in $t$ and Lipschitz continuous in $x\in \mathcal{B}_R$, which means there exists integrable $L_R \in L^1(0,1)$, i.e., $\int_0^1 |L_R(t)|\mathrm{d}t<\infty$ such that
    \begin{equation}
        \begin{aligned}
        \|u_t(x) - u_t(y)\| \le L_R(t)\|x - y\|, \forall x,y \in \mathcal{B}_R;            
        \end{aligned}
    \end{equation}
    \item[\textnormal{(A2)}] (\textit{Bounded Fr\'echet differentiability}) For each $t$, $u_t \in C^1(\mathcal{F}; \mathcal{F})$ is continuously Fr\'echet differentiable, and there exists integrable $M_R \in L^1(0,1)$, i.e., $\int_0^1 |M_R(t)|\mathrm{d}t<\infty$ such that
        \begin{equation}
        \begin{aligned}
        \|Du_t(x)\| \le M_R(t), \forall x\in \mathcal{B}_R.
        \end{aligned}
    \end{equation}
 \end{enumerate}
Then the associated two-parameter flow $\phi_{t\to r}(g)$ is continous for $t$ and $r$ and is Fréchet differentiable for all $g \in \mathcal{B}_R$ satisfying $\phi_{t\to \tau}(g) \in \mathcal{B}_R$ for all $\tau \in [t,r]$.
Let $J_r(g) = D\phi_{t\to r}(g)$. Then $J_r(g)$ satisfies the equation 
\begin{equation}
    \begin{aligned}
 \frac{\partial}{\partial r}J_r(g) = Du_r(\phi_{t\to r}(g))\circ J_r(g),\quad J_t(g) = I,       
    \end{aligned}
\end{equation}
and $D\phi_{t\to r}(g)$ is continuous with respect to $g$.
\end{lemma}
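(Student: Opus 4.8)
The plan is to prove this as the Hilbert-space Carathéodory analogue of the classical theorem on smooth dependence of ODE solutions on initial conditions, working throughout with the integral-equation form of the flow and closing every estimate with Grönwall's inequality. I would first recast the flow as
$$\phi_{t\to r}(g) = g + \int_t^r u_s(\phi_{t\to s}(g))\,\mathrm{d}s,$$
which is well-posed under (A1) by a Carathéodory/Picard argument: the $L^1$-in-time Lipschitz bound $L_R$ yields a unique absolutely continuous solution for as long as the trajectory remains in $\mathcal{B}_R$, which is exactly the standing hypothesis $\phi_{t\to\tau}(g)\in\mathcal{B}_R$ for $\tau\in[t,r]$. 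Lipschitz dependence on $g$, hence continuity in $g$, is then immediate: setting $\delta(r)=\|\phi_{t\to r}(g+h)-\phi_{t\to r}(g)\|$ and applying (A1) inside the integral gives $\delta(r)\le \|h\| + \int_t^r L_R(s)\delta(s)\,\mathrm{d}s$, so Grönwall yields $\delta(r)\le \|h\|\exp(\int_t^r L_R(s)\,\mathrm{d}s)$. Joint continuity in $(t,r)$ follows from integrability of $s\mapsto u_s(\phi_{t\to s}(g))$ together with the same bound.

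Next I would solve the linear variational equation for the candidate derivative. In the Banach space $\mathcal{L}(\mathcal{F},\mathcal{F})$, the integral equation $J_r = I + \int_t^r Du_s(\phi_{t\to s}(g))\circ J_s\,\mathrm{d}s$ has a unique solution by Picard iteration, since (A2) gives $\|Du_s(\phi_{t\to s}(g))\|\le M_R(s)\in L^1(0,1)$; differentiating recovers the asserted ODE $\partial_r J_r = Du_r(\phi_{t\to r}(g))\circ J_r$ with $J_t=I$, and Grönwall supplies the a priori bound $\|J_r\|\le \exp(\int_t^r M_R(s)\,\mathrm{d}s)$.

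The heart of the argument, and the main obstacle, is showing this $J_r$ is genuinely the Fréchet derivative, i.e. that $w_r(h)=\phi_{t\to r}(g+h)-\phi_{t\to r}(g)-J_r(g)[h]$ is $o(\|h\|)$ uniformly on $[t,r]$. I would subtract the integral equations for $\phi$ and for $J_r[h]$ and insert the fundamental theorem of calculus for the $C^1$ map $u_s$,
$$u_s(\phi_{t\to s}(g+h))-u_s(\phi_{t\to s}(g)) = \int_0^1 Du_s\big(\phi_{t\to s}(g)+\lambda\,\Delta_s\big)\,\mathrm{d}\lambda\;[\Delta_s],$$
where $\Delta_s=\phi_{t\to s}(g+h)-\phi_{t\to s}(g)$. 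Writing $\Delta_s=w_s+J_s[h]$ and regrouping produces $w_r(h) = \int_t^r Du_s(\phi_{t\to s}(g))[w_s(h)]\,\mathrm{d}s + R_r(h)$, where the remainder $R_r(h)$ collects the difference between $Du_s$ evaluated along the perturbed chord and at the base point, applied to $\Delta_s$. Using continuity of $Du_s$ (from $u_s\in C^1$), the Lipschitz bound $\|\Delta_s\|\le C\|h\|$ from the first step, and dominated convergence against $M_R\in L^1$ (the integrand tends to $0$ pointwise in $s$ as $h\to 0$ and is bounded by $2M_R(s)$), I would conclude $\|R_r(h)\|=o(\|h\|)$; Grönwall applied to $\|w_r(h)\|$ then closes the estimate. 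This step is delicate precisely because the finite-dimensional shortcuts are unavailable: one must use the infinite-dimensional integral mean-value representation, obtain uniform-in-$s$ control of the chord remainder, and justify an $L^1$-dominated passage to the limit.

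Finally, continuity of $g\mapsto D\phi_{t\to r}(g)=J_r(g)$ in operator norm follows from continuous dependence of the variational equation on its coefficient. Since $Du_s$ is continuous and $g\mapsto\phi_{t\to s}(g)$ is continuous, the coefficient $A_s(g)=Du_s(\phi_{t\to s}(g))$ is continuous in $g$ with the uniform domination $\|A_s(g)\|\le M_R(s)\in L^1(0,1)$; subtracting the two variational integral equations for $g$ and $g'$ and applying Grönwall once more gives $\|J_r(g')-J_r(g)\|\to 0$ as $g'\to g$, completing the proof.
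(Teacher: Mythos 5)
Your proposal is correct, and it reaches the same conclusions as the paper, but the core step is organized along a genuinely different route. The paper works directly with the difference quotients $\eta^{\epsilon}_r=\tfrac{1}{\epsilon}\bigl(\phi_{t\to r}(g+\epsilon h)-\phi_{t\to r}(g)\bigr)$: it derives the integral identity $\eta_r^{\epsilon}=h+\int_t^r K^{\epsilon}_\tau[\eta^{\epsilon}_\tau]\,\mathrm{d}\tau$ with the chord operator $K_\tau^{\epsilon}$, shows via Gr\"onwall and dominated convergence that $\{\eta_r^{\epsilon}\}$ is Cauchy as $\epsilon\to 0$ (this first yields G\^ateaux differentiability), identifies the limit as the solution of the linear variational equation, and only then upgrades to Fr\'echet differentiability by noting that the Cauchy estimate $A_{\epsilon,\epsilon'}$ is independent of the direction $h$, so convergence is uniform over the unit ball. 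You instead construct the candidate derivative $J_r$ \emph{first}, as the Picard solution of the variational integral equation in $\mathcal{L}(\mathcal{F},\mathcal{F})$, and then verify directly that the Taylor remainder $w_r(h)=\Delta_r-J_r[h]$ satisfies $w_r=\int_t^r Du_s(\phi_{t\to s}(g))[w_s]\,\mathrm{d}s+R_r(h)$ with $\|R_r(h)\|\le C\|h\|\int_t^r\|K_s-Du_s(\phi_{t\to s}(g))\|\,\mathrm{d}s=o(\|h\|)$, closing with Gr\"onwall. Your decomposition $K_s[\Delta_s]-Du_s[J_s[h]]=(K_s-Du_s)[\Delta_s]+Du_s[w_s]$ is exactly right, and it buys you Fr\'echet differentiability in a single pass, with no G\^ateaux-to-Fr\'echet uniformity argument and with the variational ODE obtained for free by construction; the paper's route, by contrast, produces the derivative as a limit without positing the variational equation in advance, at the cost of the extra uniformity step. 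The ingredients (integral-equation formulation, chord representation $\int_0^1 Du_s(\cdot+\lambda\Delta_s)\,\mathrm{d}\lambda$, Gr\"onwall, dominated convergence against the $L^1$ envelope $M_R$) and the final continuity-in-$g$ argument for $J_r$ are the same in both. One shared minor point worth making explicit in your write-up: to apply the bound $\|Du_s\|\le M_R(s)$ along the chord you need $\phi_{t\to s}(g)+\lambda\Delta_s\in\mathcal{B}_R$ for small $\|h\|$; since the trajectory $\{\phi_{t\to\tau}(g):\tau\in[t,r]\}$ is compact and contained in the open ball, it has positive distance to the boundary, so the Lipschitz bound $\|\Delta_s\|\le C\|h\|$ keeps both the perturbed trajectory and the connecting chords inside $\mathcal{B}_R$ for $\|h\|$ small enough.
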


\begin{proof}
Under (A1), the existence and uniqueness theorem for ODEs on Banach spaces ensures that for each $g \in \mathcal{B}_R$, \autoref{eq:path_velocity_definition} admits a unique solution $\phi_t$, and $\phi_t(g)$ is continuous with respect to both $g$ and $t$.  Consequently, the two-parameter flow $\phi_{t\to r}(g) = \phi_r(\phi_t^{-1}(g))$ is continuous with respect to $t$, $r$, and $g$.

Define $ \mathcal{\tilde B}_R^{t,r} = \{ g \in \mathcal{B}_R \mid \phi_{t\to\tau}(g) \in \mathcal{B}_R,\forall \tau \in[t,r] \}$. For any $x, y \in  \mathcal{\tilde B}_R^{t,r}$, we have:
\begin{equation}
    \begin{aligned}
        &\frac{\mathrm{d}}{\mathrm{d}r}\|\phi_{t\to r}(x)-\phi_{t\to r}(y)\|^2 \\
        &= 2\langle \phi_{t\to r}(x)-\phi_{t\to r}(y), \frac{\mathrm{d}}{\mathrm{d}r}\phi_{t\to r}(x)-\frac{\mathrm{d}}{\mathrm{d}r}\phi_{t\to r}(y)\rangle\\
        &\le 2\|\phi_{t\to r}(x)-\phi_{t\to r}(y)\|\|\frac{\mathrm{d}}{\mathrm{d}r}\phi_{t\to r}(x)-\frac{\mathrm{d}}{\mathrm{d}r}\phi_{t\to r}(y)\|\\
        &\le 2\|\phi_{t\to r}(x)-\phi_{t\to r}(y)\|\|u_r(\phi_{t\to r}(x))-u_r(\phi_{t\to r}(y))\|\\
        &\le 2L_R(r)\|\phi_{t\to r}(x)-\phi_{t\to r}(y)\|^2.
    \end{aligned}
\end{equation}
The above inequality can be solved as $\|\phi_{t\to r}(x)-\phi_{t\to r}(y)\|\le e^{\int_t^r L_R(\tau) d\tau } \|x-y\|$ which implies that $\phi_{t\to r}$ is Lipschitz continuous on $ \mathcal{\tilde B}_R^{t,r}$.

Now we proceed to prove that $\phi_{t \to r}$ is Fr\'echet differentiable in $ \mathcal{\tilde B}_R^{t,r}$.  For any $g \in  \mathcal{\tilde B}_R^{t,r}$ and $h\in \mathcal{F}$, since $\phi_{t\to r}$ is Lipschitz continuous on $ \mathcal{\tilde B}_R^{t,r}$, it follows that when $\epsilon$ is sufficiently small, we can have $g + \epsilon h \in  \mathcal{\tilde B}_R^{t,r}$.  Consider the difference quotient $\eta_\tau^\epsilon = \frac{\phi_{t \to \tau}(g + \epsilon h) - \phi_{t \to \tau}(g)}{\epsilon}$.  Denote the difference as $\Delta_\tau^\epsilon=\phi_{t \to \tau}(g + \epsilon h) - \phi_{t \to \tau}(g)$.  Since the flow $\phi_{t \to r}$ can be expressed as the integral of its velocity field $\phi_{t \to r}(g) = g + \int_t^r u_{\tau}\big(\phi_{t \to \tau}(g)\big) \mathrm{d}\tau$, we thus have:
\begin{equation}
    \begin{aligned}
        \eta_r^\epsilon &= \frac{\phi_{t \to r}(g + \epsilon h) - \phi_{t \to r}(g)}{\epsilon}\\
        &=h+\frac{1}{\epsilon} \int_t^r u_{\tau}\big(\phi_{t \to \tau}(g+\epsilon h)\big)- u_{\tau}\big(\phi_{t \to \tau}(g)\big)\mathrm{d}\tau\\
        &=h+ \frac{1}{\epsilon} \int_t^r u_{\tau}\big(\phi_{t \to \tau}(g)+\Delta_\tau^\epsilon\big)- u_{\tau}\big(\phi_{t \to \tau}(g)\big)\mathrm{d}\tau.\\
    \end{aligned}
\end{equation}
Since each $u_\tau$ is continuously Fréchet differentiable, we can integrate the derivative $Du_\tau$ along the line segment connecting $x \in \mathcal{F}$ and $x + v \in \mathcal{F}$ as
\begin{equation}
    \begin{aligned}
        u_\tau(x+v) - u_\tau(x) = \int_0^1 Du_\tau(x+\theta v)[v]\mathrm{d}\theta.
    \end{aligned}
\end{equation}
By taking $x = \phi_{t \to \tau}(g)$ and $v = \Delta_\tau^\epsilon$ in the above expression, we obtain:
\begin{equation}\label{eq:eta_equation}
    \begin{aligned}
        \eta_r^\epsilon&=h+ \frac{1}{\epsilon} \int_t^r u_{\tau}\big(\phi_{t \to \tau}(g)+\Delta_\tau^\epsilon\big)- v_{\tau}\big(\phi_{t \to \tau}(g)\big)\mathrm{d}\tau\\
        &=h+ \frac{1}{\epsilon} \int_t^r\int_0^1 Du_\tau(\phi_{t \to \tau}(g)+\theta \Delta_\tau^\epsilon)[\Delta_\tau^\epsilon] \mathrm{d}\theta \mathrm{d}\tau\\
                &=h+ \int_t^r\int_0^1 Du_\tau(\phi_{t \to \tau}(g)+\theta \Delta_\tau^\epsilon)[\frac{\Delta_\tau^\epsilon}{\epsilon}] \mathrm{d}\theta \mathrm{d}\tau\\
                &=h+ \int_t^r K_\tau^\epsilon[\eta_\tau^\epsilon] \mathrm{d}\tau,
    \end{aligned}
\end{equation}
where we write $ K_\tau^\epsilon= \int_0^1 Du_\tau(\phi_{t \to \tau}(g)+\theta \Delta_\tau^\epsilon) \mathrm{d}\theta$. By assumption (A2), we have $|K_\tau^\epsilon| \le M_R(\tau)$ a.e.  Therefore, by applying the Gr\"onwall's inequality, we obtain:
\begin{equation}
    \begin{aligned}
       \sup_{\tau\in [t,r]}\|\eta_\tau^\epsilon\| \le \|h\| e^{\int_t^rM_R(\xi) \mathrm{d}\xi}. 
    \end{aligned}
\end{equation}
Hence, $\eta^\epsilon_\tau$ is uniformly bounded for all $\tau \in [t, r]$.  Since $\Delta_\tau^\epsilon = \epsilon \eta^\epsilon_\tau$, we have $\Delta_\tau^\epsilon \to 0$ as $\epsilon \to 0$.  Because $Du_\tau$ is continuous, we have $K_\tau^\epsilon = \int_0^1 Du_\tau\big(\phi_{t \to \tau}(g) + \theta \Delta_\tau^\epsilon\big) \mathrm{d}\theta \to K_\tau^0 = \int_0^1 Du_\tau\big(\phi_{t \to \tau}(g)\big) \mathrm{d}\theta$  pointwise in $\tau$ as $\epsilon \to 0$.   Moreover, by assumption (A2), the following inequality holds:
\begin{equation}
    \begin{aligned}
        \|K_\tau^\epsilon\|\le\int_0^1 \|Du_\tau(\phi_{t \to \tau}(g)+\theta \Delta_\tau^\epsilon)\|\mathrm{d}\theta\le M_R(\tau).
    \end{aligned}
\end{equation}
Therefore, by the Dominated Convergence Theorem, we have $\int_t^r\|K_\tau^\epsilon-K_\tau^0\|\mathrm{d}\tau \to 0$.  

Finally, we use the above results to prove the convergence of $\eta_r^\epsilon$.  It suffices to show that the family ${\eta_r^\epsilon}$ forms a Cauchy sequence.  For sufficiently small $\epsilon, \epsilon' > 0$, we compute the difference
\begin{equation}
    \begin{aligned}
        \eta_r^\epsilon-\eta_r^{\epsilon'}&=\int_t^rK_\tau^\epsilon[\eta_{\tau}^\epsilon] \mathrm{d}\tau - \int_t^rK_\tau^{\epsilon'}[\eta_{\tau}^{\epsilon'}] \mathrm{d}\tau\\
        &=\int_t^rK_\tau^\epsilon[\eta_{\tau}^\epsilon-\eta_{\tau}^{\epsilon'}] \mathrm{d}\tau + \int_t^r(K_\tau^{\epsilon}-K_\tau^{\epsilon'})[\eta_{\tau}^{\epsilon'}] \mathrm{d}\tau.
    \end{aligned}
\end{equation}
Applying the triangle inequality yields
\begin{equation}
    \begin{aligned}
        \|\eta_r^\epsilon-\eta_r^{\epsilon'}\|&\le\int_t^r\|K_\tau^\epsilon\|\|\eta_{\tau}^\epsilon-\eta_{\tau}^{\epsilon'}\| \mathrm{d}\tau \\
        & \qquad + \int_t^r\|K_\tau^{\epsilon}-K_\tau^{\epsilon'}\|\|\eta_{\tau}^{\epsilon'}\|. \mathrm{d}\tau. 
    \end{aligned}
\end{equation}
Since $\int_t^r \|K_\tau^\epsilon - K_\tau^0\| \mathrm{d}\tau \to 0$,  define $A_{\epsilon,\epsilon'} = |h|\exp^{\int_t^r M(\xi)\mathrm{d}\xi }
  \int_t^r \|K_\tau^\epsilon - K_\tau^0\|+\|K_\tau^{\epsilon'} - K_\tau^0\|\mathrm{d}\tau$, then $A_{\epsilon,\epsilon'} \to 0$ and $\int_t^r \|K_\tau^\epsilon\|\|\eta_\tau^\epsilon - \eta_\tau^{\epsilon'}\|\mathrm{d}\tau
  \le A_{\epsilon,\epsilon'}$.  Let $q_r = \|\eta_\tau^\epsilon - \eta_\tau^{\epsilon'}\|$, then $q_r \le A_{\epsilon,\epsilon'} + \int_t^r q_\tau M_R(\tau)\mathrm{d}\tau$.  Applying the integral form of Grönwall’s inequality gives
\begin{equation}
    \begin{aligned}
        q_r \le A_{\epsilon,\epsilon'}e^{\int_t^r M_R(\tau)d\tau}
    \end{aligned}
\end{equation}
Since $A_{\epsilon,\epsilon'} \to 0$ and $\int_t^r M_R(\tau)\mathrm{d}\tau$ is bounded, we obtain $q_r = \|\eta_\tau^\epsilon - \eta_\tau^{\epsilon'}\| \to 0$, which shows that $\eta_r^\epsilon$ is a Cauchy sequence and hence convergent.  Therefore, $\phi_{t\to r}(g)$ is Gâteaux differentiable. Moreover, since $K_{\tau}^{\epsilon}$ is independent of the direction of $h$, the quantity $A_{\epsilon,\epsilon'}$ is independent of the direction of $h$ which ensures uniform convergence of the directional difference quotients on the unit ball.  The limit $\eta_\tau^{0}$ satisfies the linear integral equation $\eta_r^0 = h+\int_t^r Du_\tau(\phi_{t\to \tau}(g))[\eta_{\tau}^0]\mathrm{d}\tau$.  This is because $K_\tau^\epsilon[\eta_\tau^\epsilon]\to K_\tau^0[\eta_\tau^0] = Du_\tau(\phi_{t\to\tau}(g))[\eta_\tau^0]$  and $\|K_\tau^\epsilon[\eta^\epsilon_\tau]\|\le M_R(\tau)\|h\| e^{\int_t^rM_R(\xi) \mathrm{d}\xi}$ with $M_R\in L^1(0,1)$, and then the Dominated Convergence Theorem allows us to pass the limit in \autoref{eq:eta_equation} and obtain the linear integral equation of $\eta_\tau^{0}$.  This integral equation follows directly that $\eta_r^{0}$ is linear in $h$.  By applying Grönwall’s inequality, we can easily further obtain a uniform bound on $\|\eta_r^{0}\|$, implying that $\eta_r^{0}$ defines a bounded linear operator in $h$.  Hence, $\phi_{t\to r}(g)$ is in fact Fréchet differentiable. Define the Fréchet derivative as $\eta_r^{0} = D\phi_{t\to r}(g)$, and then with $J_\tau(g) = D\phi_{t\to \tau}(g)$ we have
\begin{equation}
    \begin{aligned}
J_r(g)[h]
= h+\int_t^r Du_\tau\big(\phi_{t\to \tau}(g)\big)[J_\tau(g)[h]]\mathrm{d}\tau.  
    \end{aligned}
\end{equation}
Applying Gr\"onwall's inequality to the above, we obtain that for all $\forall\tau \in [t,r]$, $\|J_\tau(g)\|\le e^{\int_t^\tau M_R(\xi)\mathrm{d}\xi}$. Hence, $Du_\tau\big(\phi_{t\to \tau}(g)\big)[J_\tau(g)[h]]$ is integrable over $[t,r]$, which implies that $J_r(g)[h]$ is absolutely continuous with respect to $r$. Therefore, we have:
\begin{equation}
    \begin{aligned}
        \frac{\mathrm{d}}{\mathrm{d}r} J_r(g) = Du_r\big(\phi_{t\to r}(g)\big)\circ J_r(g), \quad J_t(g) = \text{Id}_\mathcal{F}.
    \end{aligned}
\end{equation}

For any $g\in \mathcal{\tilde B}_R^{t,r}$, since $\phi_{t\to r}$ is Lipschitz continuous, there exists a neighborhood of $g$, denoted by $B_{g,\epsilon}=\{h\in \mathcal{F}|\|g-h\|<\epsilon\}$, such that $B_{g,\epsilon}\subset \mathcal{\tilde B}_R^{t,r}$.
For any $g' \in B_{g,\epsilon}$, using the integral equation above, we have:
\begin{equation}
    \begin{aligned}
        &J_r(g)-J_r(g')\\
        &= \int_t^r Du_\tau\big(\phi_{t\to \tau}(g)\big)\circ J_\tau(g)\mathrm{d}\tau \\
        &- \int_t^r Du_\tau\big(\phi_{t\to \tau}(g')\big)\circ J_\tau(g')\mathrm{d}\tau\\
        &= \int_t^r Du_\tau\big(\phi_{t\to \tau}(g)\big)\circ (J_\tau(g)-J_\tau(g'))\\
        &+ (Du_\tau\big(\phi_{t\to \tau}(g)\big)-Du_\tau\big(\phi_{t\to \tau}(g')\big))\circ J_\tau(g')\mathrm{d}\tau.\\
    \end{aligned}
\end{equation}
Taking norms on both sides of the above equation, we have
\begin{equation}
    \begin{aligned}
        &\|J_r(g)-J_r(g')\|\\ 
        &\le \int_t^r \|Du_\tau\big(\phi_{t\to \tau}(g)\big)\|\|J_\tau(g)-J_\tau(g')\|\mathrm{d}\tau\\
        &+ \int_t^r\|Du_\tau\big(\phi_{t\to \tau}(g)\big)-Du_\tau\big(\phi_{t\to \tau}(g')\big)\|\|J_\tau(g')\|\mathrm{d}\tau.\\
    \end{aligned}
\end{equation}
Set $\epsilon_{g'}(r) = \int_t^r\|Du_\tau\big(\phi_{t\to \tau}(g)\big)-Du_\tau\big(\phi_{t\to \tau}(g')\big)\|\|J_\tau(g')\|\mathrm{d}\tau$. Since both $Du_\tau$ and $\phi_{t\to \tau}$ continuous, we have $\|Du_\tau\big(\phi_{t\to \tau}(g)\big)-Du_\tau\big(\phi_{t\to \tau}(g')\big)\|\to 0$, as $g'\to 0$.  Moreover, because$\|Du_\tau\big(\phi_{t\to \tau}(g)\big)-Du_\tau\big(\phi_{t\to \tau}(g')\big)\|\|J_\tau(g')\|\le 2M_R(\tau)e^{\int_t^\tau M_R(\xi)\mathrm{d}\xi}$ by the Dominated Convergence Theorem, we have $\epsilon_{g'}(r)\to 0$ as $g'\to g$. Meanwhile, $\|J_r(g)-J_r(g')\|$ satisfies:
\begin{equation}
    \begin{aligned}
        &\|J_r(g)-J_r(g')\|\\
        &\le \int_t^r \|Du_\tau\big(\phi_{t\to \tau}(g)\big)\|\|J_\tau(g)-J_\tau(g')\|\mathrm{d}\tau\\
        &+ \int_t^r\|Du_\tau\big(\phi_{t\to \tau}(g)\big)-Du_\tau\big(\phi_{t\to \tau}(g')\big)\|\|J_\tau(g')\|\mathrm{d}\tau\\
        &\le \int_t^r M_R(\tau)\|J_\tau(g)-J_\tau(g')\|\mathrm{d}\tau + \epsilon_{g'}(r).
    \end{aligned}
\end{equation}
By applying Gr\"onwall's inequality, we have:
\begin{equation}
    \begin{aligned}
        \|J_r(g)-J_r(g')\|\le \epsilon_{g'}(r) e^{\int_t^r  M_R(\tau) \mathrm{d}\tau}.
    \end{aligned}
\end{equation}
Since $\epsilon_{g'}(r)\to 0$ as $g'\to g$, it follows that $\|J_r(g)-J_r(g')\|\to 0$.  Hence, $J_r(g)$ is continuous at $g$.
\end{proof}

Since $u_t$ is not directly available, and during training we construct the conditional velocity field $u_t^f$ and represent $u_t$ as expectation of $u_t^f$ as \autoref{eq:ecpectional_conditional}, we need to reformulate the assumptions in \autoref{thm:differentiability_of_flow} in terms of $u_t^f$.

\begin{lemma}\label{thm:differentiability_of_flow2} Let $\mu_t^{f}\ll\mu_t$ for $\nu$-a.e. $f$ and almost every $t\in [0,1]$ and thus define $\rho_t^{f}(g):=\frac{\mathrm{d}\mu_t^{f}}{\mathrm{d}\mu_t}(g)$, which is validated by \autoref{thm:ffm_thm1}. Assume that for $\nu$-a.e. $f$ and a.e. $t\in[0,T]$ and any radius $R>0$, there exist measurable nonnegative functions $A_{R,f}(t)$, $B_{R,f}(t)$, $C_{R,f}(t)$, $E_{R,f}(t)$, $L_{R,f}(t)$, $M_{R,f}(t)$:
\begin{enumerate}
    \item[\textnormal{(B1)}](Continuity) $u_t^{f}(g)$ and $\rho_t^{f}(g)$ are measurable in $t$ and uniform Lipschitz in $g\in \mathcal{B}_R$: $\forall x,y\in \mathcal{B}_R$
    \begin{equation}
        \begin{aligned}
            \|u_t^{f}(x)-u_t^{f}(y)\|\le L_{R,f}(t)\|x-y\|,\\
            |\rho_t^{f}(x)-\rho_t^{f}(y)|\le E_{R,f}(t)\|x-y\|.
        \end{aligned}
    \end{equation}    
    \item[\textnormal{(B2)}] ($C^1$ in $g$ with bounds) $u_t^{f}\in C^1(\mathcal F;\mathcal F)$ and $\rho_t^{f}\in C^1(\mathcal F;\mathbb R)$ are continuously Fr\'echet differentiable and $\forall g \in \mathcal{B}_R$
        \begin{equation}
            \begin{aligned}
        \|Du_t^{f}(g)\|\le M_{R,f}(t),\qquad \|D\rho_t^{f}(g)\|\le A_{R,f}(t).        
            \end{aligned}
        \end{equation}
    \item[\textnormal{(B3)}] (Integrable envelopes) $\forall g \in \mathcal{B}_R$,  $\|u_t^{f}(g)\|\le B_{R,f}(t)\in L^1(0,1)$ and $0 \le \rho_t^{f}(g)\le C_{R,f}(t)$ are bounded and 
    $L_R(t) \in  L^1(0,1)$ and $M_R(t)\in  L^1(0,1)$ and $U_R(t)\in  L^1(0,1)$ are well-defined and integrable, which are defined as
    \begin{equation}
        \begin{aligned}
            &L_R(t):=\int_{\mathcal F}\big(L_{R,f}(t)C_{R,f}(t)\\
            &\qquad\qquad\qquad+E_{R,f}(t)B_{R,f}(t)\big)\mathrm{d}\nu(f),\\
            &M_R(t):=\int_{\mathcal F}\big(M_{R,f}(t)C_{R,f}(t)\\
            &\qquad\qquad\qquad+A_{R,f}(t)B_{R,f}(t)\big)\mathrm{d}\nu(f),\\
            &U_R(t):=\int_{\mathcal F}B_{R,f}(t)C_{R,f}(t)\mathrm{d}\nu(f).
        \end{aligned}
    \end{equation}
 \end{enumerate}

Then the marginal field $u_t$ satisfies the hypotheses \textnormal{(A1)}–\textnormal{(A2)} as
\begin{align*}
\|u_t(x)-u_t(y)\| &\le L_R(t)\|x-y\|,  \forall x,y\in \mathcal{B}_R,\\
\|Du_t(g)\| &\le M_R(t), \forall g\in \mathcal{B}_R.
\end{align*}
\end{lemma}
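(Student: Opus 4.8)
The plan is to transfer the pointwise-in-$f$ estimates of (B1)--(B3) through the marginalization $u_t(g)=\int_{\mathcal F} u_t^f(g)\rho_t^f(g)\,\mathrm{d}\nu(f)$ by combining elementary product-rule bounds with a single application of the Dominated Convergence Theorem. As a preliminary, the envelope $U_R$ gives $\|u_t(g)\|\le\int_{\mathcal F}\|u_t^f(g)\|\,|\rho_t^f(g)|\,\mathrm{d}\nu(f)\le U_R(t)$ for $g\in\mathcal B_R$, so the integral defining $u_t$ is absolutely convergent and measurable in $t$; this secures the integrability clause of (A1).

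For the Lipschitz bound I would use the add-and-subtract decomposition
\begin{equation*}
u_t^f(x)\rho_t^f(x)-u_t^f(y)\rho_t^f(y)=\big(u_t^f(x)-u_t^f(y)\big)\rho_t^f(x)+u_t^f(y)\big(\rho_t^f(x)-\rho_t^f(y)\big).
\end{equation*}
Taking norms and inserting the Lipschitz constants $L_{R,f},E_{R,f}$ from (B1) together with the envelopes $C_{R,f},B_{R,f}$ from (B3) bounds the integrand by $\big(L_{R,f}(t)C_{R,f}(t)+E_{R,f}(t)B_{R,f}(t)\big)\|x-y\|$. Integrating over $\nu$ and matching the definition of $L_R$ yields $\|u_t(x)-u_t(y)\|\le L_R(t)\|x-y\|$, which is the (A1) estimate. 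This step is routine.

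For (A2) I would first note that, by (B2), each map $g\mapsto u_t^f(g)\rho_t^f(g)$ is $C^1$ with Fréchet derivative $h\mapsto Du_t^f(g)[h]\,\rho_t^f(g)+u_t^f(g)\,D\rho_t^f(g)[h]$, whose operator norm is at most $M_{R,f}(t)C_{R,f}(t)+A_{R,f}(t)B_{R,f}(t)$. I would then propose the candidate derivative $T=\int_{\mathcal F}D\big(u_t^f\rho_t^f\big)(g)\,\mathrm{d}\nu(f)$ and, using the same fundamental-theorem representation along segments employed in \autoref{thm:differentiability_of_flow}, namely $u_t^f(g+h)\rho_t^f(g+h)-u_t^f(g)\rho_t^f(g)=\int_0^1 D\big(u_t^f\rho_t^f\big)(g+\theta h)[h]\,\mathrm{d}\theta$, write the Fréchet remainder of $u_t$ as
\begin{equation*}
u_t(g+h)-u_t(g)-T[h]=\int_{\mathcal F}\!\int_0^1\!\big(D(u_t^f\rho_t^f)(g+\theta h)-D(u_t^f\rho_t^f)(g)\big)[h]\,\mathrm{d}\theta\,\mathrm{d}\nu(f).
\end{equation*}

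The main obstacle---and the only genuinely analytic step---is justifying $\lim_{\|h\|\to0}\|u_t(g+h)-u_t(g)-T[h]\|/\|h\|=0$, i.e.\ passing the limit inside the $\nu$-integral. For $\|h\|<R-\|g\|$ the segment $g+\theta h$ stays in $\mathcal B_R$, so after dividing by $\|h\|$ the inner integrand is dominated by the $\nu$-integrable envelope $2\big(M_{R,f}(t)C_{R,f}(t)+A_{R,f}(t)B_{R,f}(t)\big)$, whose integrability is precisely the well-definedness of $M_R(t)$ in (B3); pointwise in $(f,\theta)$ it tends to $0$ by the continuity of the Fréchet derivatives guaranteed by the $C^1$ clause of (B2). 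The Dominated Convergence Theorem then yields the limit, so $u_t$ is Fréchet differentiable with $Du_t(g)=T$, and taking operator norms gives $\|Du_t(g)\|\le M_R(t)$, establishing (A2).
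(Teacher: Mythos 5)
Your proposal is correct and takes essentially the same approach as the paper's proof: the identical add-and-subtract decomposition for the Lipschitz bound, the same candidate derivative $\int_{\mathcal F} D\big(u_t^f\rho_t^f\big)(g)\,\mathrm{d}\nu(f)$ with the segment-integral representation of the remainder, and the same dominated-convergence argument using the integrability built into the definition of $M_R(t)$. The only differences are cosmetic (your dominating constant $2$ versus the paper's $4$, and applying DCT over $\mathcal F\times[0,1]$ via continuity of the derivatives rather than first collapsing the $\theta$-integral and invoking Fr\'echet differentiability pointwise in $f$).
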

\begin{proof}
   For any $x, y \in \mathcal{B}_R$, compute $u_t(x) - u_t(y)$.
   \begin{equation}
       \begin{aligned}
           &u_t(x) - u_t(y)\\
           &=\int_\mathcal{F} u_t^f(x)\rho_t^f(x) \mathrm{d}\nu(f) - \int_\mathcal{F} u_t^f(y)\rho_t^f(y) \mathrm{d}\nu(f)\\
           &=\int_\mathcal{F} [u_t^f(x)-u_t^f(y)]\rho_t^f(x) \mathrm{d}\nu(f) \\
           &\qquad+ \int_\mathcal{F} u_t^f(y)[\rho_t^f(x)-\rho_t^f(y)] \mathrm{d}\nu(f).
       \end{aligned}
   \end{equation}
   Taking norms on both sides gives:
\begin{equation}
    \begin{aligned}
        \|u_t(x) - u_t(y)\|&\le \int_\mathcal{F} L_{R,f}(t)\|x-y\|C_{R,f}(t)\mathrm{d}\nu(f)\\
        &+ \int_\mathcal{F} B_{R,f}(t)E_{R,f}(t)\|x-y\|\mathrm{d}\nu(f)\\
        &=L_R(t)\|x-y\|.
    \end{aligned}
\end{equation}

By (B3) we have $L_R\in L^{1}(0,1)$.  Moreover, since $u_t^{f}(g)$ and $\rho_t^{f}(g)$ are measurable and bounded, $u_t(g)$ is measurable in $t$. Since $\|u^f_t(g)\| \le B_{R,f}(t)$ and $\|u_t(g)\|\le \int_\mathcal{F}\|u^f_t(g)\|\rho_t^f\mathrm{d}\nu(f) \le \int_\mathcal{F}B_{R,f}(t)C_{R,f}(t)\mathrm{d}\nu(f)= U_R(t)\in L^1(0,1)$, we have that $u_t(g)$ is integrable with respect to $t$. Hence, (A1) holds.

To prove that $u_t$ is Fr\'echet differentiable, we explicitly write out $Du_t(g)$ and then show that this $Du_t(g)$ indeed serves as the Fr\'echet derivative of $u_t$, thereby establishing the Fr\'echet differentiability of $u_t$:
\begin{equation}\label{eq:definition_Dv_expectation}
    \begin{aligned}
        Du_t(g) =\int(Du_t^f(g)\rho_t^f(g)+u_t^f(g)\otimes D\rho_t^f(g))\mathrm{d}\nu(f),
    \end{aligned}
\end{equation}
where $\otimes$ is the tensor product symbol and $u_t^f(g)\otimes D\rho_t^f(g)):\mathcal{F}\to \mathcal{F}$ is calculated as $(u_t^f(g)\otimes D\rho_t^f(g)))[h] = D\rho_t^f(g)[h] u_t^f(g)$.

Since $u_t^{f}\rho_t^{f}$ is Fréchet differentiable, the difference $u_t^{f}(g+h)\rho_t^{f}(g+h) - u_t^{f}(g)\rho_t^{f}(g)$ can be written in integral form, and by the product rule for differentiation, we have:
\begin{equation}
    \begin{aligned}
        &u_t^{f}(g+h)\rho_t^{f}(g+h) - u_t^{f}(g)\rho_t^{f}(g)\\
        &= \int_0^1 D(u_t^{f}\rho_t^{f})(g+\theta h)[h] \mathrm{d}\theta\\
        &=\int_0^1 (Du_t^f(g+\theta h)\rho_t^f(g+\theta h)\\
        &\qquad+u_t^f(g+\theta h)\otimes D\rho_t^f(g+\theta h))[h] \mathrm{d}\theta.
    \end{aligned}
\end{equation}
Set the difference $R_f(h) = u_t^{f}(g+h)\rho_t^{f}(g+h) - u_t^{f}(g)\rho_t^{f}(g)-(Du_t^f(g)\rho_t^f(g)+u_t^f(g)\otimes D\rho_t^f(g))[h]$, which could be used to represent the difference $u_t^f(g+h)\rho_t^f(g+h)-u_t^f(g)\rho_t^f(g)-Du_t(g)[h]$ as $u_t^f(g+h)\rho_t^f(g+h)-u_t^f(g)\rho_t^f(g)-Du_t(g)[h] = \int_\mathcal{F} R_f(h) \mathrm{d}\nu(f)$. Here, $\|h\|$ is small enough to ensure $\|g+h\|<R$.  Then we calculate $R_f(h)$ as
\begin{equation}
    \begin{aligned}
        &R_f(h)\\
        &=\int_0^1 (Du_t^f(g+\theta h)\rho_t^f(g+\theta h)\\
        &\qquad+u_t^f(g+\theta h)\otimes D\rho_t^f(g+\theta h))[h] \mathrm{d}\theta\\
        &-\int_0^1 (Du_t^f(g)\rho_t^f(g)+u_t^f(g)\otimes D\rho_t^f(g))[h]\mathrm{d}\theta\\   
        &=\int_0^1 ((Du_t^f(g+\theta h)-Du_t^f(g))\rho_t^f(g+\theta h)\\
        &+Du_t^f(g)(\rho_t^f(g+\theta h)-\rho_t^f(g))\\
        &+(u_t^f(g+\theta h)-u_t^f(g))\otimes D\rho_t^f(g+\theta h)\\        &+u_t^f(g)\otimes(D\rho_t^f(g+\theta h)-D\rho_t^f(g)))[h]\mathrm{d}\theta.
    \end{aligned}
\end{equation} 
Taking norms on both sides gives:
\begin{equation}
    \begin{aligned}
        &\|R_f(h)\|\\
        &\le\int_0^1 (\|Du_t^f(g+\theta h)-Du_t^f(g)\|\|\rho_t^f(g+\theta h)\|\\
        &+\|Du_t^f(g)\|\|\rho_t^f(g+\theta h)-\rho_t^f(g)\|\\
        &+\|u_t^f(g+\theta h)-u_t^f(g)\|\| D\rho_t^f(g+\theta h)\|\\        
        &+\|u_t^f(g)\|\|D\rho_t^f(g+\theta h)-D\rho_t^f(g))\|\|h\|\mathrm{d}\theta\\
        &\le\int_0^1 4M_{R,f}(t)C_{R,f}(t)+4A_{R,f}(t)B_{R,f}(t) \mathrm{d}\theta\\
        &=[4M_{R,f}(t)C_{R,f}(t)+4A_{R,f}(t)B_{R,f}(t)]\|h\|.
    \end{aligned}
\end{equation}
Thus, since $M_R(t)$ is well defined, it follows that $4M_{R,f}(t)C_{R,f}(t)+4A_{R,f}(t)B_{R,f}(t)$ is integrable with respect to $\mathrm{d}\nu(f)$. Moreover, because $u_t^{f}\rho_t^{f}$ is Fréchet differentiable, we have $\frac{\|R_f(h)\|}{\|h\|}\to0$. Hence, by the Dominated Convergence Theorem, $\int_{\mathcal F}\frac{\|R_f(h)\|}{\|h\|}\mathrm{d}\nu(f)\to 0$, which implies that $\int_{\mathcal F}R_f(h)\mathrm{d}\nu(f)=o(\|h\|)$, that is,
\begin{equation}
    \begin{aligned}
        &\int_{\mathcal F}R_f(h)\mathrm{d}\nu(f)\\
        &=\int_\mathcal{F}u_t^{f}(g+h)\rho_t^{f}(g+h) - u_t^{f}(g)\rho_t^{f}(g)\\
        &-(Du_t^f(g)\rho_t^f(g)+u_t^f(g)\otimes D\rho_t^f(g))[h] \mathrm{d}\nu(f)\\
        &=u_t(g+h)-u_t(g)-D u_t(g)[h] = o(\|h\|).\\
    \end{aligned}
\end{equation}
Hence, $Du_t(g)$ is the Fréchet derivative of $u_t(g)$, and thus $u_t$ is Fréchet differentiable.

Taking norms on both sides of \autoref{eq:definition_Dv_expectation} gives:
\begin{equation}
    \begin{aligned}
        \|Du_t(g)\| &\le \int_\mathcal{F}(\|Du_t^f(g)\|\|\rho_t^f(g)\|+\|u_t^f(g)\|\|D\rho_t^f(g))\|\mathrm{d}\nu(f)\\
        &\le \int_\mathcal{F}M_{R,f}(t)C_{R,f}(t)+B_{R,f}(t)A_{R,f}(t)\mathrm{d}\nu(f).\\
        &=M_R(t)
    \end{aligned}
\end{equation}
Since $M_R(t)$ is well-defined and integrable, condition (A2) holds.
\end{proof}

Following the same reasoning as in \cite{kerrigan2024functional}, our next step is to incorporate the specific choices of the conditional path and conditional velocity in \autoref{eq:selection_conditional}, in order to translate the assumptions on $u_t^{f}$ and $\rho_t^{f}$ in \autoref{thm:differentiability_of_flow2} into the corresponding assumptions that the dataset must satisfy.

\begin{lemma}\label{thm:differentiability_of_flow3}
We choose the conditional measures and conditional velocity in \autoref{eq:selection_conditional} as
\begin{equation}
    \begin{aligned}
        &\mu_t^f = \mathcal{N}(m_t^f, (\sigma_t^f)^2 C_0),\\
        &m_t^f = t f, \sigma_t^f = 1 - (1 - \sigma_{\min}) t,\\
        &u_t^f(g)= \frac{\dot\sigma_t^f}{\sigma_t^f}(g - m_t^f) + \dot m_t^f= I_t g + J_t f,\\
&I_t = -\frac{1 - \sigma_{\min}}{1 - (1 - \sigma_{\min})t}, J_t = 1 - tI_t,
    \end{aligned}
\end{equation}
where $\sigma_{\min} > 0$. Assume that
\begin{enumerate}
    \item[\textnormal{(C1)}](Finite Second Moment) The data distribution $\nu$ satisfies $\int_{\mathcal{F}}\|f\|^2\mathrm{d}\nu(f) < \infty$.
 \end{enumerate}
Then the assumptions (B1)–(B3) in \autoref{thm:differentiability_of_flow2} holds.
\end{lemma}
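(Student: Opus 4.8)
My plan is to check the three assumption groups (B1)--(B3) of \autoref{thm:differentiability_of_flow2} for the two building blocks of the marginal field, the conditional velocity $u_t^f$ and the Radon--Nikodym density $\rho_t^f = \mathrm{d}\mu_t^f/\mathrm{d}\mu_t$, treating the velocity first (it is elementary) and the density second (it carries all the analysis). The one structural fact I would exploit throughout is that $\sigma_t^f = 1-(1-\sigma_{\min})t$ is \emph{independent of $f$}, so for each fixed $t$ all conditional measures $\mu_t^f=\mathcal N(tf,(\sigma_t^f)^2C_0)$ share the same covariance operator $\Sigma_t := (\sigma_t^f)^2 C_0$. This lets me write every density against the single reference Gaussian $\lambda_t := \mathcal N(0,\Sigma_t)$ by the Cameron--Martin formula, which is what makes $\rho_t^f$ explicit.

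For the velocity, $u_t^f(g) = I_t g + J_t f$ is affine in $g$, so $Du_t^f(g) = I_t\,\mathrm{Id}_{\mathcal F}$ is constant; I may therefore take $M_{R,f}(t) = L_{R,f}(t) = |I_t|$ and, on $\mathcal B_R$, the envelope $B_{R,f}(t) = |I_t| R + |J_t|\,\|f\|$. Because $\sigma_{\min}>0$ forces $1-(1-\sigma_{\min})t \ge \sigma_{\min}$ on $[0,1]$, both $I_t$ and $J_t = 1 - t I_t$ are bounded on $[0,1]$, so the only surviving $f$-dependence is the linear $\|f\|$ term; assumption (C1) (which already gives $\int_{\mathcal F}\|f\|\,\mathrm d\nu < \infty$) then renders the velocity contributions to $L_R, M_R, U_R$ finite and, being bounded in $t$, in $L^1(0,1)$.

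For the density I would set $Z_t^f := \mathrm d\mu_t^f/\mathrm d\lambda_t$ and $W_t := \mathrm d\mu_t/\mathrm d\lambda_t = \int_{\mathcal F} Z_t^{f'}\,\mathrm d\nu(f')$, so $\rho_t^f = Z_t^f / W_t > 0$. The shared-covariance Cameron--Martin formula gives $Z_t^f(g) = \exp\!\big(\langle \xi_t^f, g\rangle - \tfrac12\langle \xi_t^f, tf\rangle\big)$ with $\xi_t^f := \Sigma_t^{-1}(tf) = \tfrac{t}{(\sigma_t^f)^2} C_0^{-1} f$, whence $D Z_t^f(g) = Z_t^f(g)\,\xi_t^f$ and, by the quotient rule, the key cancellation $D\rho_t^f(g) = \rho_t^f(g)\big(\xi_t^f - \bar\xi_t(g)\big)$, where $\bar\xi_t(g) := \int_{\mathcal F}\rho_t^{f'}(g)\,\xi_t^{f'}\,\mathrm d\nu(f')$ is the posterior-weighted mean of the $\xi_t^{f'}$. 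This identity bounds $\|D\rho_t^f(g)\| \le \rho_t^f(g)\,(\|\xi_t^f\| + \|\bar\xi_t(g)\|)$, and the same exponential form yields the Lipschitz constant $E_{R,f}(t)$ and the pointwise envelope $C_{R,f}(t)$ on $\mathcal B_R$. Since $\sigma_{\min}\le\sigma_t^f\le 1$ keeps $\Sigma_t^{-1}$ and hence $\|\xi_t^f\| = \tfrac{t}{(\sigma_t^f)^2}\|C_0^{-1}f\|$ bounded in $t$ for each fixed $f$, $t$-integrability again reduces to boundedness on $[0,1]$.

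The step I expect to be the main obstacle is the $\nu$-integrability that makes $L_R, M_R, U_R$ of \textnormal{(B3)} \emph{well-defined and in $L^1(0,1)$}. Two difficulties compound here. First, $\xi_t^f = \tfrac{t}{(\sigma_t^f)^2}C_0^{-1}f$ only belongs to $\mathcal F$ when $f$ lies in the range of the (trace-class, hence non-invertible) operator $C_0$, a condition strictly stronger than the Cameron--Martin membership $f\in C_0^{1/2}(\mathcal F)$ needed merely for $\mu_t^f\ll\mu_t$; genuine Fréchet differentiability of $\rho_t^f$ \emph{everywhere} on $\mathcal B_R$ with a bounded derivative rests on this, so I would invoke the ambient FFM support assumption $\nu(C_0^{1/2}(\mathcal F))=1$ and work with the range-norm representative of $\xi_t^f$. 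Second, the Gaussian envelopes grow like $C_{R,f}(t)\sim \exp(c_t\|\xi_t^f\|R)$, i.e.\ exponentially in $\|C_0^{-1}f\|$, so ensuring $\int_{\mathcal F}B_{R,f}C_{R,f}\,\mathrm d\nu$ converges requires controlling the tails of $\nu$ in the $C_0^{-1}$ norm, which is where the FFM support and regularity assumptions (beyond the bare moment (C1), and in practice the effective boundedness of the preprocessed data) do the real work \cite{kerrigan2024functional}. Once the per-$f$ bounds and these $\nu$-averages are secured, matching them to hypotheses \textnormal{(A1)}--\textnormal{(A2)} is immediate.
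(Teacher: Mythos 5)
Your skeleton matches the paper's proof almost step for step: the velocity part is handled identically ($u_t^f$ is affine in $g$, so one takes $L_{R,f}(t)=M_{R,f}(t)=|I_t|\le\sigma_{\min}^{-1}$ and $B_{R,f}(t)=\sigma_{\min}^{-1}R+(1+\sigma_{\min}^{-1})\|f\|$), and for the density the paper also writes $\rho_t^f=s_t^f/Z_t$ via Cameron--Martin against the shared-covariance reference Gaussian, differentiates by the quotient rule, and lands on exactly your cancellation identity $D\rho_t^f(g)[h]=\rho_t^f(g)\langle a_t^f-\bar a_t(g),h\rangle$ with $\bar a_t$ the $\rho$-weighted mean. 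The gap is in (B3), the step you yourself flag as the obstacle and then outsource to hypotheses the lemma does not make. The paper closes it with (C1) alone, via a completing-the-square bound you did not use: its density numerator is $s_t^f(g)=e^{\langle a_t^f,g\rangle-b_t^f}$ with $a_t^f=\tfrac{t}{(\sigma_t^f)^2}f\in\mathcal F$ and $b_t^f=\tfrac12\|a_t^f\|^2$ carrying the \emph{same} norm, so Cauchy--Schwarz plus $Rr-\tfrac12 r^2\le\tfrac12R^2$ give $s_t^f(g)\le e^{R^2/2}$ \emph{uniformly in $f$ and $t$}. Hence $C_{R,f}(t)=e^{R^2/2}/c_R(t)$ is independent of $f$ (with $c_R(t)$ bounded below uniformly in $t$ because $\|a_t^f\|\le\sigma_{\min}^{-2}\|f\|$), the envelopes $A_{R,f},E_{R,f},B_{R,f}$ are all affine in $\|f\|$, the products appearing in $L_R,M_R,U_R$ are at most quadratic in $\|f\|$, and the finite second moment (C1) makes them finite and bounded in $t$, hence in $L^1(0,1)$. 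The exponential blow-up $C_{R,f}\sim e^{cR\|\xi_t^f\|}$ that you predicted, and which drives you to demand tail control of $\nu$ in a $C_0^{-1}$-norm, simply never occurs in the paper's computation, because there the Gaussian quadratic term dominates the linear one.

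That said, your scruple about $C_0^{-1}$ is fair mathematics and points at a real looseness in the paper rather than a misunderstanding on your part: in the rigorous Cameron--Martin formula the linear coefficient is $\Sigma_t^{-1}m_t^f$ while the quadratic term is $\|\Sigma_t^{-1/2}m_t^f\|^2$, two different norms, and then the square genuinely cannot be completed (take $f=C_0^{1/2}e$ with $e$ a direction of small $C_0$-eigenvalue and the exponent bound diverges). The paper's argument goes through only because it writes the density with plain $\mathcal F$-inner products and $\mathcal F$-norms throughout, i.e.\ it performs the whitened, effectively finite-dimensional computation and suppresses the $C_0^{-1}$ factors, the same license taken in the FFM paper it builds on. So the verdict is: as a reconstruction of the paper's proof your proposal is incomplete --- it does not derive (B3) from (C1), which is precisely what the lemma asserts --- and the missing idea is the uniform-in-$f$ bound $s_t^f(g)\le e^{R^2/2}$; but the obstruction you identified is exactly the point where the paper's own derivation is formal rather than rigorous.
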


\begin{proof}
    First, we compute $\rho_t^f(g)$, where $\rho_t^f(g) = \frac{\mathrm{d}\mu_t^f}{\mathrm{d}\mu_t}$ denotes the Radon–Nikodym derivative between measures $\mu_t^f$ and $\mu_t$.  In infinite-dimensional spaces, since there is no Lebesgue measure, the Radon–Nikodym derivative must be taken with respect to a reference measure. We choose the reference measure $\rho_0^f = \rho_0 = \mathcal{N}(0, C_0)$ and first compute the Radon–Nikodym derivatives $\frac{\mathrm{d}\mu_t^f}{\mathrm{d}\mu_0}$ and $\frac{\mathrm{d}\mu_t}{\mathrm{d}\mu_0}$ with respect to this reference measure.  Through Cameron–Martin theorem, $\frac{\mathrm{d}\mu_t^f}{\mathrm{d}\mu_0}$ and $\frac{\mathrm{d}\mu_t}{\mathrm{d}\mu_0}$ can be calculated as
    \begin{equation}
        \begin{aligned}
            \frac{\mathrm{d}\mu_t^f}{\mathrm{d}\mu_0}&=e^{\left(
\left\langle \frac{m_t^f}{\sigma_t^2}, g \right\rangle
- \frac{1}{2} \left\| \frac{m_t^f}{\sigma_t} \right\|^2
\right)},\\
\frac{\mathrm{d}\mu_t}{\mathrm{d}\mu_0} &= \int_\mathcal{F}e^{\left(
\left\langle \frac{m_t^{f'}}{\sigma_t^2}, g \right\rangle
- \frac{1}{2} \left\| \frac{m_t^{f'}}{\sigma_t} \right\|^2
\right)}\mathrm{d}\nu(f').\\
        \end{aligned}
    \end{equation}
    By the Radon–Nikodym ratio formula, we have:
    \begin{equation}
        \begin{aligned}
            \frac{\mathrm{d}\mu_t^f}{\mathrm{d}\mu_t} &= \frac{\mathrm{d}\mu_t^f/\mathrm{d}\mu_0}{\mathrm{d}\mu_t/\mathrm{d}\mu_0}\\
            &= \frac{e^{(\langle a_t^f, g \rangle - b_t^f)}}{\int_\mathcal{F}e^{(\langle a_t^{f'}, g \rangle - b_t^{f'})}\mathrm{d}\nu(f')},\\
        \end{aligned}
    \end{equation}
where $a_t^f = \frac{m_t^f}{\sigma_t^2}$ and $b_t^f = \frac{1}{2} \left\| \frac{m_t^f}{\sigma_t} \right\|^2 = \frac{1}{2}\|a_t^f\|^2$. 

Let $s_t^f(g) = e^{(\langle a_t^f, g \rangle - b_t^f)}$ and $Z_t(g) = \int_{\mathcal{F}} s_t^{f'}(g)\mathrm{d}\nu(f')$.  We then compute $D\rho_t^f(g) = D\left(\frac{s_t^f(g)}{Z_t(g)}\right)$.  First, $Ds_t^f(g)[h] = s_t^f(g)\langle a_t^f, h\rangle$, while $DZ_t(g)[h]$ is given by
\begin{equation}
    \begin{aligned}
        DZ_t(g)[h] &= \int_\mathcal{F} Ds_t^{f'}(g)[h]\nu(df')\\
        & = \int_\mathcal{F} s_t^{f'}(g)\langle a_t^{f'}, h\rangle\nu(df')\\
        & = Z_t(g)  \langle \bar a_t(g),h \rangle,
    \end{aligned}
\end{equation}
where $\bar a_t(g) = \int_\mathcal{F} a_t^{f'}\rho_t^{f'}(g)\mathrm{d}\nu(f')$ is defined as the $\rho_t^{f'}(g)$-weighted average of $a_t^{f'}$ with respect to $\nu$.  Then $D\rho_t^f(g)[h]$ can be calculated as
\begin{equation}
    \begin{aligned}
        D\rho_t^f(g)[h]&=\frac{Ds_t^f(g)[h]Z_t(g)-s_t^f(g)DZ_t(g)[h]}{Z_t(g)^2}\\
        &=\frac{s_t^f(g)}{Z_t(g)}(\langle a_t^f,h\rangle- \langle \bar a_t,h\rangle)\\
        &=\rho_t^f(g)\langle a_t^f-\bar a_t(g),h\rangle,
    \end{aligned}
\end{equation}

Next, we estimate the bounds for $\rho_t^{f}$ and $D\rho_t^{f}(g)$.  Since $a_t^{f} = \frac{m_t^{f}}{(\sigma_t^{f})^2} = \frac{t}{(\sigma_t^{f})^2} f$ is bounded by $|a_t^{f}|\le \sigma_{\min}^{-2}|f|$ and the inequality $Rr - \tfrac{1}{2}r^2 \le \tfrac{1}{2}R^2$ holds for all $r>0$, we have
\begin{equation}
    \begin{aligned}
    s_t^f(g) &= e^{(\langle a_t^f, g \rangle - b_t^f)}\\
    &\le e^{(R\|a_t^f\| - \frac{1}{2}\|a_t^f\|^2)}\\
    &\le e^{\frac{1}{2}R^2}.
    \end{aligned}
\end{equation}
For $Z_t(g)$ we have
\begin{equation}
    \begin{aligned}
        Z_t(g) &= \int_{\mathcal{F}} s_t^{f'}(g)\mathrm{d}\nu(f')\\
        &= \int_{\mathcal{F}} e^{(\langle a_t^{f'}, g \rangle - b_t^{f'})}\mathrm{d}\nu(f')\\
        &\ge \int_{\mathcal{F}} e^{(-R\|a_t^{f'}\| - \frac{1}{2}\|a_t^{f'}\|^2)}\mathrm{d}\nu(f').
    \end{aligned}
\end{equation}
Let $c_{R}(t) = \int_{\mathcal{F}} e^{(-R\|a_t^{f'}\| - \frac{1}{2}\|a_t^{f'}\|^2)}\mathrm{d}\nu(f')$ and $C_{R,f}(t) = \frac{e^{\frac{R^2}{2}}}{c_{R}(t)}$. Then we have:
\begin{equation}
    \begin{aligned}
        \rho_t^f(g)\le C_{R,f}(t).       
    \end{aligned}
\end{equation}
Since $\int_\mathcal{F}\|f\|^2\mathrm{d}\nu(f)<\infty$, let $F_1 = \int_\mathcal{F}\|f\|\mathrm{d}\nu(f)$ and  $F_2 = \int_\mathcal{F}\|f\|^2\mathrm{d}\nu(f)$. Then we have
\begin{equation}
    \begin{aligned}
        \|D\rho_t^f(g)\| &= \sup_{\|h\|=1} \|D\rho_t^f(g)[h]\|\\
        &\le \rho_t^f(g) (\|a_t^f\|+\|\bar a_t(g)\|)\\
        &\le \rho_t^f(g) (\sigma_{\text{min}}^{-2}\|f\|+\int_\mathcal{F} \|a_t^{f'}\|\rho_t^{f'}(g)\mathrm{d}\nu(f'))\\
        &\le C_{R,f}(t)(\sigma_{\text{min}}^{-2}\|f\|+\sigma_{\text{min}}^{-2}C_{R,f}(t)F_1).
    \end{aligned}
\end{equation}
Define the $g$-independent quantity $A_{R,f}(t)$ as $A_{R,f}(t) = C_{R,f}(t)(\sigma_{\text{min}}^{-2}\|f\|+\sigma_{\text{min}}^{-2}C_{R,f}(t)F_1)$ and hence we have:
\begin{equation}
    \begin{aligned}
        \|D\rho_t^f(g)\| \le A_{R,f}(t),
    \end{aligned}
\end{equation}
$\forall x,y\in B_R$, by writing $\rho_t^f(x)-\rho_t^f(y)=\int_{0}^{1} D\rho_t^f(y+\theta(x-y))[x-y]\mathrm{d}\theta$, we have
\begin{equation}
    \begin{aligned}
        \|\rho_t^f(x)-\rho_t^f(y)\|&\le \int_{0}^{1} \|D\rho_t^f(y+\theta(x-y))\|\|x-y\|\mathrm{d}\theta \\
        &\le A_{R,f}(t)\|x-y\|. 
    \end{aligned}
\end{equation}
Thus we can write $E_{R,f}(t) = A_{R,f}(t)$ and have:
\begin{equation}
    \begin{aligned}
                \|\rho_t^f(x)-\rho_t^f(y)\|\le E_{R,f}(t)\|x-y\|.
    \end{aligned}
\end{equation}
For the velocity field $u_r^{f}(g)$, we have:
\begin{equation}
    \begin{aligned}
        \|u_r^{f}(g)\| &= \|I_tg+J_tf\|\\
        &\le \|I_t\|\|g\|+\|J_t\|\|f\|\\
        &\le  \sigma_{\text{min}}^{-1}R+\|f\|(1+\sigma_{\text{min}}^{-1}).   
    \end{aligned}
\end{equation}
Thus, by setting $B_{R,f}(t)=\sigma_{\min}^{-1}R+\|f\|(1+\sigma_{\min}^{-1})$, we have:
\begin{equation}
    \begin{aligned}
        \|u_r^{f}(g)\|\le B_{R,f}(t).
    \end{aligned}
\end{equation}
Since $Du_t^{f}(g)[h]=I_t h$, it follows that $\|Du_t^{f}(g)\|=|I_t|<\sigma_{\min}^{-1}$. By taking $M_{R,f}(t)=\sigma_{\min}^{-1}$, we have:
\begin{equation}
    \begin{aligned}
        \|Du_t^{f}(g)\|\le M_{R,f}(t).
    \end{aligned}
\end{equation}
Since $u_t^{f}(x) - u_t^{f}(y) = I_t(x - y)$, by taking $L_{R,f}(t) = \sigma_{\min}^{-1}$, we have:
\begin{equation}
    \begin{aligned}
        \|u_t^{f}(x) - u_t^{f}(y)\|&\le |I_t|\|x-y\|\\
        &\le \sigma_{\min}^{-1} \|x-y\|\\
        &=L_{R,f}(t)\|x-y\|.
    \end{aligned}
\end{equation}

Then we express $L_R(t)$, $M_R(t)$ and $U_R(t)$ respectively.
\begin{equation}
    \begin{aligned}
        &L_R(t)\\
        &=\int_{\mathcal F}\sigma_{\text{min}}^{-1}\frac{e^{\frac{R^2}{2}}}{c_R(t)}+\frac{e^{\frac{R^2}{2}}}{c_R(t)}(\sigma_{\text{min}}^{-2}\|f\|\\
        &+\sigma_{\text{min}}^{-2}\frac{e^{\frac{R^2}{2}}}{c_R(t)}F_1)(\sigma_{\min}^{-1}R+\|f\|(1+\sigma_{\min}^{-1}))\mathrm{d}\nu(f)\\
        &= (\sigma_{\text{min}}^{-1}\frac{e^{\frac{R^2}{2}}}{c_R(t)}+\sigma_{\text{min}}^{-3}\frac{e^{R^2}}{c_R^2(t)}F_1R) \\
        &+ (\sigma_{\text{min}}^{-3} \frac{e^{\frac{R^2}{2}}}{c_R(t)} R+\sigma_{\text{min}}^{-2}(1+\sigma_{\text{min}}^{-1})\frac{e^{\frac{R^2}{2}}}{c_R(t)}F_1)F_1\\
        &+\sigma_{\text{min}}^{-2}(1+\sigma_{\text{min}}^{-1})\frac{e^{\frac{R^2}{2}}}{c_R(t)}F_2.
    \end{aligned}
\end{equation}
Moreover, since $c_R(t)$ has a strictly positive lower bound for all $t \in [0,1]$
\begin{equation}
    \begin{aligned}
        c_{R}(t) &= \int_{\mathcal{F}} e^{(-R\|a_t^{f'}\| - \frac{1}{2}\|a_t^{f'}\|^2)}\mathrm{d}\nu(f')\\
        &= \int_{\mathcal{F}} e^{(-R\|\frac{tf}{\sigma_t^2}\| - \frac{1}{2}\|\frac{tf}{\sigma_t^2}\|^2)}\mathrm{d}\nu(f')\\
        &\ge \int_{\mathcal{F}} e^{(-R\|\frac{1}{\sigma_{\min}^2}\|\|f'\| - \frac{1}{2}\|\frac{1}{\sigma_{\min}^2}\|^2\|f'\|^2)}\mathrm{d}\nu(f')\\
        &>0.
    \end{aligned}
\end{equation}
Therefore, there exists $M_1>0$ such that $L_R(t)<M_1$ for all $t\in[0,1]$, and hence $L_R(t)\in L^1(0,1)$.  

And $M_R(t)$ and $U_R(t)$ can be computed similarly and likewise,since $c_R(t)$ has a strictly positive lower bound for all $t\in[0,1]$, there exists $M_2>0$ such that $M_R(t)<M_2$ for all $t\in[0,1]$ and there exists $U_2>0$ such that $U_R(t)<U_2$ for all $t\in[0,1]$, and thus $M_R(t)\in L^1(0,1)$ and $U_R(t)\in L^1(0,1)$.
\end{proof}
\subsection{Proof of \autoref{thm:flow-diff}}\label{sec:flow-diff}

\paragraph{\textbf{\autoref{thm:flow-diff}}}(Initial-Time Derivative of Two-Parameter Flow)
Assume that the dataset measure $\nu$ satisfies $\int_{\mathcal F}|f|^2\mathrm{d}\nu(f)<\infty$, and the conditions of Functional Flow Matching \cite{kerrigan2024functional} hold.  With the conditional flow and conditional velocity chosen in \autoref{eq:selection_conditional}, the corresponding marginal two-parameter flow $\phi_{t\to r}(g)$ is differentiable with respect to $t$ and Fréchet differentiable with respect to $g$ and satisfies, for any $0 <t < r<1$
\begin{equation}
    \begin{aligned}
        &\frac{\partial }{\partial t} \phi_{t\to r}(g) = -D \phi_{t\to r}(g)[u_t(g)],
    \end{aligned}
\end{equation}
where $D\phi_{t\to r}(g):\mathcal{F}\to \mathcal{F}$ is the Fréchet derivative of $\phi_{t\to r}$ at $g$.  This theorem follows from \textcolor{cvprblue}{Lemmas}~\ref{thm:differentiability_of_flow},\ref{thm:differentiability_of_flow2} and \ref{thm:differentiability_of_flow3} in \textcolor{cvprblue}{Appendix~\ref{sec:flow-diff_supporting}}.

\begin{proof}
Based on \textcolor{cvprblue}{Lemmas}~\ref{thm:differentiability_of_flow}, \ref{thm:differentiability_of_flow2}, and \ref{thm:differentiability_of_flow3}, we know that for any $R > 0$, if $g \in B_R$ and $\phi_{t\to \tau}(g) \in B_R$ for all $\tau \in [t, r]$, then $\phi_{t\to \tau}$ is Fréchet differentiable at $g$, $\phi_{t\to \tau}$ is continous at $t$ and $D\phi_{t\to \tau}$ is continous at $g$.  Moreover, by \textcolor{cvprblue}{Lemmas}~\ref{thm:differentiability_of_flow2}, since $u_t$ is bounded, for any $g \in \mathcal{F}$ we can always choose some $R > 0$ such that $\phi_{t\to \tau}(g) \in B_R$ for all $\tau \in [t, r]$, which ensures that $\phi_{t\to \tau}$ is differentiable at $g$, $\phi_{t\to \tau}$ is differentiable at $t$ and $D\phi_{t\to \tau}$ is continous at $g$.

Since $\phi_{t\to r}(g)$ is continuous at $t$, for sufficiently small $\epsilon>0$ we have $\phi_{t-\epsilon\to r}(g)\in B_R$. We then compute the left difference quotient of $\phi_{t\to r}(g)$ with respect to time $t$ as:
\begin{equation}
    \begin{aligned}
        &\frac{\phi_{t\to r}(g)-\phi_{t-h\to r}(g)}{h}\\
        &=\frac{\phi_{t\to r}(g)-\phi_{t\to r}(\phi_{t-\epsilon\to t}(g))}{h}\\
        &=\frac{D\phi_{t\to r}(g)[g-\phi_{t-\epsilon\to t}(g)]+o(\|g-\phi_{t-\epsilon\to t}(g)\|)}{h}.\\
    \end{aligned}
\end{equation}
Let $\delta^-_h(\tau) = \phi_{t-\epsilon\to \tau}(g)-g$, $\tau\in[t-\epsilon,t]$. Since $\phi_{t-\epsilon\to \tau}(g)-g = \int_{t-h}^\tau u_\xi(\phi_{t-h\to \xi}(g))\mathrm{d}\xi$, we have:
\begin{equation}\label{eq:final_proof}
    \begin{aligned}
        \frac{\delta^-_h(\tau)}{h}&= \frac{1}{h}\int_{t-h}^\tau u_\xi(\phi_{t-h\to \xi}(g))\mathrm{d}\xi\\
        &=\frac{1}{h}\int_{t-h}^\tau u_\xi(\phi_{t-h\to \xi}(g))-u_\xi(g)\mathrm{d}\xi \\
        &\qquad\qquad+ \frac{1}{h}\int_{t-h}^\tau u_\xi(g)\mathrm{d}\xi.\\
    \end{aligned}
\end{equation}
We want to determine the limit of $\frac{\delta^-_h(t)}{h}$ as $h\to 0$. Let $\frac{\delta^-_h(t)}{h} = R_h(t) + E_h(t)$, where $E_h(\tau) = \frac{1}{h}\int_{t-h}^\tau u_\xi(g)\mathrm{d}\xi$ and $R_h(\tau) = \frac{1}{h}\int_{t-h}^\tau u_\xi(\phi_{t-h\to \xi}(g))-u_\xi(g)\mathrm{d}\xi$. Based on \textcolor{cvprblue}{Lemmas}~\ref{thm:differentiability_of_flow2} and \ref{thm:differentiability_of_flow3}, $u_t(g)$ is integrable and measurable.  Thus based on Lebesgue differentiation theorem, we have $\|\frac{\int_{t-h}^t u_\xi(g)\mathrm{d}\xi}{h}- u_t(g)\|\to 0$ a.e. on $t$, which means $E_h(t)\to u_t(g)$ a.e. on $t$.  Next we prove $R_h(t)\to 0$.

Based on \textcolor{cvprblue}{Lemmas}~\ref{thm:differentiability_of_flow}, \ref{thm:differentiability_of_flow2}, and \ref{thm:differentiability_of_flow3}, $u_\xi$ is Lipschitz continuous $|u_\xi(\phi_{t-h\to \xi}(g)) - u_\xi(g)| \le L_R(\xi)|\phi_{t-h\to \xi}(g) - g|$ and $L_R\in L^1(0,1)$. Therefore we have
\begin{equation}
    \begin{aligned}
        &R_h(\tau)\\
        &=\|\int_{t-h}^\tau u_\xi(\phi_{t-h\to \xi}(g))-u_\tau(g)\mathrm{d}\xi\|\\
        &\le \frac{1}{h}\int_{t-h}^\tau\|u_\xi(\phi_{t-h\to \xi}(g))-u_\xi(g)\|\mathrm{d}\xi\\
        &\le \int_{t-h}^\tau L_R(\xi)\frac{\|\phi_{t-h\to \xi}(g)-g\|}{h}\mathrm{d}\xi\\
        &= \int_{t-h}^\tau L_R(\xi)\frac{\|\delta_h^-(\xi)\|}{h}\mathrm{d}\xi.\\
    \end{aligned}
\end{equation}
Combined with \autoref{eq:final_proof}, we have inequality for $\|\frac{\delta^-_{h}(\tau)}{h}\|$:
\begin{equation}
    \begin{aligned}
        &\| \frac{\delta^-_{h}(\tau)}{h}\|\\
        &\le \int_{t-h}^\tau L_R(\xi)\frac{\|\delta_{h,\xi}^-\|}{h}\mathrm{d}\xi + \|E_h(\tau)\|.
    \end{aligned}
\end{equation}
$\|E_h(\tau)\|$ is bounded arount $h=0$ for convergence.  By Gr\"onwall's inequality we have
\begin{equation}
    \begin{aligned}
        \| \frac{\delta^-_{h}(\tau)}{h}\| \le \|E_h(\tau)\| e^{\int_{t-h}^\tau L_R(\xi)\mathrm{d}\xi}.
    \end{aligned}
\end{equation}
Thus $R_h(t)$ can be calculated as
\begin{equation}
    \begin{aligned}
        R_h(t)&\le \int_{t-h}^t L_R(\tau)\frac{\|\delta_h^-(\tau)\|}{h}d\tau\\
        &\le \int_{t-h}^t L_R(\tau)\|E_h(\tau)\| e^{\int_{t-h}^\tau L_R(\xi)\mathrm{d}\xi}d\tau\\
        &\le e^{\int_{t-h}^t L_R(\xi)\mathrm{d}\xi} \int_{t-h}^t L_R(\tau)\frac{1}{h}\int_{t-h}^\tau \|u_\xi(g)\|\mathrm{d}\xi d\tau \\
        &\le e^{\int_{t-h}^t L_R(\xi)\mathrm{d}\xi} \int_{t-h}^t L_R(\tau) d\tau \frac{1}{h}\int_{t-h}^t \|u_\xi(g)\|\mathrm{d}\xi. \\
    \end{aligned}
\end{equation}
When $h\to 0$, $\int_{t-h}^t L_R(\xi)\mathrm{d}\xi\to 0$ as $L_R(\xi) \in L^1(0,1)$ integrable and $\frac{1}{h}\int_{t-h}^t \|u_\xi(g)\|\mathrm{d}\xi \to \|u_t(g)\|$ by Lebesgue differentiation theorem for $\|u_t(g)\|$ is measurable and integrable on $t$.  Therefore, we have $R_h(t)\to 0$.

Now we have $\frac{\phi_{t\to r}(g)-\phi_{t-h\to r}(g)}{h} = -D\phi_{t\to r}(g)[u_t(g)] + o(u_t(g))$.  Similarly, we can also prove that $\frac{\phi_{t+h\to r}(g)-\phi_{t\to r}(g)}{h} = -D\phi_{t\to r}(g)[u_t(g)] + o(u_t(g))$.  Therefore $\phi_{t\to r}(g)$ is differentiable with respect to $t$ and $\frac{\partial}{\partial t}\phi_{t\to r}(g)=-D\phi_{t\to r}(g)[u_t(g)]$.
\end{proof}

\subsection{Proof of \autoref{thm:loss-equivalence}}\label{sec:proof_loss_equivalence_u}
\paragraph{\textbf{\autoref{thm:loss-equivalence}}}(Equivalence of Mean Flow Conditional and Marginal Losses)
Under the assumptions of \autoref{thm:flow-diff}, we have $\mathcal{L}_c^M(\theta) = \mathcal{L}^M(\theta) + C$ where $C$ is independent of the model parameters~$\theta$.
\begin{proof}
    First, since we are working in a real Hilbert space, for any $f,g\in \mathcal{F}$, we have
    \begin{equation}
        \begin{aligned}
     &\|\bar u_{t\to r}(g) - \bar u_{t\to r}^\theta(g)\|^2 \\
     &= \langle \bar  u_{t\to r}(g) - \bar u_{t\to r}^\theta(g), \bar  u_{t\to r}(g) - \bar u_{t\to r}^\theta(g)\rangle\\
     &=\|\bar  u_{t\to r}(g)\|^2 + \|\bar u_{t\to r}^\theta(g)\|^2-2\langle\bar  u_{t\to r}(g), \bar u_{t\to r}^\theta(g)\rangle,
        \end{aligned}
    \end{equation}
    and similarly
    \begin{equation}
        \begin{aligned}
        &||(r-t)(\frac{\partial}{\partial t}\bar u_{t\to r}(g)\\
        &+D\bar u_{t\to r}(g)[u^f_t(g)])+u^f_t(g)-\bar u^\theta_{t\to r}(g)||^2\\
        &=\|(r-t)(\frac{\partial}{\partial t}\bar u_{t\to r}(g)+D\bar u_{t\to r}(g)[u^f_t(g)])+u^f_t(g)\|^2\\
        &+\|\bar u^\theta_{t\to r}(g)\|^2-2\langle (r-t)(\frac{\partial}{\partial t}\bar u_{t\to r}(g)\\
        &\qquad\qquad+D\bar u_{t\to r}(g)[u^f_t(g)])+u^f_t(g),\bar u^\theta_{t\to r}(g)\rangle.
        \end{aligned}
    \end{equation}

Note that the first term in both expressions is independent of the model parameters, so we focus on analyzing the remaining two terms.  First, we show that the second term in both expressions is identical, i.e., $\mathbb{E}_{t,r,g\sim\mu_t}[\|\bar u_{t\to r}^\theta(g)\|^2] = \mathbb{E}_{t,r,g\sim\mu^f_t,f\sim \mu_1}[\|\bar u_{t\to r}^\theta(g)\|^2]$
\begin{equation}
    \begin{aligned}
        &\mathbb{E}_{t,r,g\sim\mu_t}[\|\bar u_{t\to r}^\theta(g)\|^2]\\
        &= \int_0^1\int_0^1\int_g \|\bar u_{t\to r}^\theta(g)\|^2 \mathrm{d}\mu_t(g) \mathrm{d}t\mathrm{d}r\\
        &\overset{\textcircled{1}}{=} \int_0^1\int_0^1\int_f\int_g \|\bar u_{t\to r}^\theta(g)\|^2 \mathrm{d}\mu^f_t(g) \mathrm{d}\nu(f)\mathrm{d}t\mathrm{d}r\\
        &= \mathbb{E}_{t,r,g\sim\mu_t^f,f\sim \mu_1}[\|\bar u_{t\to r}^\theta(g)\|^2],
    \end{aligned}
\end{equation}
where $\textcircled{1}$ follows from the relationship between $\mu_t$ and $\mu_t^f$ given in \autoref{eq:ecpectional_conditional}.

Then, we show that the third term in both expressions is identical, i.e., $\langle\bar  u_{t\to r}(g), \bar u_{t\to r}^\theta(g)\rangle = \langle (r-t)(\frac{\partial}{\partial t}\bar u_{t\to r}(g)+D\bar u_{t\to r}(g)[u^f_t(g)])+u^f_t(g),\bar u^\theta_{t\to r}(g)\rangle$
\begin{equation}
    \begin{aligned}
        &\mathbb{E}_{t,r,g\sim\mu_t}[\langle\bar  u_{t\to r}(g), \bar u_{t\to r}^\theta(g)\rangle]\\
        &= \int_0^1\int_0^1\int_g \langle\bar  u_{t\to r}(g), \bar u_{t\to r}^\theta(g)\rangle \mathrm{d}\mu_t(g) \mathrm{d}t\mathrm{d}r\\
        &\overset{\textcircled{1}}{=}  \int_0^1\int_0^1\int_g\langle (r-t)\big(\frac{\partial}{\partial t}\bar{u}_{t\to r}(g) + D \bar u_{t\to r}(g)[u_t(g)]\big)\\
        &\qquad+  u_t(g), \bar u_{t\to r}^\theta(g)\rangle \mathrm{d}\mu_t(g) \mathrm{d}t\mathrm{d}r\\
        &\overset{\textcircled{2}}{=}  \int_0^1\int_0^1\int_g\langle (r-t)\big(\frac{\partial}{\partial t}\bar{u}_{t\to r}(g) \\
        &\qquad+ D \bar u_{t\to r}(g)[\int_{\mathcal{F}} u_t^f(g)\frac{\mathrm{d}\mu_t^f}{\mathrm{d}\mu_t}(g)\mathrm{d}\nu(f)]\big)\\
        &\qquad+  \int_{\mathcal{F}} u_t^f(g)\frac{\mathrm{d}\mu_t^f}{\mathrm{d}\mu_t}(g)\mathrm{d}\nu(f), \bar u_{t\to r}^\theta(g)\rangle \mathrm{d}\mu_t(g) \mathrm{d}t\mathrm{d}r\\
        &\overset{\textcircled{3}}{=}  \int_0^1\int_0^1\int_{\mathcal{F}}\int_g\langle (r-t)\big(\frac{\partial}{\partial t}\bar{u}_{t\to r}(g) + D \bar u_{t\to r}(g)[ u_t^f(g)]\big)\\
        &\qquad+  u_t^f(g), \bar u_{t\to r}^\theta(g)\rangle \frac{\mathrm{d}\mu_t^f}{\mathrm{d}\mu_t}(g)\mathrm{d}\mu_t(g) \mathrm{d}\nu(f) \mathrm{d}t\mathrm{d}r\\        &=  \int_0^1\int_0^1\int_{\mathcal{F}}\int_g\langle (r-t)\big(\frac{\partial}{\partial t}\bar{u}_{t\to r}(g) + D \bar u_{t\to r}(g)[ u_t^f(g)]\big)\\
        &\qquad+  u_t^f(g), \bar u_{t\to r}^\theta(g)\rangle \mathrm{d}\mu_t^f(g)) \mathrm{d}\nu(f) \mathrm{d}t\mathrm{d}r\\
        &=\mathbb{E}_{t,r,g\sim\mu_t^f,f\sim \mu_1}[\langle(r-t)(\frac{\partial}{\partial t}\bar u_{t\to r}(g)+D\bar u_{t\to r}(g)[u^f_t(g)])\\
        &\qquad+u^f_t(g),\bar u^\theta_{t\to r}(g)\rangle],
    \end{aligned}
\end{equation}
where $\textcircled{1}$ follows by substituting \autoref{eq:derivation_u_prediction}, $\textcircled{2}$ applies the relationship between $u_t$ and $u_t^f$ given in \autoref{eq:ecpectional_conditional}, and $\textcircled{3}$ uses the exchangeability between Bochner integrals and inner products, together with the Fubini–Tonelli theorem.  Therefore,  we have $\mathcal{L}_c^M(\theta) = \mathcal{L}^M(\theta) + C$ where $C$ is independent of the model parameters~$\theta$. 
\end{proof}
\subsection{Proof of \autoref{thm:loss-equivalence_x1} and Derivation of \autoref{eq:conditional_x1_flowmatching} and \autoref{eq:conditional_x1_prediction}}\label{sec:proof_for_all_x1}
We first present the derivation of \autoref{eq:conditional_x1_flowmatching}. Given the relationship between $u_{t}$ and $\hat{f}_{1,t}(g)$ as $\hat{f}_{1,t}(g) = (1-t)u_{t}(g)+g$, together with the relationship between $u_{t}(g)$ and $u_{t}^f(g)$ in \autoref{eq:ecpectional_conditional}, we have:
\begin{equation}
    \begin{aligned}
        \hat{f}_{1,t}(g)&=(1-t)u_{t}(g) + g\\
        &=(1-t)\int_\mathcal{F}u_{t}^f(g)\frac{\mathrm{d}\mu_t^f}{\mathrm{d}\mu_t}\mathrm{d}\nu(f)+g\\
        &=\int_\mathcal{F}((1-t)u_{t}^f(g)+g )\frac{\mathrm{d}\mu_t^f}{\mathrm{d}\mu_t}\mathrm{d}\nu(f).\\
    \end{aligned}
\end{equation}
Similar to $u_{t\to r}^f(g)$, $\hat{f}_{1,t\to r}^f(g)$ is defined through 
\begin{equation}\label{eq:maginal_conditional_x1}
    \begin{aligned}
        \hat{f}_{1,t}(g) = \int_\mathcal{F} \hat{f}_{1,t}^f(g) \frac{\mathrm{d}\mu_t^f}{\mathrm{d}\mu_t}\mathrm{d}\nu(f),        
    \end{aligned}
\end{equation}
so we take $\hat{f}_{1,t}^f(g) = (1-t)u_{t}^f(g)+g$.  Substituting the expression of $u_{t\to r}^f(g)$ from \autoref{eq:selection_conditional} gives:
\begin{equation}
    \begin{aligned}
        \hat{f}_{1,t}(g)&=(1-t)u_{t}^f(g)+g\\
        &=(1-t)\frac{1-\sigma_{\text{min}}}{1-(1-\sigma_{\text{min}})t}(tf-g)+(1-t)f+g\\
        &=(\frac{(1-t)(1-\sigma_{\text{min}})}{1-(1-\sigma_{\text{min}})t}-1)(tf-g)+f\\
        &= f- \frac{\sigma_{\text{min}}}{1-(1-\sigma_{\text{min}})t}(tf-g).
    \end{aligned}
\end{equation}

Then we  present the derivation of \autoref{eq:conditional_x1_prediction}.  Since $\bar{u}_{t\to r}$ satisfies \autoref{eq:derivation_u_prediction}, substituting the relationship between $\bar{u}_{t\to r}$ and $\hat{f}_{1,t\to r}$ given in \autoref{eq:relation_u_x1} yields:
\begin{equation}
    \begin{aligned}
    \frac{\hat{f}_{1,t\to r}(g)-g}{1-t} &= (r-t)\big(\frac{\partial}{\partial t}\frac{\hat{f}_{1,t\to r}(g)-g}{1-t} \\
    &\qquad+ D \frac{\hat{f}_{1,t\to r}(g)-g}{1-t}[u_t(g)]\big) +  u_t(g),
    \end{aligned}
\end{equation}
which gives:
\begin{equation}
    \begin{aligned}
    &\hat{f}_{1,t\to r}(g) \\
    &= (1-t)((r-t)\big(\frac{\partial}{\partial t}\frac{\hat{f}_{1,t\to r}(g)-g}{1-t} \\
    &+ D \frac{\hat{f}_{1,t\to r}(g)-g}{1-t}[u_t(g)]\big) +  u_t(g))+g\\
    &= (1-t)((r-t)\big(\frac{(1-t)\frac{\partial}{\partial t}\hat{f}_{1,t\to r}(g)+(\hat{f}_{1,t\to r}(g)-g)}{(1-t)^2} \\
    &+  \frac{D\hat{f}_{1,t\to r}(g)[u_t(g)]-u_t(g)}{1-t}\big) +  u_t(g))+g\\
    &=  (r-t)\frac{\partial}{\partial t}\hat{f}_{1,t\to r}(g)+\frac{r-t}{1-t}(\hat{f}_{1,t\to r}(g)-g)\\
    &+ (r-t)D\hat{f}_{1,t\to r}(g)[u_t(g)]+(1-r)u_t(g)+g\\
    &\overset{\textcircled{1}}{=}  (r-t)\frac{\partial}{\partial t}\hat{f}_{1,t\to r}(g)+\frac{r-t}{1-t}(\hat{f}_{1,t\to r}(g)-g)\\
    &+ (r-t)D\hat{f}_{1,t\to r}(g)[\frac{\hat{f}_{1,t}(g)-g}{1-t}]+(1-r)\frac{\hat{f}_{1,t}(g)-g}{1-t}+g\\
    &=  \frac{r-t}{1-t}((1-t)\frac{\partial}{\partial t}\hat{f}_{1,t\to r}(g)+D\hat{f}_{1,t\to r}(g)[\hat{f}_{1,t}(g)-g])\\
    &+ \frac{1-r}{1-t}\hat{f}_{1,t}(g)+\frac{r-t}{1-t}\hat{f}_{1,t\to r}(g),
    \end{aligned}
\end{equation}
where $\textcircled{1}$ is obtained by substituting the relationship between the flow-matching $x_1$-prediction target $\hat{f}_{1,t}(g)$ and $u_t(g)$, given by $\hat{f}_{1,t}(g) = (1 - t)u_t(g) + g$.  Moving the term $\frac{r - t}{1 - t}\hat{f}_{1,t\to r}(g)$ on the right-hand side of the above equation to the left-hand side and simplifying, we obtain:
\begin{equation}
    \begin{aligned}
        \hat{f}_{1,t\to r}(g) &= \frac{r-t}{1-t}((1-t)\frac{\partial}{\partial t}\hat{f}_{1,t\to r}(g)\\
        &+D\hat{f}_{1,t\to r}(g)[\hat{f}_{1,t}(g)-g])+\hat{f}_{1,t}(g).
    \end{aligned}
\end{equation}

Finally, we prove the \autoref{thm:loss-equivalence_x1}
\paragraph{\textbf{\autoref{thm:loss-equivalence_x1}}}(Equivalence of Mean Flow Conditional and Marginal Losses for $x_1$-prediction)
Under the assumptions of \autoref{thm:flow-diff}, we have $\mathcal{\tilde L}_c^\mathcal{F}(\theta) = \mathcal{\tilde L}^M(\theta) + C$ where $C$ is independent of the model parameters~$\theta$.
\begin{proof}
    The proof of this theorem proceeds similarly to \autoref{thm:loss-equivalence}.  First, for any $f, g \in \mathcal{F}$, expand the norm.
    \begin{equation}
        \begin{aligned}
     &\|\hat{f}_{1,t\to r}(g) - \hat{f}_{1,t\to r}^\theta(g)\|^2 \\
     &= \langle \hat{f}_{1,t\to r}(g) - \hat{f}_{1,t\to r}^\theta(g), \hat{f}_{1,t\to r}(g) - \hat{f}_{1,t\to r}^\theta(g)\rangle\\
     &=\|\hat{f}_{1,t\to r}(g) \|^2 + \|\hat{f}_{1,t\to r}^\theta(g)\|^2-2\langle\hat{f}_{1,t\to r}(g), \hat{f}_{1,t\to r}^\theta(g)\rangle,
        \end{aligned}
    \end{equation}
    and similarly
    \begin{equation}
        \begin{aligned}
        &||\frac{r-t}{1-r}((1-t)\frac{\partial}{\partial_t}\hat{f}_{1,t\to r}(g)\\
        &+D\hat{f}_{1,t\to r}(g)[\hat{f}_{1,t}^f(g)-g])+\hat{f}_{1,t}^f(g) -\hat{f}_{1,t\to r}^\theta(g)||_2^2\\
        &=\|\frac{r-t}{1-r}((1-t)\frac{\partial}{\partial_t}\hat{f}_{1,t\to r}(g)+D\hat{f}_{1,t\to r}(g)[\hat{f}_{1,t}^f(g)-g])\\
        &+\hat{f}_{1,t}^f(g)\|^2+\|\hat{f}_{1,t\to r}^\theta(g)\|^2-2\langle \frac{r-t}{1-r}((1-t)\frac{\partial}{\partial_t}\hat{f}_{1,t\to r}(g)\\
        &+D\hat{f}_{1,t\to r}(g)[\hat{f}_{1,t}^f(g)-g])+\hat{f}_{1,t}^f(g),\hat{f}_{1,t\to r}^\theta(g)\rangle.
        \end{aligned}
    \end{equation}

Note that the first term in both expressions is independent of the model parameters, so we focus on analyzing the remaining two terms.  First, we show that the second term in both expressions is identical, i.e., $\mathbb{E}_{t,r,g\sim\mu_t}[\|\hat{f}_{1,t\to r}^\theta(g)\|^2] = \mathbb{E}_{t,r,g\sim\mu^f_t,f\sim \mu_1}[\|\hat{f}_{1,t\to r}^\theta(g)\|^2]$
\begin{equation}
    \begin{aligned}
        &\mathbb{E}_{t,r,g\sim\mu_t}[\|\hat{f}_{1,t\to r}^\theta(g)\|^2]\\
        &= \int_0^1\int_0^1\int_g \|\hat{f}_{1,t\to r}^\theta(g)\|^2 \mathrm{d}\mu_t(g) \mathrm{d}t\mathrm{d}r\\
        &\overset{\textcircled{1}}{=} \int_0^1\int_0^1\int_f\int_g \|\hat{f}_{1,t\to r}^\theta(g)\|^2 \mathrm{d}\mu^f_t(g) \mathrm{d}\nu(f)\mathrm{d}t\mathrm{d}r\\
        &= \mathbb{E}_{t,r,g\sim\mu_t^f,f\sim \mu_1}[\|\hat{f}_{1,t\to r}^\theta(g)\|^2],
    \end{aligned}
\end{equation}
where (\textcircled{1}) follows from the relationship between $\mu_t$ and $\mu_t^f$ given in \autoref{eq:ecpectional_conditional}.

Then, we show that the third term in both expressions is identical, i.e., $\langle\hat{f}_{1,t\to r}(g), \hat{f}_{1,t\to r}^\theta(g)\rangle = \langle \frac{r-t}{1-r}((1-t)\frac{\partial}{\partial_t}\hat{f}_{1,t\to r}(g)+D\hat{f}_{1,t\to r}(g)[\hat{f}_{1,t}^f(g)-g])+\hat{f}_{1,t}^f(g),\hat{f}_{1,t\to r}^\theta(g)\rangle$
\begin{equation}
    \begin{aligned}
        &\mathbb{E}_{t,r,g\sim\mu_t}[\langle\hat{f}_{1,t\to r}(g), \hat{f}_{1,t\to r}^\theta(g)\rangle]\\
        &= \int_0^1\int_0^1\int_g \langle\hat{f}_{1,t\to r}(g), \hat{f}_{1,t\to r}^\theta(g)\rangle \mathrm{d}\mu_t(g) \mathrm{d}t\mathrm{d}r\\
        &\overset{\textcircled{1}}{=}  \int_0^1\int_0^1\int_g\langle \frac{r-t}{1-t}((1-t)\frac{\partial}{\partial t}\hat{f}_{1,t\to r}(g)\\
        &+D\hat{f}_{1,t\to r}(g)[\hat{f}_{1,t}(g)-g])+\hat{f}_{1,t}(g), \hat{f}_{1,t\to r}^\theta(g)\rangle \mathrm{d}\mu_t(g) \mathrm{d}t\mathrm{d}r\\
        &\overset{\textcircled{2}}{=}  \int_0^1\int_0^1\int_g\langle \frac{r-t}{1-t}((1-t)\frac{\partial}{\partial t}\hat{f}_{1,t\to r}(g)\\
        &+D\hat{f}_{1,t\to r}(g)[\int_\mathcal{F} \hat{f}_{1,t}^f(g) \frac{\mathrm{d}\mu_t^f}{\mathrm{d}\mu_t}\mathrm{d}\nu(f)-g])\\
        &+\int_\mathcal{F} \hat{f}_{1,t}^f(g) \frac{\mathrm{d}\mu_t^f}{\mathrm{d}\mu_t}\mathrm{d}\nu(f), \hat{f}_{1,t\to r}^\theta(g)\rangle \mathrm{d}\mu_t(g) \mathrm{d}t\mathrm{d}r\\
        &\overset{\textcircled{3}}{=}  \int_0^1\int_0^1\int_\mathcal{F}\int_g\langle \frac{r-t}{1-t}((1-t)\frac{\partial}{\partial t}\hat{f}_{1,t\to r}(g)\\
        &+D\hat{f}_{1,t\to r}(g)[ \hat{f}_{1,t}^f(g)-g])\\
        &+ \hat{f}_{1,t}^f(g), \hat{f}_{1,t\to r}^\theta(g)\rangle \frac{\mathrm{d}\mu_t^f}{\mathrm{d}\mu_t}\mathrm{d}\mu_t(g) \mathrm{d}\nu(f) \mathrm{d}t\mathrm{d}r\\
        &=  \int_0^1\int_0^1\int_\mathcal{F}\int_g\langle \frac{r-t}{1-t}((1-t)\frac{\partial}{\partial t}\hat{f}_{1,t\to r}(g)\\
        &+D\hat{f}_{1,t\to r}(g)[ \hat{f}_{1,t}^f(g)-g])\\
        &+ \hat{f}_{1,t}^f(g), \hat{f}_{1,t\to r}^\theta(g)\rangle \mathrm{d}\mu_t^f(g) \mathrm{d}\nu(f) \mathrm{d}t\mathrm{d}r\\
        &=\mathbb{E}_{t,r,g\sim\mu_t^f,f\sim \mu_1}[\langle \frac{r-t}{1-t}((1-t)\frac{\partial}{\partial t}\hat{f}_{1,t\to r}(g)\\
        &+D\hat{f}_{1,t\to r}(g)[ \hat{f}_{1,t}^f(g)-g])+ \hat{f}_{1,t}^f(g), \hat{f}_{1,t\to r}^\theta(g)\rangle],
    \end{aligned}
\end{equation}
where $\textcircled{1}$ follows by substituting \autoref{eq:conditional_x1_prediction}, $\textcircled{2}$ applies the relationship between $u_t$ and $u_t^f$ given in \autoref{eq:maginal_conditional_x1}, and $\textcircled{3}$ uses the exchangeability between Bochner integrals and inner products, together with the Fubini–Tonelli theorem.  Therefore,  we have $\mathcal{\tilde L}_c^\mathcal{F}(\theta) = \mathcal{\tilde L}^M(\theta) + C$ where $C$ is independent of the model parameters~$\theta$. 
\end{proof}

\section{Model Architecture and Details of Dataset, Training and Sampling}\label{appendix:model}
\subsection{Real-World Functional Generation}\label{appendix:model_scientific}

\paragraph{Models}  For the real-world Functional Generation experiments, including 1D time-series and 2D Navier–Stokes data, we follow the setup of \cite{kerrigan2024functional} and compare with FFM, DDO, FDDPM and GANO. The implementations of GANO, DDPM, and DDO are directly adopted from \cite{kerrigan2024functional}; please refer to \cite{kerrigan2024functional} for additional details.  FFM employs a 4-layer Fourier Neural Operator (FNO) implemented using the NeuralOperator library. Following \cite{kerrigan2024functional}, we use linearly interpolated spatial coordinates in $[0,1]$ as explicit position embeddings and scale the temporal condition $t$ by $10^{-3}$ as a time embedding. The spatial and temporal embeddings are concatenated with the input data, yielding a total input dimension of data channels$+ 2$.  Our method adopts the same architecture as FFM but introduces two temporal conditions, $t$ and $r$. Both are scaled and concatenated with spatial embeddings and input data, resulting in data channels$+ 3$ input dimensions.  Dataset-specific configurations, including the number of Fourier modes, input channels, hidden channels, projection channels, spatial dimensionality, and total parameter count, are summarized in \autoref{tab:fno_config}.

\begin{table}[H]
\centering
\caption{FNO configuration for different datasets in real-world functional generation experiments. 
Each model uses a 4-layer FNO implemented with the \texttt{neuralop} library.}
\vspace{0.3em}
\resizebox{\linewidth}{!}{
\begin{tabular}{lcccccc}
\toprule
\textbf{Dataset} & \textbf{Fourier Modes} & \textbf{Input Channels} & \textbf{Hidden Channels} & 
\textbf{Projection Channels} & \textbf{Spatial Dim.} & \textbf{Total Params} \\
\midrule
AEMET & 64 & 4 & 256 & 128 & 1D & 9.4M \\
Gene & 16 & 4 &  256 & 128 & 1D & 3.2M \\
Population & 32 & 4 &  256 & 128 & 1D & 5.3M \\
GDP & 32 & 4 &  256 & 128& 1D & 5.3M \\
Labor & 32 & 4 & 256 & 128 & 1D & 5.3M \\
Navier–Stokes (2D) & (32, 32) & 5 & 128 & 256 & 2D & 35.9M \\
\bottomrule
\end{tabular}
}
\label{tab:fno_config}
\end{table}

\paragraph{Dataset\&Metrics}Following \cite{kerrigan2024functional}, our experiments cover both 1D time-series and 2D Navier–Stokes functional datasets.
The Navier–Stokes dataset consists of 2D incompressible fluid flow solutions on a $64\times64$ periodic grid, originally introduced by \cite{li2022learning}. To reduce redundancy and improve training efficiency, we randomly sample 20,000 frames from the original dataset for training.  The 1D time-series category includes five datasets: AEMET, Gene, Population, GDP, and Labor.  The AEMET dataset contains 73 temperature curves recorded by weather stations in Spain between 1980 and 2009, each represented over 365 daily points.  The Gene Expression dataset comprises 156 gene-activity time series measured across 20 time steps.  The Population dataset provides population trajectories for 169 countries from 1950 to 2018 (69 time points).
The GDP dataset records GDP-per-capita time series for 145 countries over the same 69-year span.  The Labor dataset contains quarterly labor-force measurements from 2017 to 2022 (24 time points) for 35 countries.


Following \cite{kerrigan2024functional}, for the five 1D time-series datasets, we evaluate the quality of generated functions using a set of statistical functionals, including mean, variance, skewness, kurtosis, and autocorrelation. For each functional, we compute its value over all generated functions and compare it with the corresponding ground-truth statistics from the real dataset using mean squared error (MSE). This captures the model’s ability to reproduce key statistical characteristics of temporal signals.  For the 2D Navier–Stokes dataset, we employ two complementary distribution-level metrics. \textbf{Density MSE} measures the statistical discrepancy between the marginal value distributions of real and generated samples. Each dataset is flattened into scalar values representing pointwise function evaluations, from which continuous probability densities are estimated via kernel density estimation (KDE). The mean-squared difference between the estimated densities quantifies how well the generated data reproduce the overall statistical distribution of function values. \textbf{Spectrum MSE} evaluates the discrepancy between the average Fourier energy spectra of real and generated samples. Each sample is transformed into the frequency domain using a 2D FFT, and spectral energies are aggregated over wavenumber bands and averaged across the dataset. The resulting mean-squared error reflects the model’s ability to match the multi-scale energy distribution of the target fluid dynamics.  

\paragraph{Training\&Sampling}  During training and sampling, Gaussian processes with a Matérn kernel are used to sample the initial noise functions accurately.  For the 1D datasets, we use a kernel length of 0.01 and a kernel variance of 0.1, while for the 2D Navier–Stokes dataset we use a kernel length of 0.01 and a kernel variance of 1.0.  All models are trained using the Adam optimizer. The training and sampling procedures for GANO, DDPM, DDO, and FFN follow \cite{kerrigan2024functional}, and we refer readers to that work for implementation details.  For our method, in the 1D setting we use an initial learning rate of $1\times10^{-3}$. For the AEMET dataset, the learning rate is reduced by a factor of 0.1 after 50 epochs, while no decay is applied for the other datasets.  In the 2D setting, we use an initial learning rate of $5\times10^{-4}$, which is decayed by a factor of 0.5 every 40 epochs.  Consistent with Mean Flow, we employ an adaptive loss function $\mathcal{L} = w|\Delta|_2^2$, $w = \frac{1}{(|\Delta|_2^2 + c)^p}$ where $\Delta$ denotes the regression error, $c > 0$ is a small stabilizing constant (set to $10^{-3}$ in our experiments), and $p = 0.75$.  The time variables $t$ and $r$ are sampled using a lognormal distribution with mean $-0.4$ and variance $0.01$ for the 1D datasets, and uniformly from the interval $[0,1]$ for the 2D dataset. Their values are swapped whenever $t > r$.  By default, $r$ is set equal to $t$ with a probability of 0.25, except for the Population, GDP, and Labor datasets, where a probability of 0.125 is used.

\subsection{Image Generation Based on Functional}\label{appendix:model_image}
\paragraph{Models} As illustrated in \autoref{fig:illustration_image_generation_detals}, we adopt the neural architecture of Infty-Diff\cite{bond2024infty}, where both the input and output are continuous image functions represented by randomly sampled subsets of coordinates. To handle such sparse input–output mappings, the network consists of two components: a Sparse Neural Operator and a Dense U-Net/UNO. 

The Sparse Neural Operator processes the irregularly sampled pixels and maps them into feature vectors on the same subsets of coordinates. These features are then interpolated onto a lower-resolution dense grid using k-nearest neighbor (KNN) interpolation with neighborhood size 3. On the dense grid, a Dense U-Net/UNO is applied to extract high-level representations.  Following Infty-Diff’s observation that U-Net and UNO yield comparable results, we employ the U-Net for simplicity. The dense U-Net operates on a $128^2$ base grid for image datasets with a resolution of $256^2$, with $128$ base channels and five resolution levels whose channel multipliers are $[1, 2, 4, 8, 8]$. Self-attention modules are inserted after the $16^2$ and $8^2$ resolution stages to enhance global context aggregation.  After dense processing, the resulting features are inversely interpolated back to the coordinate subsets using KNN. The reconstructed features are further refined through another Sparse Neural Operator, and the final output is obtained via a residual connection with the initial sparse features.

Following the implementation guidelines of Infty-Diff, we adopt a linear-kernel Sparse Neural Operator implemented with TorchSparse for efficiency. Each Sparse Operator module consists of one pointwise convolution layer, three linear-kernel convolution operator layers, and another pointwise convolution layer. Each operator layer consists of a sparse depthwise convolution with 64 channels and a kernel size of 7 for $256^2$-resolution images, followed by two pointwise convolution layers with 128 internal channels.  

For time conditioning in both the Sparse Neural Operator and the Dense Network, we use positional embeddings \cite{vaswani2017attention} to encode the time variable following Mean Flow~\cite{geng2025mean}. The resulting embeddings of $t$ and $r$ are added to replace the original time-embedding conditional input in Infty-Diff. In total, the network comprises $\sim 420M$ trainable parameters.

\begin{figure}[t]
  \centering
  \includegraphics[width=\linewidth]{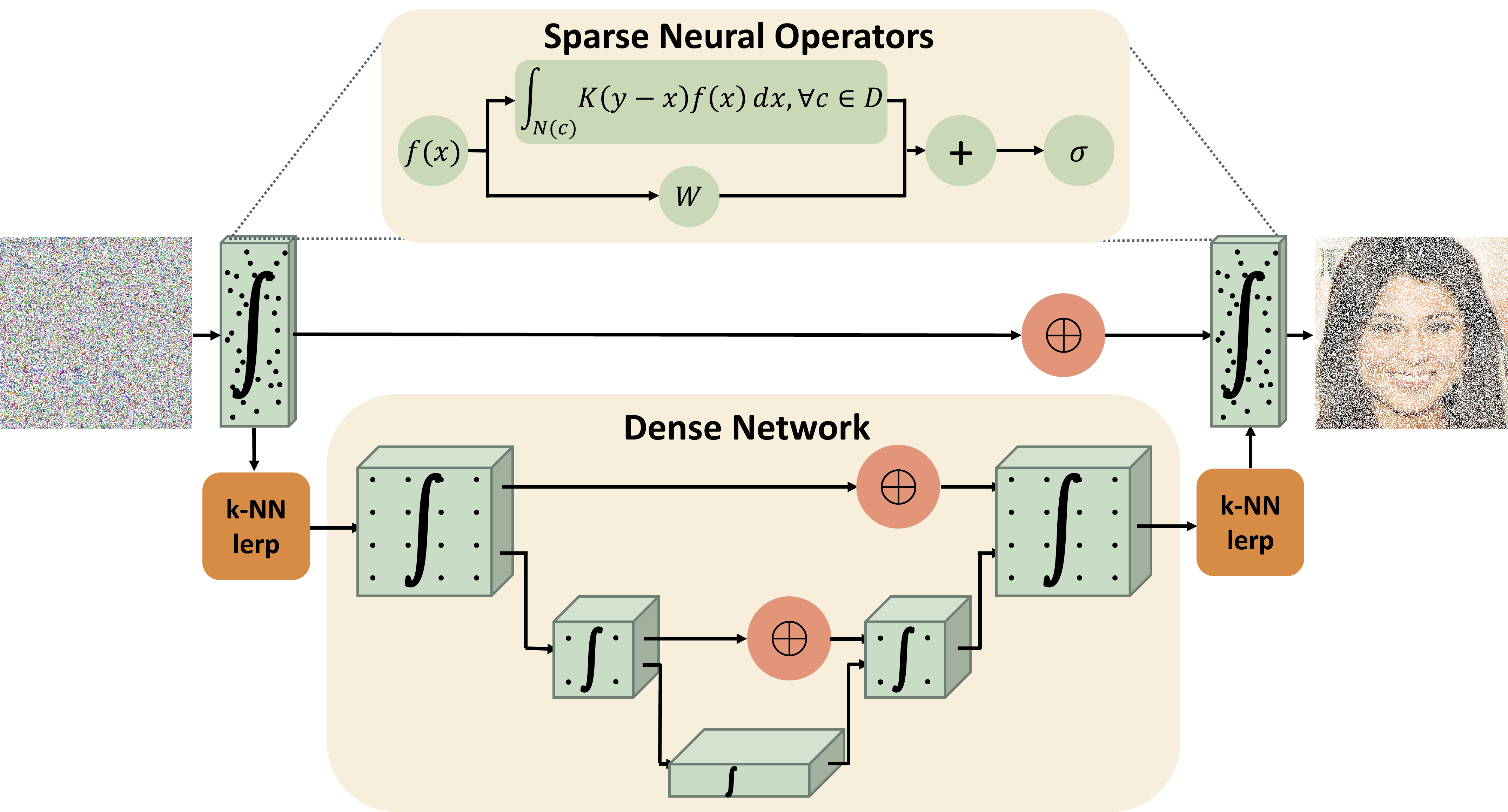}
    \vspace{-0.2cm}
  \caption{\textbf{Hybrid Sparse–Dense Neural Operator for Infty-Diff.}  The model for functional-based image generation follows the hybrid sparse–dense Neural Operator design in Infty-Diff, where both the input and output are functions represented by randomly sampled pixels. The architecture consists of a Sparse Neural Operator and a Dense Network, and this figure presents the internal structure corresponding to the schematic in \autoref{fig:illustration_image_generation}}\label{fig:illustration_image_generation_detals}
  \vspace{-0.6cm}
\end{figure}

\paragraph{Dataset\&Metrics} We trained our models on three unconditional datasets CelebA-HQ \cite{karras2017progressive}, FFHQ \cite{karras2019style}, and LSUN-Church \cite{yu2015lsun} and one conditional dataset AFHQ \cite{choi2020stargan}.  All datasets were resized to a resolution of $256 \times 256$ with LSUN-Church center-cropped along the shorter side. Following Infty-Diff \cite{bond2024infty}, we randomly sampled one quarter of the pixels during training to validate functional-based generation.  CelebA-HQ is a high-quality derivative of CelebA containing 30,000 high-resolution human face images. FFHQ includes 70,000 diverse face images, LSUN-Church provides around 126,000 images of churches, and AFHQ comprises 15,000 animal images from three categories: cats, dogs, and wild animals, which are used as conditional labels for conditional generation.  To evaluate the models, we generated 50K samples from each trained model and compared them with the corresponding real datasets.  Following \cite{kynkaanniemi2023role}, we computed the Fréchet Inception Distance (FID)~\cite{heusel2017gans} using CLIP features~\cite{radford2021learning} extracted from the clip\_vit\_b\_32 encoder, which correlates better with human perception of image quality, especially for multiple-resolution generation.  We denote this metric as $\text{FID}_{\text{CLIP}}$.  For completeness and comparison with previous works, we also report the standard FID computed using Inception-V3 features, denoted as $\text{FID}$.

\paragraph{Training\&Sampling}   In the functional-based image generation experiments, we follow Infty-Diff \cite{bond2024infty} and use mollified white noise to approximate an infinite-dimensional Gaussian process sampled from a Gaussian measure $\mathcal{N}(0,C_0)$ as the initial noise function for training and sampling. Specifically, the white noise is convolved with a Gaussian kernel $k(\cdot)$ to ensure that the resulting samples lie in the Hilbert space $\mathcal{F}$. The mollification is expressed as
\begin{equation}
    \begin{aligned}
h(c) &= \int_{\mathbb{R}^n} K(c - y, l) x(y) dy\\
K(y, l) &= \frac{1}{(4\pi l)^{n/2}} e^{-\frac{|y|^2}{4l}},        
    \end{aligned}
\end{equation}
where $l > 0$ is the smoothing parameter, which we set to one pixel width in our experiments. For image data, we take $n = 2$. Here, $x(y)$ denotes the original white noise before smoothing, and $h(c)$ represents the mollified function after applying the Gaussian kernel. We also apply the same Gaussian mollification to the training images in the dataset to improve regularity and ensure the data distribution satisfies the integrability requirements of the function space. Following Infty-Diff, the generated mollified output is then restored to a sharper image using a Wiener-filter–based approximate inverse defined in the Fourier domain as
\begin{equation}
    \begin{aligned}
\tilde{x}(\omega) =
\frac{e^{-\omega^2 t}}
{e^{-2(\omega^2 t)} + \epsilon^2}
\hat{h}(\omega),        
    \end{aligned}
\end{equation}
where $\epsilon$ denotes an estimate of the inverse signal-to-noise ratio (SNR).
Here, $\hat{h}(\omega)$ and $\tilde{x}(\omega)$ denote the Fourier transforms of the mollified function $h(c)$ and the reconstructed output $x(c)$, respectively.  This operation effectively recovers high-frequency details while maintaining numerical stability. 

All models are trained on four NVIDIA L40s GPUs using the Adam optimizer for a total of 800K steps, with the learning rate gradually reduced from an initial value of $5.0\times10^{-5}$ to a final value of $7.8\times10^{-7}$. The total batch size is set to 16 for all experiments.  Consistent with Mean Flow, we employ an adaptive loss $\mathcal{L}=w|\Delta|_2^2$, where $\Delta$ denotes the regression error and $w=1/(|\Delta|_2^2+c)^p$ with a small constant $c>0$ (e.g., $10^{-3}$). We set $p=0.5$ for all experiments.  The time variables $t$  and $r$ are sampled  uniformly from the interval $[0,1]$, swapping their values whenever $t>r$, and setting $r=t$ with a probability of 0.5.  Following Infty-Diff \cite{bond2024infty}, we employ the manner of Diffusion Autoencoder \cite{preechakul2022diffusion} to mitigate stochasticity arising from the high variance of randomly sampled coordinate subsets.

\subsection{3D Shape Generation}\label{appendix:model_3D}

\begin{figure}[t]
  \centering
  \includegraphics[width=\linewidth]{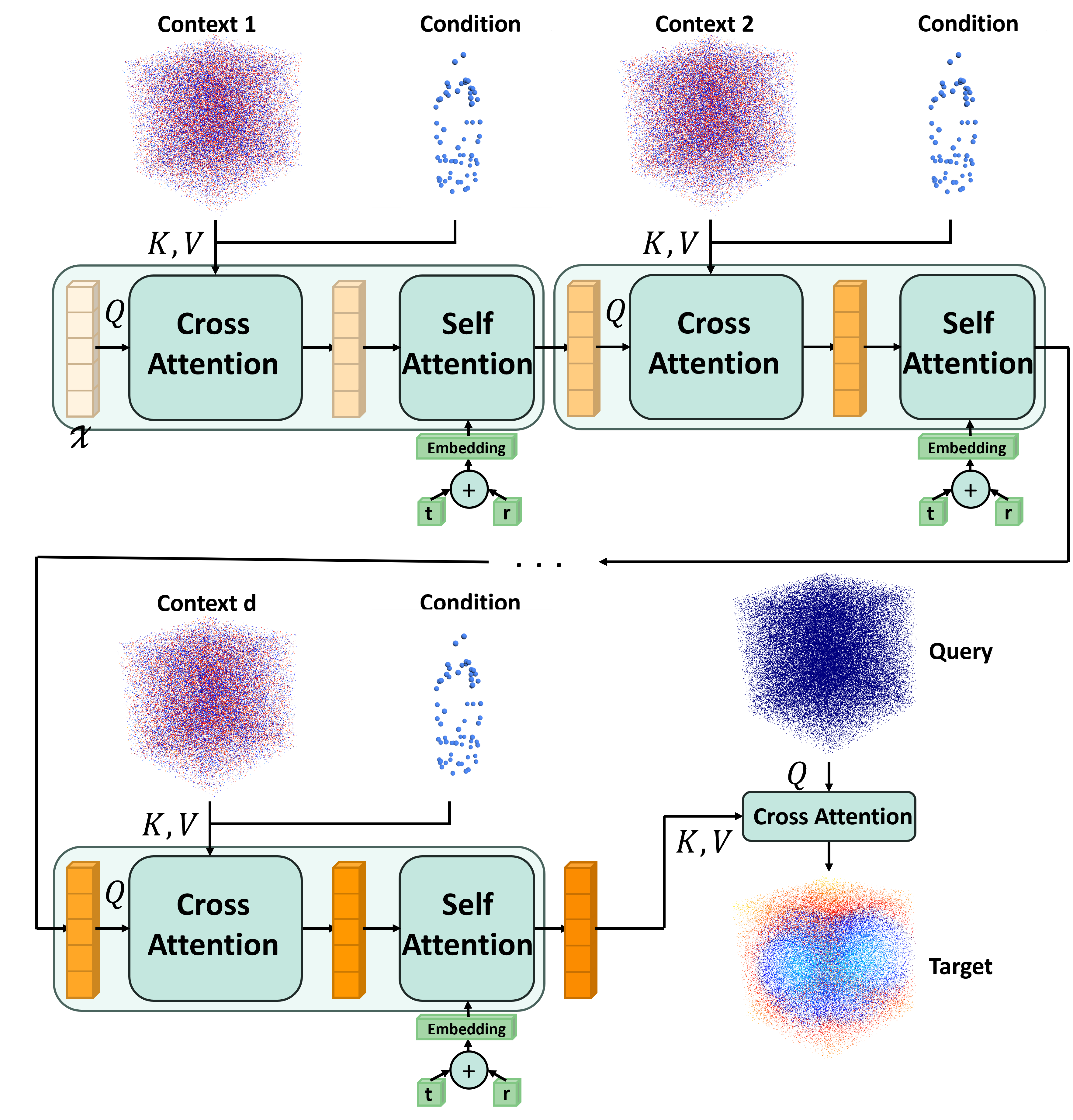}
    \vspace{-0.7cm}
  \caption{
  \textbf{The Architecture of \textbf{Functional Diffusion}}.  Both the input and output of the model are functions, which are represented by randomly sampled spatial points and their corresponding function values.  The input function is referred to as the context, and the output function as the query.  Each context point and its associated value jointly form the functional representation vector $\mathcal{X}$. 
  The context points and their values first interact with the functional representation vector $\mathcal{X}$ through cross-attention, followed by cascaded cross- and self-attention modules that progressively yield the latent vector representing the output function. The latent vector then interacts with the query points through another cross-attention layer to predict the query values, together forming the output function.  Following \cite{zhang2024functional}, the model is conditioned on 64 target surface points, from which it reconstructs the target surface.
  }
  \vspace{-0.2cm}
  \label{fig:illustration_shape_generation_detals}
\end{figure}

\paragraph{Models.}  We use the architecture of Functional Diffusion \cite{zhang2024functional} for 3D shape generation as shown in \autoref{fig:illustration_shape_generation_detals}. In this model, both the input and output functions are represented by randomly sampled spatial points and their corresponding function values, enabling a continuous functional representation independent of discretized grids. Specifically, the input function $f_c$ is represented on context points $\{x_c^i\}_{i=1}^n$ with values $\{v_c^i\}_{i=1}^n$, where $v_c^i = f_c(x_c^i)$, and the output function $f_q$ is represented on query points $\{x_q^j\}_{j=1}^m$ with values $\{v_q^j\}_{j=1}^m$ where $v_q^j = f_q(x_q^j)$. $n$ and $m$ are the number of context points and query points respectively.  The Functional Diffusion framework naturally supports the case where the context and query points differ, allowing flexible mappings between input and output functions.

Following \cite{zhang2024functional}, we evenly partition the context set $(\{x_c^i\}_{i=1}^n,\{v_c^i\}_{i=1}^n)$ into $d$ groups $I_1,\dots,I_d$, where $\cup_{k=1}^d I_k=
\{1,\dots,n\}$. Each group is processed by an attention block consisting of a cross-attention and a self-attention module. The cross-attention module takes the embedding of $(\{x_c^i\}_{i\in I_k},\{v_c^i\}_{i\in I_k})$ and a latent vector. The latent vector is propagated from the previous attention block, while the first block initializes it with a learnable latent variable $\mathcal{X}$ that represents the functional itself.  For each group $(\{x_c^i\}_{i\in I_k},\{v_c^i\}_{i\in I_k})$, a point-wise embedding is obtained by summing the Fourier positional encoding of spatial coordinates $\{x_c^i\}_{i\in I_k}$ and the embedding of the corresponding values $\{v_c^i\}_{i\in I_k}$. The resulting embeddings are concatenated with a conditional embedding, which may come from semantic labels or partially observed conditional points. In our experiments, we use 64 partially observed surface points as conditional inputs. These combined embeddings are passed through $K$ and $V$ networks to produce the keys and values for the cross-attention, while the latent vector, transformed by a $Q$ network, serves as the query.  The temporal embedding follows Mean Flow \cite{geng2025mean}, employing standard sinusoidal positional encodings \cite{vaswani2017attention}. The sum of the embeddings of $t$ and $r$ replaces the time-embedding input originally used in Functional Diffusion.

After cascading $d$ attention blocks, the resulting latent vector encodes the representation of the target function. This latent is then used as the key and value input to a final cross-attention block, where the query corresponds to the Fourier positional encodings of the query points $\{x_q^j\}_{j=1}^m$. The attention output yields the predicted query values $\{v_q^j\}_{j=1}^m$, forming the representation of the generated function $f_q$ with $\{x_q^j\}_{j=1}^m$.  In our implementation, the embedding dimension is set to 784, the number of groups $d$ is 24, and the attention layers adopt a multi-head mechanism with 8 heads, each with an internal head dimension of 64. 


\paragraph{Task\&Dataset\&Metrics} We follow the surface reconstruction task setup in Functional Diffusion \cite{zhang2024functional}, where the model is required to reconstruct a target surface given 64 observed points sampled from that surface. Specifically, the generative model is conditioned on these 64 target surface points to generate the SDF function of the corresponding complete surface from the initial noise.  Consistent with Functional Diffusion \cite{zhang2024functional}, we use the ShapeNet-CoreV2~\cite{chang2015shapenet} dataset, which contains 57,000 3D models across 55 object categories. Following the same preprocessing procedure \cite{zhang2024functional,zhang20233dshape2vecset,zhang20223dilg}, we convert each ShapeNet mesh into a signed distance field (SDF) and randomly sample $n=49152$ points from the domain $[0,1]^3$ to obtain the context points and their corresponding SDF values. Separately, we sample another $m =2048$ points and their SDF values to form the query points and query values, while a distinct set of surface points is sampled near the zero-level set of the SDF to serve as the conditioning input. During training, each data instance consists of context points $\{\tilde x_c^i\}_{i=1}^n$ and values $\{\tilde v_c^i\}_{i=1}^n$, query points $\{\tilde x_q^j\}_{j=1}^m$ and values $\{\tilde v_q^j\}_{j=1}^m$, and a randomly selected set of 64 surface points $\{C^l\}_{l=1}^{64}$ used as the conditional input.  Note that $(\{\tilde x^i_c\}_{i=1}^n, \{\tilde v^i_c\}_{i=1}^n)$ and $(\{\tilde x^j_q\}_{j=1}^m, \{\tilde v^j_q\}_{j=1}^m)$ correspond to the same sample, the reference SDF field $f \sim \nu$ from the dataset, rather than the context and query functions used as the input $g$ and output $\hat{f}_{1,t\to r}(g)$ in Functional Mean Flow training; the construction of the input and reference output functions from each data instance is detailed in the Training\&Sampling paragraph.

We evaluate the model using Chamfer Distance, F1-score, and Boundary Loss, following \cite{zhang2024functional,zhang20233dshape2vecset,zhang20223dilg}. The Chamfer Distance measures the average bidirectional distance between the generated and target point sets, while the F1-score quantifies the precision–recall trade-off of the reconstructed surface points. The Boundary Loss assesses geometric fidelity near the zero-level surface and is formally defined as $\text{Boundary}(f) = \frac{1}{|\mathcal{E}_\Omega|} \sum_{i \in \mathcal{E}_\Omega}| f(\mathbf{x}_i) - q(\mathbf{x}_i) |^2$  where $\mathcal{E}_\Omega$ denotes the set of sampled spatial points near the surface boundary, $f(\mathbf{x}_i)$ represents the predicted SDF value, and $q(\mathbf{x}_i)$ is the ground-truth SDF. This metric measures the mean squared deviation between predicted and true SDFs in the boundary region, capturing the fine-grained accuracy of surface reconstruction. For evaluation, both Chamfer Distance and F1-score are computed by uniformly sampling 50K points on each surface, whereas Boundary Loss is computed using 100K sampled points.  We follow the same data split as \cite{zhang2024functional}, training the model on the training split and evaluating it on the test split.

\paragraph{Training\&Sampling.} In our 3D shape generation experiments, we follow Functional Diffusion \cite{zhang2024functional} and approximate samples from the Gaussian measure by linearly interpolating white noise defined on a coarse $64^3$ lattice over the 3D domain. This interpolated field serves as an efficient estimate of a Gaussian process sample, providing a computationally practical alternative to direct Gaussian process sampling and substantially improving sampling efficiency in the 3D setting.  Since we adopt the $x_1$-prediction variant of Functional Mean Flow for model training, the neural network input is $g =(1-(1-\sigma_{\text{min}})t)f_0+tf$ in \autoref{eq:selection_conditional}, where $f$ denotes the function sampled from the dataset. In practice, $f$ corresponds to each function represented by the instance $(\{\tilde x_c^i\}_{i=1}^n, \{\tilde v^i_c\}_{i=1}^n)$ and $(\{\tilde x^j_q\}_{j=1}^m, \{\tilde v^j_q\}_{j=1}^m)$ from dataset. The network output is then used jointly with the predicted $\hat{f}_{1,t}^f(g)$ to compute the training loss \autoref{eq:meanflow_x1_loss}. Therefore, context points and context values that represent the input function $g$ to the network are calculated as $x^i_c=\tilde x^i_c$ and $v^i_c=t\tilde v^i_c+(1-(1-\sigma_\text{min})t)r^i_c$ from the the instance $(\{\tilde x^i_c\}_{i=1}^n, \{\tilde v^i_c\}_{i=1}^n)$ and $(\{\tilde x^j_q\}_{j=1}^m, \{\tilde v^j_q\}_{j=1}^m)$, whereas the query points representing the output function and the corresponding reference query values of $\hat{f}_{1,t}^f(g)$ are obtained following $x^j_q=\tilde x^j_q$ and $v^j_q = \tilde v^j_q + \sigma_{\text{min}}r^j_q$,  because $\hat{f}_{1,t}^f(g)$ from \autoref{eq:conditional_x1_flowmatching} can be computed as
\begin{equation}
\begin{aligned}
\hat{f}_{1,t}^f(g)
&= \frac{\sigma_{\min}}{1 - (1 - \sigma_{\min})t}(g - t f) + f \\
&= \sigma_{\min} f_0 + f
\end{aligned}
\end{equation}
Here $r_c^i$ and $r_q^j$ represent the values of initial noise function $f_0 \sim \mu_0$ in \autoref{eq:selection_conditional} and \autoref{eq:conditional_x1_flowmatching} evaluated at the context and query points $x_c^i$ and $x_q^j$, respectively.  Besides the separate reference term $\hat{f}_{1,t}^f(g)$ in \autoref{eq:meanflow_x1_loss}, the term $D\hat{f}_{1,t\to r}(g)[\hat{f}_{1,t}^f(g) - g]$ also involves an $\hat{f}_{1,t}^f(g)$. This instance of $\hat{f}_{1,t}^f(g)$ should be evaluated using the context points, denoted as $(\{\tilde x^i_c\}_{i=1}^n, \{\tilde v^i_c + \sigma_{\text{min}}r^i_c)\}_{i=1}^n)$, since it serves as part of the functional derivative with respect to the network input $g$.

The model is trained on four H200 GPUs using the Adam optimizer for a total of 200K steps. The learning rate is gradually reduced from an initial value of $A$ to a final value of $B$. The total batch size is set to 16 for all experiments. Consistent with Mean Flow, we employ an adaptive loss function $\mathcal{L} = w|\Delta|_2^2$, $w = \frac{1}{(|\Delta|_2^2 + c)^p}$ where $\Delta$ denotes the regression error, $c > 0$ is a small stabilizing constant (set to $10^{-3}$ in our experiments), and $p = 0.5$.  The time variables $t$ and $r$ are uniformly sampled from the interval $[0,1]$, with their values swapped whenever $t > r$, and $r$ is set equal to $t$ with a probability of 0.5.  

During sampling, we randomly draw $n = 49152$ points from the domain $[0,1]^3$ as context points, and use a dense $128^3$ grid as query points. After predicting the SDF values on the $128^3$ grid, the final mesh surface is reconstructed using the Marching Cubes algorithm.

\section{Example Python Implementation}\label{appendix:implementation}

\subsection{Unified Implementation}\label{appendix:unified_implementation}
\subsubsection{Unified Implementation for $u$-prediction Functional Mean Flow}
In the following training code, \verb|gp_like(g)| is the Gaussian Process sampling function (for finite-dimensional cases it can be replaced by \verb|randn_like()|), and \verb|sample_t_r()| is the time sampling function. \verb|u(g, r, t)| denotes the learned model whose input and output are functions in a specific representation, while \verb|f| denotes a batch of training data under the same representation. The parameter \verb|sigma_min| corresponds to $\sigma_{\text{min}}$ in \autoref{eq:selection_conditional}, and \verb|metric| denotes the loss function.
\begin{lstlisting}[style=pythonstyle]
    t, r = sample_t_r()
    f_0 = gp_like(f)
    
    coef = 1-sigma_min
    g = (1 - coef*t) * f_0 + t * f
    v = coef/(1-coef*t)*(t*f-g)+ f
    
    u, dudt = jvp(u, (g, t, r), (v, 1, 0))
    u_tgt = (r - t) * dudt + v
    error = u - stopgrad(u_tgt)
    
    loss = metric(error)
\end{lstlisting}
The following code is for inference
\begin{lstlisting}[style=pythonstyle]
    f_0 = gp_like(f)
    f=u(f_0,0,1)+f_0
\end{lstlisting}
\subsubsection{Unified Implementation for $x_1$-prediction Functional Mean Flow}
In the following code, \verb|gp_like(g)| is the Gaussian Process sampling function (for finite-dimensional cases it can be replaced by \verb|randn_like()|), and \verb|sample_t_r()| is the time sampling function. \verb|x1(g, r, t)| denotes the learned model whose input and output are functions in a specific representation, while \verb|f| denotes a batch of training data under the same representation. The parameter \verb|sigma_min| corresponds to $\sigma_{\text{min}}$ in \autoref{eq:conditional_x1_prediction}, and \verb|metric| denotes the loss function.
\begin{lstlisting}[style=pythonstyle]
    t, r = sample_t_r()
    f_0 = gp_like(f)
    
    coef = 1-sigma_min
    g = (1 - coef*t) * f_0 + t * f
    f1_f = sigma_min/(1-coef*t)*(g-t*f)+ f
    
    f1, df1dt = jvp(x1, (g, t, r), (f1_f-g, 1-t, 0))
    f1_tgt = (r - t)/(1-r) * df1dt + f1_f
    error = f1 - stopgrad(f1_tgt)
    
    loss = metric(error)
\end{lstlisting}
The following code is for inference
\begin{lstlisting}[style=pythonstyle]
    f_0 = gp_like(f)
    f=x1(f_0,0,1)
\end{lstlisting}
\subsection{3D SDF-Specific Implementation}\label{appendix:specific_implementation}
As discussed in \autoref{appendix:model_3D}, each training instance in Functional Diffusion consists of a tuple $(\{\tilde x^i_c\}_{i=1}^n, \{\tilde v^i_c\}_{i=1}^n,\{\tilde x^j_q\}_{j=1}^m, \{\tilde v^j_q\}_{j=1}^m,\{C^l\}_{l=1}^{64})$, representing (context points, context values, query points, query values, and condition). The pairs $\{\tilde x^i_c\}_{i=1}^n, \{\tilde v^i_c\}_{i=1}^n$ and $\{\tilde x^j_q\}_{j=1}^m, \{\tilde v^j_q\}_{j=1}^m$ correspond to two different samplings of the same reference SDF function $f \sim \nu$, differing only in their spatial locations $\tilde x^i_c$ and $\tilde x^j_q$. It is therefore crucial in implementation to clearly distinguish between these two representations to avoid ambiguity during training. In practice, the input function $g \sim \mu^f_t$ is constructed as $(\{\tilde x_c^i\}_{i=1}^n,\{t\tilde v_c^i+(1-(1-\sigma_{\text{min}})t)r_c^i\}_{i=1}^n)$, the separate reference function $\hat f_{1,t}^f(g)$ is evaluated as $(\{\tilde x_q^j\}_{j=1}^m,\{\tilde v_q^j+\sigma_{\text{min}}r_q^j\}_{j=1}^m)$, and the reference function $\hat f_{1,t}^f(g)$ in $D\hat{f}_{1,t\to r}(g)[\hat{f}_{1,t}^f(g) - g]$ is evaluated as $(\{\tilde x_c^i\}_{i=1}^n,\{\tilde v_c^i+\sigma_{\text{min}}r_c^i\}_{i=1}^n)$, where $r^i_c$ and $r^j_q$ denote the values of the initial noise function $f_0 \sim \mu_0$ in \autoref{eq:selection_conditional} and \autoref{eq:conditional_x1_flowmatching}, evaluated at the context and query points $x^i_c$ and $x^j_q$, respectively.

In the code, \verb|xc|, \verb|vc|, \verb|xq|, \verb|vq|, and \verb|cond| respectively denote the context point $\{\tilde x^i_c\}_{i=1}^n$, context value $\{\tilde v^i_c\}_{i=1}^n$, query point $\{\tilde x^j_q\}_{j=1}^m$, query value $\{\tilde v^j_q\}_{j=1}^m$, and condition $\{C^l\}_{l=1}^{64}$, respectively. As mentioned earlier, Functional Diffusion constructs the initial Gaussian measure using linear interpolation over a random value on coarse grid.  In the training code, \verb|s| specifies the coarse grid resolution, and \verb|interpolate(rg, xc)| performs interpolation from the random grid values \verb|rg| to the sample points \verb|xc|.
The function \verb|sample_t_r()| is the time sampling function, and \verb|x1(xc, g, xq, r, t, cond)| denotes the learned model, where \verb|(xc, g)| represents the input function $g$, and \verb|xq| specifies the query points for the output function. \verb|B| is batch size.
\begin{lstlisting}[style=pythonstyle]
    rg = torch.randn(B, 1,s, s, s)    
    rc = interpolate(rg,xc)
    rq = interpolate(rg,xq)
    t, r = sample_t_r()
    coef = 1-sigma_min
    g = (1 - coef*t) * rc + t * vc
    f1_f_c = sigma_min*rc+vc
    f1_f_q = sigma_min*rq+vq
    x1_partial = partial(x1, xc = xc, xq = xq, cond=cond)
    f1, df1dt = jvp(x1_partial, (g, t, r), (f1_f_c-g,1-t,0))
    f1_tgt = (r - t)/(1-r) * df1dt + f1_f_q
    error = f1 - stopgrad(f1_tgt)
    loss = metric(error)
\end{lstlisting}
As mentioned earlier, during inference we use a dense $128^3$ grid as query points. In the following code, these query points are denoted as \verb|xqg|. The following code is for inference
\begin{lstlisting}[style=pythonstyle]
    rg = torch.randn(B, 1,s, s, s)  
    rc = interpolate(rg,xc)
    vqg = x1(xc, rc, xqg, 0, 1, cond)
\end{lstlisting}
\section{Additional Results\&Experiments}\label{appendix:additional_results}
\subsection{Instability of $u$-Prediction Mean Flow in Shape Generation with SDF}\label{appendix:additional_instability}
In other tasks, the performance of the $u$-prediction variant of Functional Mean Flow is generally comparable to that of the $x_1$-prediction version. However, in the shape generation experiments, we observe that the Functional Diffusion framework becomes highly unstable when trained with $u$-prediction Functional Mean Flow, indicating that $u$-prediction Functional Mean Flow is not well-suited for 3D shape generation within the Functional Diffusion framework. 

To illustrate this finding, we conduct a 2D experiment using the MNIST \cite{lecun1998mnist} dataset, which is converted into signed distance fields (SDFs) and trained under the Functional Diffusion setup for 2D shape generation. The embedding dimension is set to $256$, the number of groups $d$ is $8$, and each attention layer adopts a multi-head mechanism with 8 heads and an internal head dimension of $64$. The entire network contains approximately $19$M learnable parameters. During training, we use a batch size of $64$ and initialize the time embedding with a standard variance of $0.01$.  We set $t=r$ with $100\%$ probability for time sampling, representing the most stable limiting form of Mean Flow

To monitor potential training failures, we track the batch-averaged spatial variance of the network outputs. A persistent collapse of this variance indicates instability, as an SDF is expected to satisfy $|\nabla f| = 1$ and thus maintain meaningful spatial variation. Once the variance approaches zero and remains there, the model's output effectively degenerates into a constant field, diverging from the ground-truth function and failing to recover. As illustrated in \autoref{fig:training_behavior}, the $x_1$-prediction formulation remains stable across a wide range of learning rates, whereas the $u$-prediction model exhibits variance collapse even at relatively small learning rates.

After training for 10K steps, we visualize the generated samples in \autoref{fig:mnist_three_compare}. The number of sampling steps is fixed to 64.  It can be observed that the $x_1$-prediction Mean Flow successfully generates valid SDFs, from which the zero-level sets can be extracted as clear binary handwritten digits.  In contrast, the $u$-prediction variant consistently produces noisy outputs regardless of the learning rate.

\begin{figure*}[t]
    \centering
    \begin{subfigure}{0.32\textwidth}
        \centering
        \includegraphics[width=\linewidth]{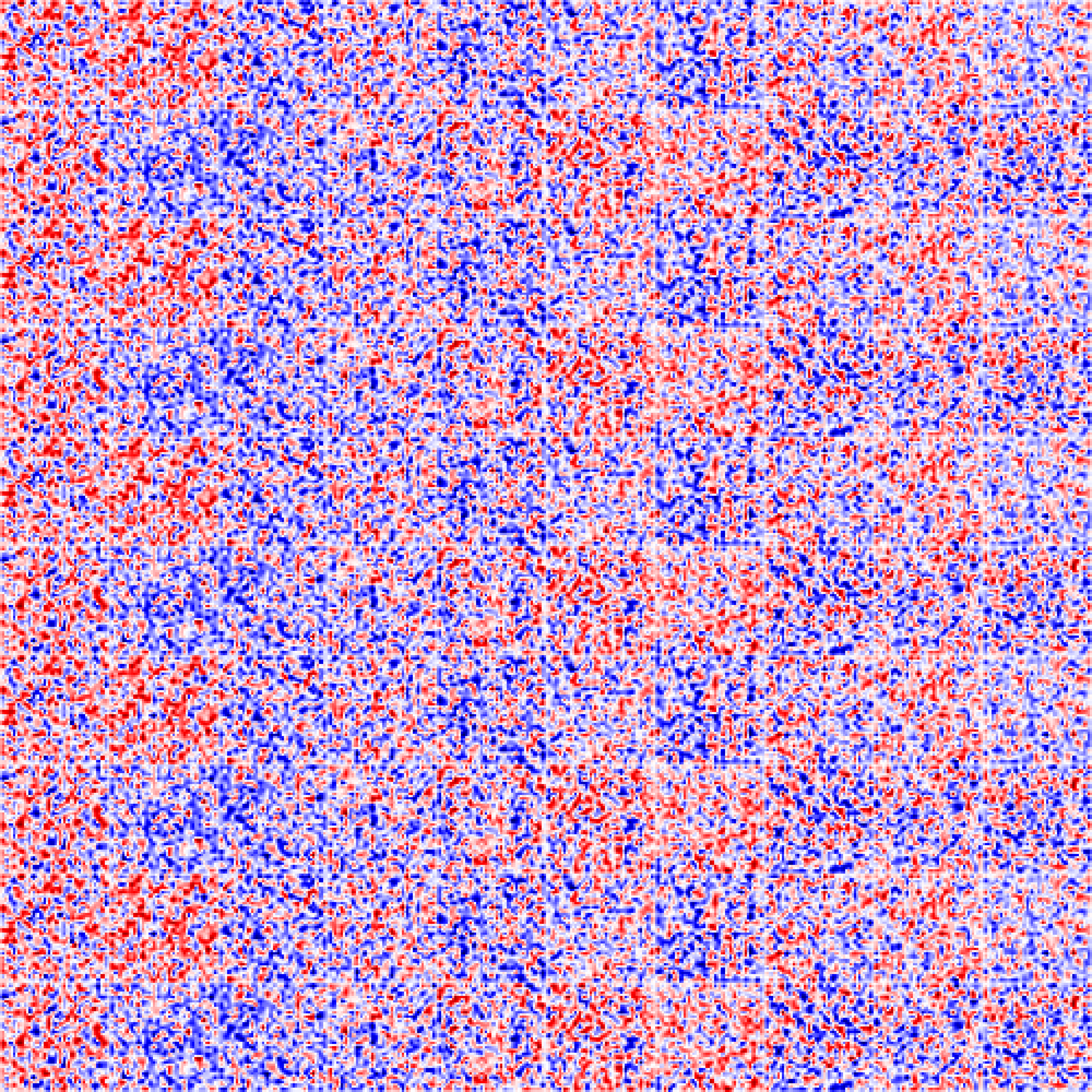}
        \caption{MNIST $u$-prediction}
        \label{fig:mnist_u_predict}
    \end{subfigure}
    \hfill
    \begin{subfigure}{0.32\textwidth}
        \centering
        \includegraphics[width=\linewidth]{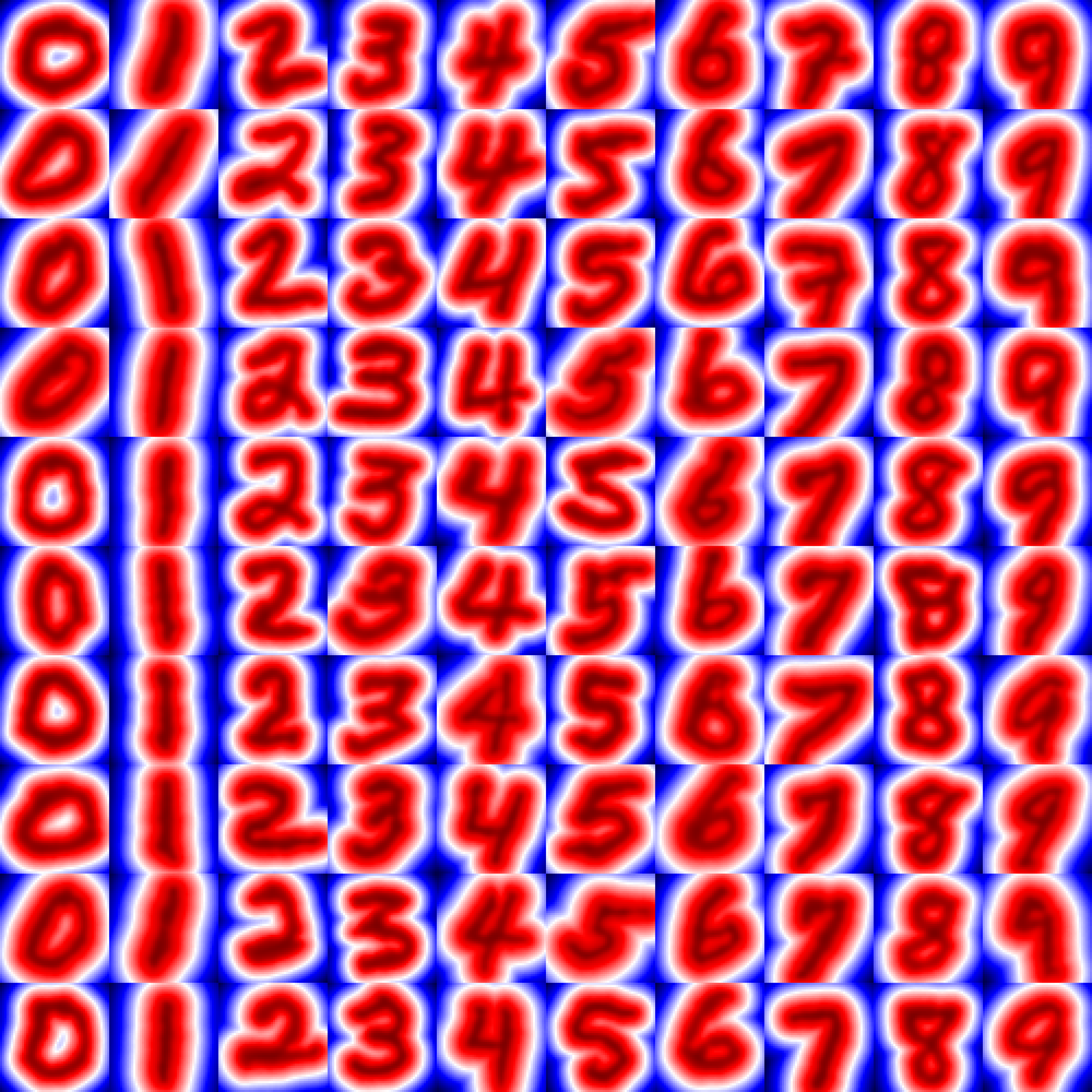}
        \caption{MNIST $x_1$-prediction (SDF)}
        \label{fig:mnist_x1_predict_sdf}
    \end{subfigure}
    \hfill
    \begin{subfigure}{0.32\textwidth}
        \centering
        \includegraphics[width=\linewidth]{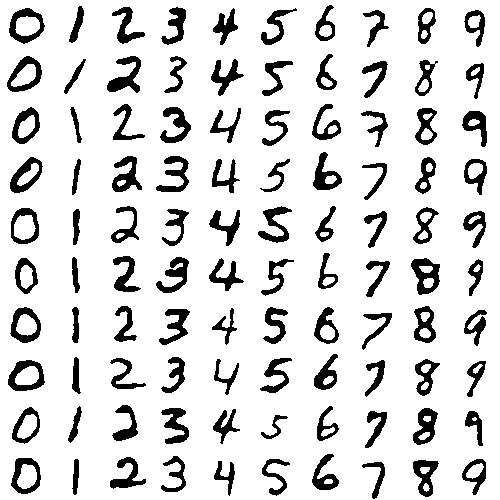}
        \caption{MNIST $x_1$-prediction (binary)}
        \label{fig:mnist_x1_predict_bw}
    \end{subfigure}

    \caption{
            Comparison of MNIST generation results across the $u$-prediction and $x_1$-prediction FMF variants within the Functional Diffusion framework.  After 10K training steps, the $x_1$-prediction FMF produces valid SDFs (b), from which clear binary digits can be extracted via their zero-level sets (c).  In contrast, the $u$-prediction model fails completely: once the output variance collapses, the model cannot update the initial noise into a meaningful SDF, as shown in (a) and rows correspond to learning rates from $10^{-4}$ to $10^{-6}$.
    }
    \label{fig:mnist_three_compare}
\end{figure*}


\subsection{Additional Results for Functional-Based Image Generation}\label{appendix:additional_image_results}
In \autoref{fig:ffhq_multi_res}, \autoref{fig:church_multi_res}, and \autoref{fig:afhq_multi_res}, we provide additional qualitative results on the FFHQ~\cite{karras2019style}, LSUN-Church~\cite{yu2015lsun}, and AFHQ~\cite{choi2020stargan} datasets, respectively.
The same model is used to synthesize images at arbitrary resolutions under different noise levels.
Notably, the model is trained only on randomly sampled 1/4 subsets of pixels from $256 \times 256$ images.

\begin{figure*}[t]
  \includegraphics[width=1.0\textwidth]{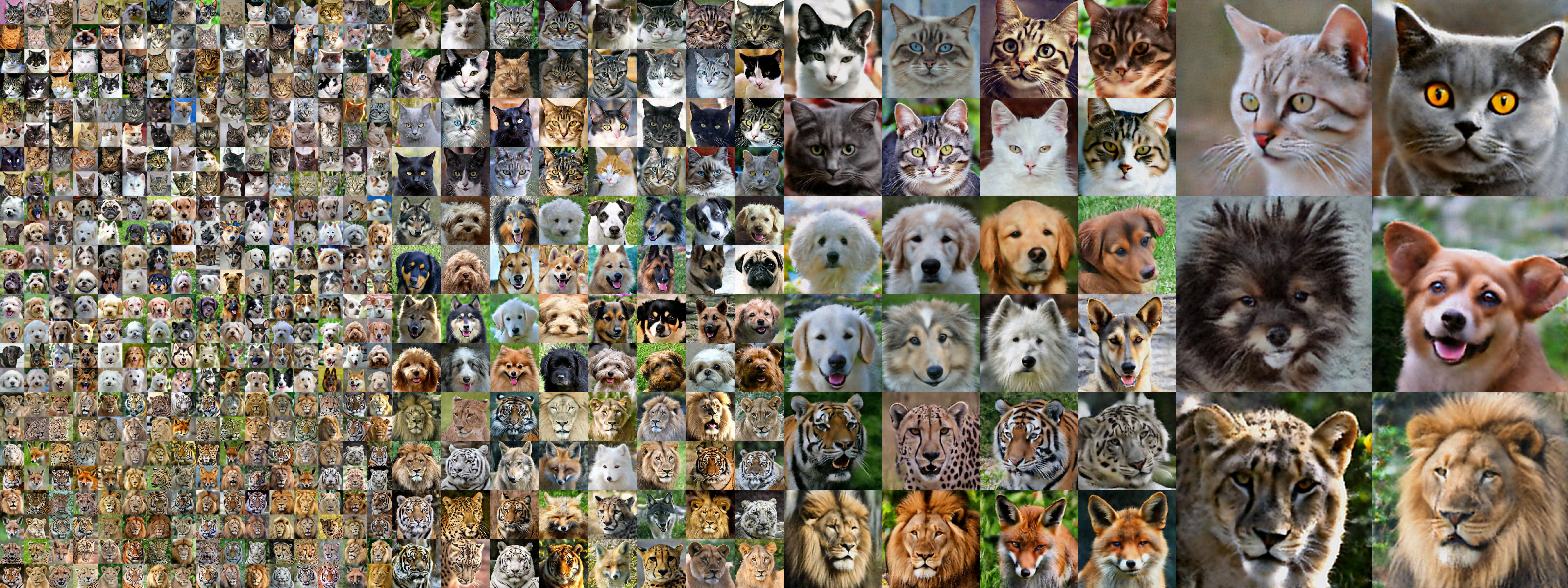}
  \caption{Additional results on AFHQ. The model is trained on randomly sampled 1/4 subsets of pixels from $256 \times 256$ images and evaluated at different resolutions.  Left to right: $64 \times 64$, $128 \times 128$, $256 \times 256$, and $512 \times 512$. Top to bottom: cat, dog, and wild animal categories.}
  \label{fig:afhq_multi_res}
\end{figure*}
\begin{figure*}[t]
  \includegraphics[width=1.0\textwidth]{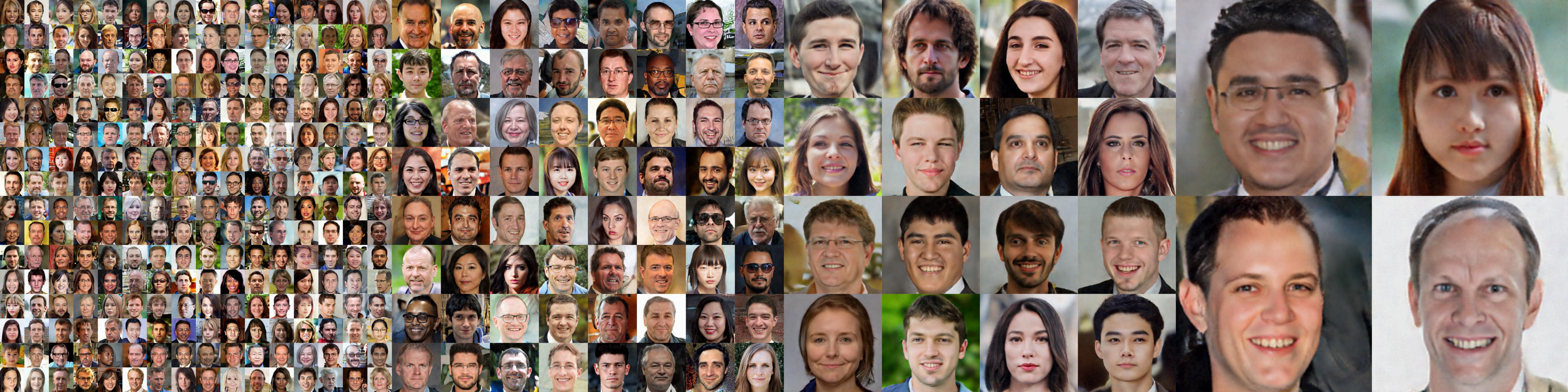}
  \caption{Additional results on FFHQ. The model is trained on randomly sampled 1/4 subsets of pixels from $256 \times 256$ images and evaluated at different resolutions.  Left to right: $64 \times 64$, $128 \times 128$, $256 \times 256$, and $512 \times 512$.}
  \label{fig:ffhq_multi_res}
\end{figure*}
\begin{figure*}[t]
  \includegraphics[width=1.0\textwidth]{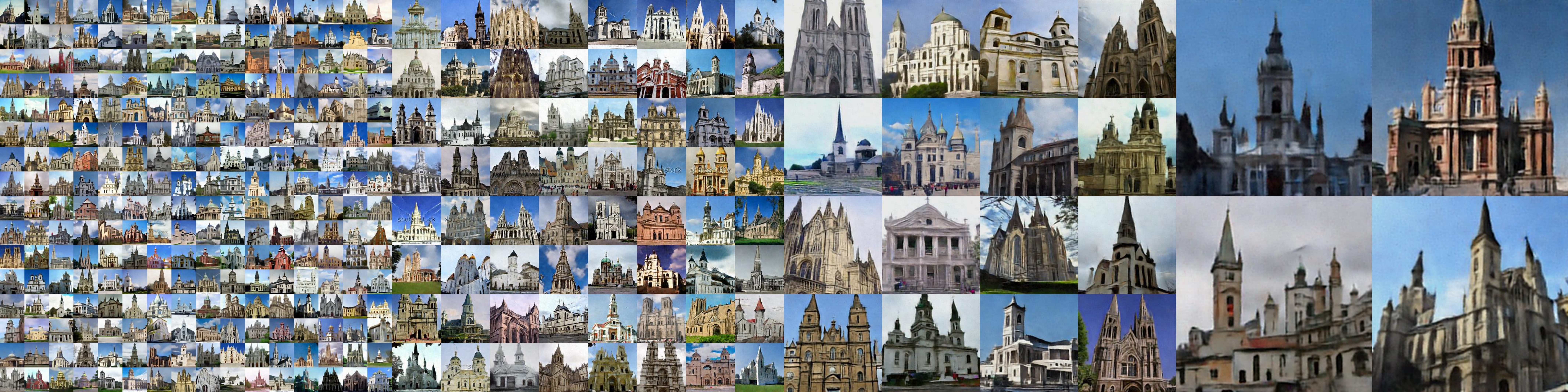}
  \caption{Additional results on LSUN-Church. The model is trained on randomly sampled 1/4 subsets of pixels from $256 \times 256$ images and evaluated at different resolutions.  Left to right: $64 \times 64$, $128 \times 128$, $256 \times 256$, and $512 \times 512$.}
  \label{fig:church_multi_res}
\end{figure*}

\end{document}